\date{}
\def\thm@space@setup{%
  \thm@preskip=\parskip \thm@postskip=0pt
}
\definecolor{linkcolor}{RGB}{83,83,182}
\definecolor{citecolor}{RGB}{128,0,128}
\newcommand{\Ww}{\mathcal{W}}
\newcommand{\Cc}{\mathcal{C}}
\newcommand{\Hh}{\mathcal{H}}
\newcommand{\Pp}{\mathcal{P}}
\newcommand{\Ff}{\mathcal{F}}
\newcommand{\Bb}{\mathcal{B}}
\newcommand{\Mm}{\mathcal{M}}
\newcommand{\Yy}{\mathcal{Y}}
\newcommand{\Nn}{\mathcal{N}}
\def\EE{\mathbb{E}}
\def\MSE{\textup{MSE}}
\title{On the Universality of Graph Neural Networks\\
on Large Random Graphs}
\author{Nicolas Keriven\thanks{CNRS \& GIPSA-lab. 11 rue des Math\'ematiques, 38400 St-Martin-d’Her\`es, France. \url{firstname.name@gipsa-lab.grenoble-inp.fr}}
\qquad
Alberto Bietti\thanks{NYU Center for Data Science, New York, USA.
\url{firstname.name@nyu.edu}}
\qquad
Samuel Vaiter\thanks{CNRS \& IMB, Universit\'e de Bourgogne. 9 avenue Alain Savary, 21000 Dijon, France. SV is partly supported by ANR JCJC GraVa (ANR-18-CE40-0005). \url{firstname.name@u-bourgogne.fr} }
}
\begin{document}

\maketitle

\begin{abstract}
  We study the approximation power of Graph Neural Networks (GNNs) on latent position random graphs.
  In the large graph limit, GNNs are known to converge to certain ``continuous'' models known as c-GNNs, which directly enables a study of their approximation power on random graph models. In the absence of input node features however, just as GNNs are limited by the Weisfeiler-Lehman isomorphism test, c-GNNs will be severely limited on simple random graph models. For instance, they will fail to distinguish the communities of a well-separated Stochastic Block Model (SBM) with constant degree function.
  Thus, we consider recently proposed architectures that augment GNNs with unique node identifiers, referred to as Structural GNNs here (SGNNs).
  We study the convergence of SGNNs to their continuous counterpart (c-SGNNs) in the large random graph limit, under new conditions on the node identifiers. We then show that c-SGNNs are strictly more powerful than c-GNNs in the continuous limit, and prove their universality on several random graph models of interest, including most SBMs and a large class of random geometric graphs. Our results cover both permutation-invariant and permutation-equivariant architectures.
\end{abstract}

\section{Introduction}

Graph Neural Networks (GNNs) are deep architectures defined over graph data that have garnered a lot of attention in recent years. They represent the state-of-the-art in many graph Machine Learning (graph ML) problems, and have been successfully applied to e.g. node clustering \cite{Bruna-clust}, semi-supervised learning \cite{Kipf-SSL}, quantum chemistry \cite{Gilmer-MP}, and so on. See \cite{Bronstein-GDL, Wu-review, Hamilton-book, OGB} for reviews.

As the universality of Multi-Layers Perceptrons (MLP) is one of the foundational theorems in deep learning -- that is, any continuous function can be approximated arbitrarily well by an MLP -- in the last few years the approximation power of GNNs has been a topic of great interest.
In the absence of special node features, i.e. when one has only access to the graph structure, the crux of the problem has been proven to be the capacity of GNNs to solve the \emph{graph isomorphism problem}, that is, deciding when two graphs are permutations of each other or not (a difficult problem for which no polynomial algorithm is known \cite{Babai-GI}). Indeed, this property is directly linked to the approximation power of GNNs~\cite{Bruna-GI, Lelarge-power}. In this light, the landmark paper \cite{Xu-WL} proves that classical GNNs are at best as powerful as the famous Weisfeiler-Lehman (WL) test \cite{WL} for graph isomorphism. Since then, several works \cite{Xu-WL, Maron-powerful, Bruna-GI} have derived new architectures, for instance involving high-order tensors \cite{Maron-powerful}, with improved discriminative power equivalent to ``higher-order'' variants of the WL test.
In another line of works, several recent papers have advocated the use of \emph{unique node identifiers} \cite{Loukas-depth, Loukas-GI}, with strategies to preserve the permutation-equivariance/invariance of GNNs coined Structural Message Passing (SMP) in \cite{Vignac-smp} (see Sec.~\ref{sec:sgnn}). We call these models Structural GNN (SGNN) here. SGNNs have been proved to be strictly more powerful than the WL test in \cite{Vignac-smp}, and even \emph{universal} on graphs with bounded degrees, however for powerful layers that cannot be implemented in practice. 

When the size of the graphs grows, the notion of graph isomorphism becomes somewhat moot: large graphs might share properties (number of well-connected communities, and so on) but are never isomorphic to each other. GNNs have nevertheless proven successful in identifying their large-scale structures, e.g. for node clustering \cite{Bruna-clust}. Several papers have therefore used tools from random graph theory and graphons to study the behavior of GNNs \emph{in the large-graph limit}. In \cite{us,Ruiz-transferability}, GNNs are shown to converge to limiting ``continuous'' architectures (coined c-GNNs in \cite{us}). A few works have studied the discriminative power of c-GNNs on graphons~\cite{Magner-power}, however, analogously to how the WL test will fail on regular graphs, c-GNNs are severely limited on graph models with almost-constant degree function (Fig.~\ref{fig:sbm}), and the question is still largely open.

\begin{wrapfigure}{r}{.48\textwidth}
    \vspace{-15pt}
    \centering
    \includegraphics[width=0.45\textwidth]{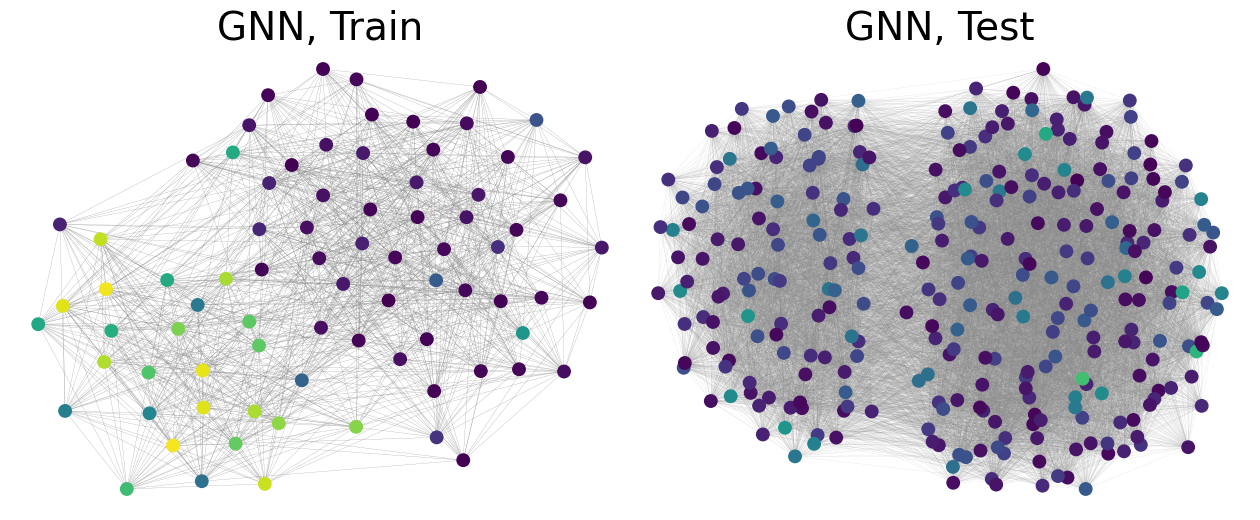}\\
    \includegraphics[width=0.45\textwidth]{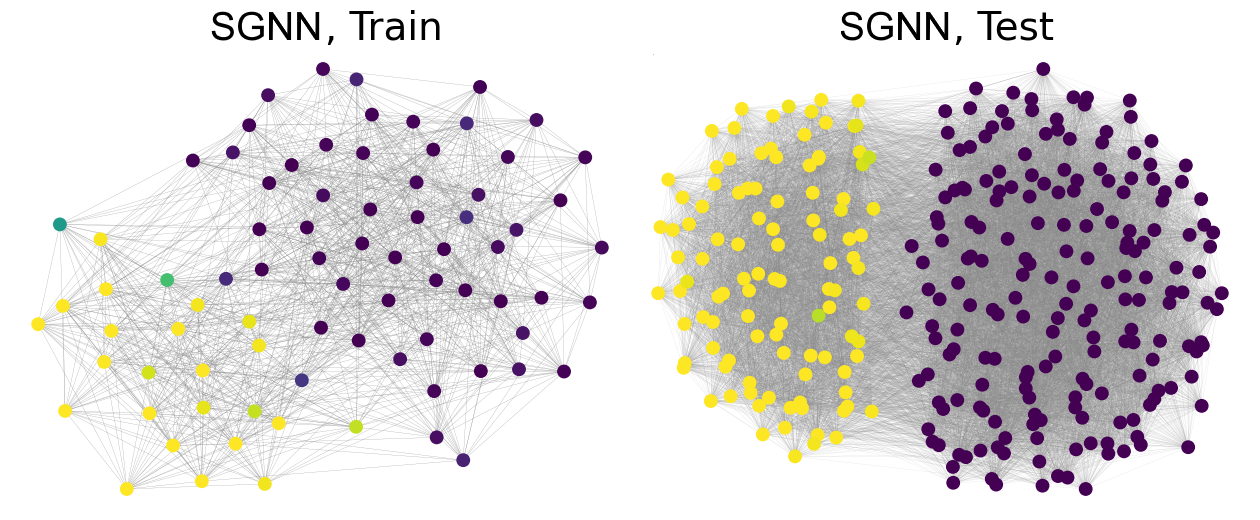}
    \caption{\small Illustration of Prop.~\ref{prop:sbm-eq} (Sec.~\ref{sec:eq-example}). On an SBM with constant degree function, a GNN (top) might overfit the training set, but converges to a constant c-GNN. On the contrary, there exists a c-SGNN (bottom) that perfectly separates the communities. Details can be found in App.~\ref{app:numerics}.}\label{fig:sbm}
    \vspace{-20pt}
\end{wrapfigure}

\paragraph{Contribution and outline.} In this paper, we study the convergence of SGNNs on large random graphs towards ``c-SGNNs'', and analyze the approximation power of c-GNNs and c-SGNNs. After some preliminary results in Sec.~\ref{sec:preliminaries}, we study the convergence of SGNNs in Sec~\ref{sec:conv}. In Sec.~\ref{sec:approx-inv} and \ref{sec:approx-eq}, we show that c-SGNNs are strictly more powerful than c-GNNs in both permutation-invariant and equivariant case. We then prove the universality of c-SGNNs on several popular models of random graphs, including Stochastic Block Models (SBMs) and radial kernels. For instance, we show in Sec.~\ref{sec:eq-example} that c-SGNNs can perfectly identify the communities of most SBMs, including some for which any c-GNN provably fails (Fig.~\ref{fig:sbm}).

\paragraph{Related work.} The approximation power of GNNs (on finite graphs) has been a topic of great interest in recent years, and we do not attempt to make an exhaustive list of all results here. The landmark paper \cite{Xu-WL} showed that permutation-invariant GNNs were at best as powerful as the WL test, and later models \cite{Maron-powerful, Bruna-GI, Geerts-kWL} were constructed to be as powerful as higher-order WL tests, using for instance higher-order tensors \cite{Maron-powerful}. The link between the graph isomorphism problem and function approximation was made in \cite{Bruna-GI, Keriven-universal} and extended in \cite{Lelarge-power}. Some architectures were proven to be universal \cite{Maron-universal, Keriven-universal} but involve tensors of unbounded order. When graphs are equipped with unique node identifiers, the approximation power of GNNs can be significantly improved if relaxing permutation-invariance/equivariance \cite{Loukas-depth, Loukas-GI}. Recent works have proposed methods to restore these properties \cite{Vignac-smp}. As mentioned above, the resulting SGNNs are indeed more powerful than the WL test \cite{Vignac-smp}, and even universal on graphs with bounded degrees when allowing powerful layers. To our knowledge, the approximation power of SGNNs as they are implemented in practice is still open. We treat of their continuous limit in this paper.

Fewer results can be found in the large-graph limit. Beyond the graph-isomorphism paradigm, authors have studied the capacity of GNNs to count graph substructures \cite{Chen-substructure, Barabasi-subgraph} or identify various graph properties \cite{Garg-generalization, Maron-generalization}. Several works have studied the large-graphs limit of GNNs \cite{Ruiz-transferability, Magner-power, Levie-transferability, us}, assuming random graph or graphon models. The degree function has been proven to be a crucial element for the discriminative power of c-GNNs, and they will not be able to distinguish graphons with close degree functions \cite{Magner-power}. Here we show how c-SGNNs allow to overcome these limitations, and provide the first universality theorems for (S)GNNs in the continuous limit. Finally, we note that universal architectures exist on \emph{measures} \cite{DeBie-measure-nn,Zweig-measure-nn}, which can be seen as limits of \emph{point clouds}, but, similar to the discrete case \cite{Zaheer-deep-sets, Maron-universal, Keriven-universal}, the graph case may be significantly harder to study.

\section{Preliminaries}\label{sec:preliminaries}

In this section, we group notations on random graphs and GNNs, present the convergence result of \cite{us} slightly adapted to our context, and introduce the SGNNs of \cite{Vignac-smp}. An undirected graph $G$ with $n$ nodes is represented by a symmetric adjacency matrix $A \in \RR^{n\times n}$. It is isomorphic to all graphs with adjacency matrices $\sigma A \sigma^\top$, for any permutation matrix $\sigma \in \{0,1\}^{n\times n}$. Matrix rows (resp. columns) are denoted by $A_{i,:}$ (resp. $A_{:,i}$). The norm $\norm{\cdot}$ is the operator norm for matrices and Euclidean norm for vectors.
For a compact metric space $\Xx$, we denote by $\Cc(\Xx,\RR)$ the space of bounded continuous functions $\Xx\to \RR$, equipped with $\norm{f}_\infty = \sup_x \abs{f(x)}$. For a multivariate function $f \in \Cc(\Xx,\RR^d)$, $f_i$ denotes its i$th$ coordinate, and $\norm{f}_\infty = (\sum_i \norm{f_i}_\infty^2)^{1/2}$.

\subsection{Random graph model}\label{sec:RG}

We consider ``latent position'' random graphs, which include several popular models such as SBMs or graphons \cite{Lovasz-graphon}. The nodes are associated with unobserved latent random variables $x_i$ drawn \emph{i.i.d}. For the edges, we will examine two cases: an ``ideal'' one with deterministic weighted edges, and a more ``realistic'' one where the edges are randomly drawn independently. In the literature, the latter is often considered as an idealized framework to study the behavior of various algorithms \cite{VonLuxburg-SC, Rosasco-ope}.

Let $(\Xx, m_\Xx)$ be a compact metric space that is not a singleton, and assume that the $\varepsilon$-covering numbers\footnote{the number of balls of radius $\varepsilon$ required to cover $\Xx$} of $\Xx$ scale as $\order{\varepsilon^{-d_\Xx}}$ for some dimension $d_\Xx$. We denote by $\Ww$ the set of symmetric bivariate functions in $\Cc(\Xx \times \Xx, [0,1])$ that are $L_W$-Lipschitz in each variable, and by $\Pp$ the set of probability distributions over $\Xx$ equipped with the total variation norm $\normTV{\cdot}$. A graph with $n$ nodes is generated according to a random graph model $(W,P) \in \Ww \times \Pp$ as follows:
\begin{equation*}
    x_1,\ldots,x_n \stackrel{\text{\emph{i.i.d}}}{\sim} P, \qquad a_{ij} =
    \begin{cases}
        W(x_i, x_j) &\text{deterministic edges case} \\
        \alpha_n^{-1}\text{Bernoulli}(\alpha_n W(x_i, x_j)) &\text{random edges case}
    \end{cases}
\end{equation*}
where $\alpha_n$ is the \emph{sparsity level} of the graph in the random edges case, which we assume to be known for simplicity. When $\alpha_n \sim 1$, the graph is said to be \emph{dense}, when $\alpha_n \sim \frac{1}{n}$ the graph is \emph{sparse}, and when $\alpha \sim \frac{\log n}{n}$ the graph is \emph{relatively sparse}. Note that we have normalized the Bernoulli edges by $1/\alpha_n$ such that $\EE a_{ij} = W(x_i, x_j)$ conditionally on $x_i,x_j$. When $\Xx$ is finite, the model is called an SBM (see Sec.~\ref{sec:example-inv}).

Like finite graphs, random graph models can be isomorphic to one another \cite{us, Lovasz-graphon}. In this paper, similar to \cite{us} we consider that, for any bijection $\varphi:\Xx \to \Xx$, the model $(W,P)$ is isomorphic to $(W_\varphi, \varphi^{-1}_\sharp P)$, where $W_\varphi(x,y) = W(\varphi(x), \varphi(y))$ and $f_\sharp P$ is the pushforward of $P$ (the distribution of $f(X)$). Indeed, it is easy to see that both produce exactly the same distribution over graphs.

\subsection{Graph Neural Networks}\label{sec:gnn}

Following \cite{us, Defferrard-cheb}, in this paper we consider the so-called ``spectral'' version of GNNs, which include several message-passing models for certain aggregation functions. We consider polynomial filters $h$ defined as $h(A) = \sum_k \beta_k A^k$ for a matrix or operator $A$. In practice, the order of the filters is always finite, but our results are valid for infinite-order filters (assuming that the sum always converges for simplicity). We consider any activation function $\rho:\RR \to \RR$ which satisfies $\rho(0)=0$ and $\abs{\rho(x)-\rho(y)}\leq \abs{x-y}$ for which the universality theorem of MLPs applies \cite{Pinkus-MLP}, e.g. ReLU.

Spectral GNNs are defined by successive filtering of a graph signal. Given an input signal $Z^{(0)}\in \RR^{n\times d_0}$, at each layer $\ell=0, \ldots, M-1$:
\begin{equation}\label{eq:gnn1}
\mathsmaller{Z^{(\ell+1)}_{:,j} = \rho\left(\sum_{i=1}^{d_\ell} h_{ij}^{(\ell)}\left(\tfrac{1}{n}A\right) Z^{(\ell)}_{:,i} + b_j^{(\ell)} 1_n\right) \in \RR^n \qquad j=1,\ldots, d_{\ell+1}}\, ,
\end{equation}
where $h_{ij}^{(\ell)}(A) = \sum_k \beta_{ijk}^{(\ell)} A^k$ are trainable graph filters, $b_j^{(\ell)}\in \RR$ are trainable additive biases and $\rho$ is applied pointwise. We note the normalization $A/n$ that will be necessary for convergence.
GNNs exist in two main versions: so-called ``permutation-invariant'' GNNs $\bar \Phi$ output a single vector for the entire graph, while ``permutation-equivariant'' GNNs $\Phi$ output a graph signal:
\begin{equation}\label{eq:gnn2}
    \mathsmaller{\Phi_A(Z^{(0)}) = g\pa{Z^{(M)}} \in \RR^{n \times d_{out}} \qquad \bar\Phi_A(Z^{(0)}) = g\pa{\tfrac{1}{n} \sum_{i=1}^n Z^{(M)}_{i,:}} \in \RR^{d_{out}}}\ ,
\end{equation}
where $g:\RR^{d_M} \to \RR^{d_{out}}$ is an MLP applied row-wise in the equivariant case. By construction, such GNNs are indeed equivariant or invariant to node permutation: for all permutation matrices $\sigma$,
we have $
    \Phi_{\sigma A \sigma^\top}(\sigma Z^{(0)}) = \sigma \Phi_{A}(Z^{(0)})$ and $\bar\Phi_{\sigma A \sigma^\top}(\sigma Z^{(0)}) = \bar\Phi_{A}(Z^{(0)})$.

\paragraph{Continuous GNNs.} In \cite{us}, the authors show that, in the large random graphs limit, GNNs converge to the following ``continuous'' models (coined c-GNNs), which propagate \textbf{functions over the latent space} $f^{(\ell)} \in \Cc(\Xx, \RR^{d_\ell})$ instead of graph signals. Given an input function $f^{(0)} \in \Cc(\Xx, \RR^{d_0})$:
\begin{equation*}
    \mathsmaller{f^{(\ell + 1)}_j = \rho \left( \sum_{i=1}^{d_\ell} h_{ij}^{(\ell)}(T_{W,P})f^{(\ell)}_i + b^{(\ell)}_j \right)}\, ,
\end{equation*}
where $T_{W,P}$ is the operator $T_{W,P}f = \int W(\cdot,x)f(x)dP(x)$. Then similar to \eqref{eq:gnn2} the permutation-equivariant/invariant versions of c-GNNs are defined as:
\begin{equation*}
    \mathsmaller{\Phi_{W,P}(f^{(0)}) = g \circ f^{(M)}, \qquad
    \bar\Phi_{W,P}(f^{(0)}) = g\pa{ \int f^{(M)}(x) dP(x)}}\, .
\end{equation*}
Remark that $\Phi_{W,P}(f^{(0)})$ is itself a \emph{function} in $\Cc(\Xx,\RR^{d_{out}})$ while $\bar\Phi_{W,P}(f^{(0)}) \in \RR^{d_{out}}$ is a vector. Moreover, by virtue of the polynomial filters, the four architectures $\Phi_A, \bar \Phi_A, \Phi_{W,P}, \bar \Phi_{W,P}$ have the exact same set of parameters.
Like in the discrete case, one can prove that c-GNN are equivariant or invariant to isomorphisms of random graph models~\cite{us}: for all bijection $\varphi:\Xx \to \Xx$, we have $\Phi_{W_\varphi, \varphi^{-1}_\sharp P}(f\circ \varphi) = \Phi_{W,P}(f) \circ \varphi$ and $\bar\Phi_{W_\varphi, \varphi^{-1}_\sharp P}(f\circ \varphi) = \bar\Phi_{W,P}(f)$.

Let us now turn to convergence of GNNs to c-GNNs. The following result is adapted\footnote{The authors in \cite{us} proved this for the normalized Laplacian, which allows bypassing the knowledge of $\alpha_n$. Here we use the adjacency matrix, for our later results on approximation power.} from \cite{us}. While the outputs of permutation-invariant GNNs and c-GNNs are vectors in $\RR^{d_{out}}$ that can be directly compared, the output graph signal of a permutation-equivariant GNN is compared with a \emph{sampling} of the output function of the corresponding c-GNN: for a graph signal $Z\in \RR^{n \times d}$, a function $f:\Xx \to \RR^d$ and $X=\{x_i\}_{i=1}^n$, we define the (square root of the) mean-square error as $\MSE_X(Z,f) = ( \frac{1}{n}\sum_{i=1}^n \norm{Z_i - f(x_i)}_2^2 )^{1/2}$. The proof of Theorem \ref{thm:conv-gnn} with all multiplicative constants is given in App.~\ref{app:conv-gnn}. Recall that $d_\Xx$ is the ``dimension'' of $\Xx$.

\begin{theorem}\label{thm:conv-gnn}
    Assume $G$ is drawn from $(W,P)$ and has latent variables $X$. Fix $\rho, \nu>0$.
    \setlist{nolistsep}
    \begin{itemize}[leftmargin=15pt, noitemsep]
        \item \textbf{In the deterministic edges case:} with probability $1-\rho$, for \emph{all} $Z^{(0)}$:
        \begin{equation}\label{eq:conv-gnn-eq-det}
            \MSE_X(\Phi_A(Z^{(0)}), \Phi_{W,P}(f^{(0)})) \leq
            C \cdot \MSE_X(Z^{(0)}, f^{(0)}) + R_1(n)
        \end{equation}
        for some constant $C$ and $R_1(n) = \order{\sqrt{(d_\Xx + \log(1/\rho))/n}}$.
        \item \textbf{In the random edges case:} assume that the sparsity level is $\alpha_n \gtrsim n^{-1}\log n$. There is a constant $C_\nu$ such that, with probability $1-\rho-n^{-\nu}$, for \emph{all} $Z^{(0)}$:
        \begin{equation}\label{eq:conv-gnn-eq-rand}
            \MSE_X(\Phi_A(Z^{(0)}), \Phi_{W,P}(f^{(0)})) \leq
            C \cdot \MSE_X(Z^{(0)}, f^{(0)}) + R_1(n) + R_2(n)
        \end{equation}
        where $R_2(n) = \order{C_\nu/\sqrt{\alpha_n n}}$.
        \item \textbf{In the permutation-invariant case:} The exact same results hold for $
            \norm{\bar \Phi_A(Z^{(0)}) - \bar \Phi_{W,P}(f^{(0)})}
        $ instead of the MSE on the left-hand-side, with an added error term $R_3(n) = \order{\sqrt{\log(1/\rho)/n}}$.
    \end{itemize}
\end{theorem}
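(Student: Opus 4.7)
The plan is to prove Theorem~\ref{thm:conv-gnn} by induction on the layer index $\ell$, combining a uniform concentration of the graph operator $\tfrac{1}{n}A$ around the integral operator $T_{W,P}$ with the Lipschitz stability of the filtering and activation steps. The key quantity to control is, for each polynomial filter $h$, the empirical $\ell_2$ discrepancy between the graph signal $h(\tfrac{1}{n}A) Z^{(\ell)}$ and the sampling of $h(T_{W,P}) f^{(\ell)}$ at the points $x_i$. Because all layer-wise objects remain bounded and Lipschitz (inherited from the initial signals, the filters, and the $1$-Lipschitz $\rho$), it suffices to bound $\tfrac{1}{n}\norm{A^k g(X) - (T_{W,P}^k g)(X)}$ uniformly for $g$ in a suitable Lipschitz class, then feed the estimates through the recursion defining each layer.

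For the deterministic-edges case, each entry $(\tfrac{1}{n}A g(X))_i = \tfrac{1}{n}\sum_j W(x_i,x_j)g(x_j)$ is an empirical average of $T_{W,P}g(x_i)$, so Hoeffding's inequality gives pointwise concentration. To make the bound uniform over $g$ and over arbitrary inputs $Z^{(0)}$, I would use the $\varepsilon^{-d_\Xx}$ covering numbers of $\Xx$ to build an $\varepsilon$-net of Lipschitz functions, take a union bound over the net, and exploit Lipschitzness of both $W$ and the functions to control the residual off the net. This yields the rate $R_1(n) = \order{\sqrt{(d_\Xx + \log(1/\rho))/n}}$. Writing $\delta_\ell = \MSE_X(Z^{(\ell)}, f^{(\ell)})$, the induction step gives $\delta_{\ell+1} \leq C_\ell \delta_\ell + R_1(n)$, where $C_\ell$ depends on the filter coefficients and on $L_W$; iterating over $\ell = 0,\ldots,M-1$ and then applying the output map produces \eqref{eq:conv-gnn-eq-det}.

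For the random-edges case, I would split $A$ into its conditional expectation $A_{\mathrm{det}}$ (given $X$) plus a centered noise matrix $E$ collecting the Bernoulli fluctuations, and bound $\tfrac{1}{n}\norm{E}$ using concentration for sparse symmetric random matrices of Chung--Radcliffe/Oliveira type. Under the assumption $\alpha_n \gtrsim \log n /n$, this yields $R_2(n) = \order{(\alpha_n n)^{-1/2}}$ with failure probability $n^{-\nu}$; higher powers $A^k$ are handled by a telescoping argument $A^k - A_{\mathrm{det}}^k = \sum_{j} A^{j}E A_{\mathrm{det}}^{k-1-j}$ together with the already-controlled operator norm of $\tfrac{1}{n}A_{\mathrm{det}}$. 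Finally, for the permutation-invariant case, the extra step is to compare $\tfrac{1}{n}\sum_i f^{(M)}(x_i)$ with $\int f^{(M)}\,dP$, a standard Hoeffding concentration giving $R_3(n) = \order{\sqrt{\log(1/\rho)/n}}$; the $1$-Lipschitzness of the MLP $g$, combined with Cauchy--Schwarz to pass from MSE to the norm of the average, transfers the MSE estimate plus $R_3(n)$ to the output.

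The main technical obstacle is the sparse random-matrix concentration in the random-edges case: obtaining a bound that is tight in $\alpha_n$ down to the logarithmic threshold, and making it uniform across all polynomial powers $A^k$ simultaneously without losing a factor that would depend on the degree of the filters or on $n$. The deterministic-case net argument and the layer-wise inductive propagation are essentially a matter of careful bookkeeping of constants, and most of the underlying machinery can be imported from \cite{us}, with modifications reflecting the use of the adjacency $A$ in place of the normalized Laplacian.
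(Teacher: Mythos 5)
Your proposal follows essentially the same route as the paper's proof in App.~\ref{app:conv-gnn}: a layer-wise recursion controlled by the filter norms, a decomposition of the per-layer error into a Bernoulli-matrix concentration term and a Monte-Carlo sampling term made uniform over $\Xx$ via its covering numbers (the paper uses Dudley's chaining, Lemma~\ref{lem:chaining}), a telescoping identity for the polynomial powers (Lemma~\ref{lem:filter_partial}), and a Hoeffding step for the invariant readout. The one place where your choice of tool would fall short of the stated rate is precisely the obstacle you flag: Chung--Radcliffe/Oliveira-type bounds carry an extra $\sqrt{\log n}$ and do not vanish at the threshold $\alpha_n \sim \log n / n$, whereas the paper invokes the Lei--Rinaldo inequality (Theorem~\ref{thm:lei}), which gives the log-free bound $\norm{(A - W(X))/n} \leq C_\nu/\sqrt{\alpha_n n}$ required for $R_2(n)$.
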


By the theorem above, a GNN converges to its continuous counterpart if $Z^{(0)}$ is (close to) a sampling of a function $f^{(0)}$ at the latent variables. This is directly assumed in \cite{us}. In the present paper however, we do not suppose that input node features are available. While several strategies have been proposed in the literature, a popular baseline is to simply take constant input $Z^{(0)} = 1_n$ (which, by a multiplication by $A$ on the first layer, is also equivalent to inputing the degrees as in \cite{Bruna-clust} for instance). In this case, there is convergence to a c-GNN with $f^{(0)}=1$. For simplicity in the rest of the paper we drop the notation ``$(1)$'' and write $\Phi_A = \Phi_A(1_n)$, $\Phi_{W,P} = \Phi_{W,P}(1)$, and so on.
It is known that GNNs are limited on regular graphs: for permutation-invariant GNNs, the WL test cannot distinguish regular graphs of the same order, and for permutation-equivariant GNNs, $\Phi_A(1_n)$ is \emph{constant over the nodes}. Similarly for c-GNN, the degree function $\int W(\cdot,x)dP(x)$ is key in the discriminative power of c-GNNs \cite{Magner-power}, and if it is constant, then $\Phi_{W,P}(1)$ is a constant function (see Fig.~\ref{fig:sbm}).

\subsection{SGNN: GNN with unique node identifiers}\label{sec:sgnn}

To remedy the absence of input node features, in \cite{Vignac-smp} the authors propose an architecture with \emph{unique node identifiers} while still respecting permutation invariance/equivariance. More precisely, they first choose an \emph{arbitrary} ordering of the nodes $q=1,\ldots,n$, apply a GNN to each \emph{one-hot vector} $e_q = [0,\ldots, 1, \ldots, 0]$, and restore equivariance to permutation by a final pooling.
For later purpose of convergence, we generalize this strategy to any collection $E_q(A) \in \RR^{n \times d_0}$ that satisfies:
\begin{equation}\label{eq:hyp-Eq}
    E_q(\sigma A\sigma^\top) = \sigma E_{\sigma^{-1}(q)}(A)\, .
\end{equation}
where by an abuse of notation $\sigma(q)$ designates the permutation function applied to index $q$. For instance, it can be any filtering of one-hot vector $E_q(A) = h(A) e_q$. After the GNN $\Phi_A$, a pooling is applied to restore equivariance, then a second GNN $\Phi'_A$ that is either invariant or equivariant:
\begin{equation}\label{eq:sgnn}
    \mathsmaller{\Psi_A = \Phi'_A\pa{\tfrac{1}{n} \sum_q \Phi_A\pa{E_q(A)}} \in \RR^{n \times d'_{out}}, \qquad \bar \Psi_A=\bar \Phi'_A\pa{\frac{1}{n} \sum_q \Phi_A\pa{E_q(A)}} \in \RR^{d'_{out}}}
\end{equation}
In \cite{Vignac-smp}, these architectures, called SMPs, are interpreted as doing message-passing over matrices, and can use more general pooling and aggregation functions. In a sense, what we call SGNN are ``spectral'' versions of SMP, but are essentially the same idea. It is not difficult to see that SGNNs satisfy: $\Psi_{\sigma A \sigma^\top} = \sigma \Psi_A$ and $\bar\Psi_{\sigma A \sigma^\top} =  \bar \Psi_A$.

\section{Convergence of SGNNs on large random graphs}\label{sec:conv}

In this section, we extend Theorem \ref{thm:conv-gnn} to SGNNs.
To define continuous SGNNs, we consider a \emph{bivariate} input function $\eta_{W,P}:\Xx \times \Xx \to \RR^{d_0}$ such that: the mapping $(W,P) \mapsto \eta_{W,P}$ is continuous\footnote{for the norm $\norm{\cdot}_\infty + \normTV{\cdot}$ on $\Ww \times \Pp$}, for any $(W,P)$, $\eta_{W,P}$ is $C_\eta$-bounded and $L_\eta$-Lipschitz in each variable, and similar to \eqref{eq:hyp-Eq} is respects the following:
\begin{equation}\label{eq:hyp-eta}
    \mathsmaller{\eta_{W_\varphi,\varphi^{-1}_\sharp P}(x,y) = \eta_{W,P}(\varphi(x),\varphi(y))}\, ,
\end{equation}
for all bijections $\varphi : \Xx \to \Xx$. For instance, $\eta_{W,P}=W$ or any filter $\eta_{W,P}(x,y)=[h(T_{W,P})W(\cdot,y)](x)$ satisfy these conditions. A c-SGNN is then defined as:
\begin{equation}\label{eq:csgnn}
    \mathsmaller{\Psi_{W,P} = \Phi'_{W,P}\pa{\int \Phi_{W,P}(\eta_{W,P}(\cdot, x)) dP(x)}}\, ,
\end{equation}
for the equivariant case, or similarly $\bar \Psi_{W,P} = \bar \Phi'_{W,P}\pa{\int \Phi_{W,P}(\eta_{W,P}(\cdot, x)) dP(x)}$ for the invariant case.
Again, using the properties of c-GNNs, it is easy to see that c-SGNNs respect random graphs isomorphism:
$
    \Psi_{W_\varphi,\varphi^{-1}_\sharp P} =\Psi_{W,P} \circ \varphi  $ and $\bar\Psi_{W_\varphi,\varphi^{-1}_\sharp P} =  \bar \Psi_{W,P}
$.
We are now ready to extend Theorem \ref{thm:conv-gnn}. The proof and multiplicative constants are in App.~\ref{app:conv-sgnn}.

\begin{theorem}\label{thm:conv-sgnn}
    Assume $G$ is drawn from $(W,P)$ with latent variables $X$. Fix $\rho, \nu>0$.
    \setlist{nolistsep}
    \begin{itemize}[leftmargin=15pt, noitemsep]
        \item \textbf{In the deterministic edges case:} with probability $1-\rho$:
        \begin{equation}\label{eq:conv-sgnn-eq-det}
            \MSE_X(\Psi_A, \Psi_{W,P}) \leq
            C' \cdot \sup_q \MSE_X(E_q(A), \eta(\cdot,x_q)) + R'_1(n)
        \end{equation}
        for some constant $C'$ and $R'_1(n) = \order{\sqrt{(d_\Xx + \log(1/\rho))/n}}$.
        \item \textbf{In the random edges case:} assume that the sparsity level is $\alpha_n \gtrsim n^{-1}\log n$. There is a constant $C_\nu$ such that, with probability $1-\rho-n^{-\nu}$:
        \begin{equation}\label{eq:conv-sgnn-eq-rand}
            \MSE_X(\Psi_A, \Psi_{W,P}) \leq
            C' \cdot \sup_q \MSE_X(E_q(A), \eta(\cdot,x_q)) + R'_1(n) + R'_2(n)
        \end{equation}
        where $R'_2(n) = \order{C_\nu/\sqrt{\alpha_n n}}$.
        \item \textbf{In the permutation-invariant case:} The exact same results hold for $
        \norm{\bar \Psi_A - \bar \Psi_{W,P}}
        $ instead of the MSE, with an added error term $R'_3(n) = \order{\sqrt{\log(1/\rho)/n}}$.
    \end{itemize}
\end{theorem}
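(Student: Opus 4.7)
The approach is to peel off the SGNN in two stages, applying Theorem~\ref{thm:conv-gnn} to each. Let $U_A = \tfrac{1}{n}\sum_q \Phi_A(E_q(A))$ and $u_{W,P}(x) = \int \Phi_{W,P}(\eta_{W,P}(\cdot,y))(x)\, dP(y)$ denote the signals fed to the outer layer $\Phi'$ in the discrete and continuous cases respectively. Because Theorem~\ref{thm:conv-gnn} holds, on a single high-probability event, \emph{for all} input signals, a direct application to the outer layer gives
\begin{equation*}
    \MSE_X(\Psi_A,\Psi_{W,P}) \leq C \cdot \MSE_X(U_A, u_{W,P}) + R_1(n),
\end{equation*}
with an extra $R_2(n)$ in the random edges case and an extra $R_3(n)$ in the permutation-invariant case. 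The problem therefore reduces to controlling $\MSE_X(U_A, u_{W,P})$.

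To that end, I introduce the intermediate signal $\tilde u(x) = \tfrac{1}{n}\sum_q \Phi_{W,P}(\eta_{W,P}(\cdot, x_q))(x)$ and split via the triangle inequality into $\MSE_X(U_A, \tilde u) + \MSE_X(\tilde u, u_{W,P})$. For the first term, Jensen's inequality yields
\begin{equation*}
    \MSE_X(U_A, \tilde u)^2 \leq \tfrac{1}{n}\sum_q \MSE_X\bigl(\Phi_A(E_q(A)),\, \Phi_{W,P}(\eta_{W,P}(\cdot, x_q))\bigr)^2,
\end{equation*}
and each summand is bounded by applying Theorem~\ref{thm:conv-gnn} to $\Phi$ with the input pair $(E_q(A), \eta_{W,P}(\cdot, x_q))$. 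Crucially, the \emph{``for all $Z^{(0)}$''} clause of that theorem produces a single good event on which the bound holds simultaneously for every $q$, so one can pass to $\sup_q$ without a union bound and extract the leading term $C' \sup_q \MSE_X(E_q(A), \eta_{W,P}(\cdot, x_q))$. The spectral concentration event controlling $A/n \to T_{W,P}$ is shared with the outer application, so $R_2(n)$ is not paid twice in the random edges case.

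The second term $\MSE_X(\tilde u, u_{W,P})$ is a uniform Monte Carlo deviation for the parametric family $g_x(y) = \Phi_{W,P}(\eta_{W,P}(\cdot, y))(x)$ indexed by $x \in \Xx$. Propagating the Lipschitz constants of $W$ and $\eta_{W,P}$ through the polynomial filters and the $1$-Lipschitz activations that define $\Phi_{W,P}$ shows each $g_x$ is uniformly bounded and that $x \mapsto g_x$ is Lipschitz in $\norm{\cdot}_\infty$. An $\varepsilon$-cover of $\Xx$ of cardinality $\order{\varepsilon^{-d_\Xx}}$ combined with Hoeffding's inequality and a union bound then yields $\sup_x \norm{\tilde u(x) - u_{W,P}(x)} = \order{\sqrt{(d_\Xx + \log(1/\rho))/n}}$ with probability $1-\rho$, exactly matching the $R'_1(n)$ rate; the single coincidence $q=i$ in the empirical sum perturbs the bound by $\order{1/n}$ and is absorbed. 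The permutation-invariant case follows by invoking the invariant version of Theorem~\ref{thm:conv-gnn} on the outer layer, adding the $R'_3(n) = \order{\sqrt{\log(1/\rho)/n}}$ term from the node-averaged pooling.

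The main obstacle is the uniform Monte Carlo step, which requires a quantitative modulus of continuity for $x \mapsto g_x$ under $\norm{\cdot}_\infty$. This in turn forces one to track how the bivariate Lipschitz constants of $\eta_{W,P}$ interact with the polynomial filters of $\Phi_{W,P}$, in both of the filter's arguments simultaneously. The mechanism is the same as on the continuous side of Theorem~\ref{thm:conv-gnn}, but the bivariate input structure introduces extra bookkeeping that I expect to be the most delicate part of the proof.
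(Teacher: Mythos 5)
Your overall architecture coincides with the paper's: apply Theorem~\ref{thm:conv-gnn} to the outer GNN $\Phi'$, insert the intermediate signal $\tilde u = \frac{1}{n}\sum_q \Phi_{W,P}(\eta(\cdot,x_q))$, control the Monte Carlo deviation $\|\tilde u - \int \Phi_{W,P}(\eta(\cdot,x))dP(x)\|_\infty$ by a covering/concentration argument using boundedness and Lipschitzness of $(x,y)\mapsto \Phi_{W,P}(\eta(\cdot,y))(x)$, and reduce the remaining term to $\sup_q \MSE_X(\Phi_A(E_q(A)), \Phi_{W,P}(\eta(\cdot,x_q)))$. Those parts match App.~\ref{app:conv-sgnn} essentially step for step.

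There is, however, a genuine gap in your treatment of that last term. You invoke the ``for all $Z^{(0)}$'' clause of Theorem~\ref{thm:conv-gnn} to obtain a single good event uniform over $q$. But that clause quantifies only over the \emph{discrete} input: the good event constructed in the proof of Theorem~\ref{thm:conv-gnn} is built for a \emph{fixed} continuous reference function $f^{(0)}$, since it consists of concentration bounds on $\|(T_{W,X}-T_{W,P})T_{W,P}^{k}f_i^{(\ell)}\|_\infty$ for the specific functions $f_i^{(\ell)}$ propagated from $f^{(0)}$, and the resulting error terms scale with $\|f^{(\ell)}\|_\infty$. In your application the continuous target $\eta_{W,P}(\cdot,x_q)$ changes with $q$ --- and, worse, depends on the latent variable $x_q$, which is itself part of the sample defining $T_{W,X}$ --- so the theorem cannot be applied $n$ times on one shared event; even a crude union bound over $q$ would degrade the rate to $\sqrt{\log(n/\rho)/n}$ and would still not resolve the dependence between $x_q$ and $T_{W,X}$. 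The paper's fix is to rerun the recursion of Theorem~\ref{thm:conv-gnn} with \emph{bivariate} functions $f^{(\ell)}(\cdot,\cdot)$ propagated from $\eta$, and to apply the chaining lemma (Lemma~\ref{lem:chaining}) to $U((x,x'),y)=W(x,y)\,T_{W,P}^{k}[f_i^{(\ell)}(\cdot,x')](y)$ on the product space $\Xx\times\Xx$, whose covering numbers scale as $\varepsilon^{-2d_\Xx}$; this yields a bound holding uniformly over the second argument $x'$, hence over every $x_q$ simultaneously, at the cost only of constants inside $R'_1$. You do anticipate ``bivariate bookkeeping,'' but you locate it in the Monte Carlo step, where a univariate Lipschitz modulus in $x$ suffices; the genuinely delicate bivariate uniformity is needed precisely in the step you declared free.
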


Hence, we obtain convergence when the input signal $E_q(A)$ is close to being a sampling of a function $\eta_{W,P}$ at $x_q$. The choice of input signal/function is therefore crucial. Let us examine a few strategies.

\paragraph{One-hot vectors.} If one chooses one-hot vectors $E_q = e_q$ as in \cite{Vignac-smp}, then we can see that the SGNN converges to the continuous architecture with input $\eta_{W,P}=0$, since $\MSE_X(e_q, 0) \to 0$. Since $\Psi_{W,P}$ is nothing more than a traditional c-GNN $\Phi_{W,P}$ with constant input in that case, this is not a satisfying choice in terms of approximation power.

\paragraph{One-hop filtering.} If we choose to ``filter'' $e_q$ once and take $E_q(A) = Ae_q$, the natural continuous equivalent is $\eta_{W,P}=W$. Such a strategy only works for deterministic edges: indeed,
\begin{align*}
    \MSE_X(Ae_q, W(\cdot,x_q)) &= \mathsmaller{\left(n^{-1}\sum_{i=1}^n (a_{iq} - W(x_i,x_q))^2\right)^{1/2}} \\
    &\begin{cases}
        = 0 &\text{with deterministic edges}\\
        \approx \sqrt{n^{-1} \sum_i Var(a_{iq})} \sim \alpha^{-1} &\text{with random edges, w.h.p.}
    \end{cases}
\end{align*}
where the last line comes from a simple application of Hoeffding's inequality. Hence, in the case of random edges, the MSE does not vanish and typically diverges for non-dense graphs (Fig.~\ref{fig:conv}).

\begin{wrapfigure}{r}{0.35\textwidth}
    \centering
    \vspace{-20pt}
    \includegraphics[width=0.35\textwidth]{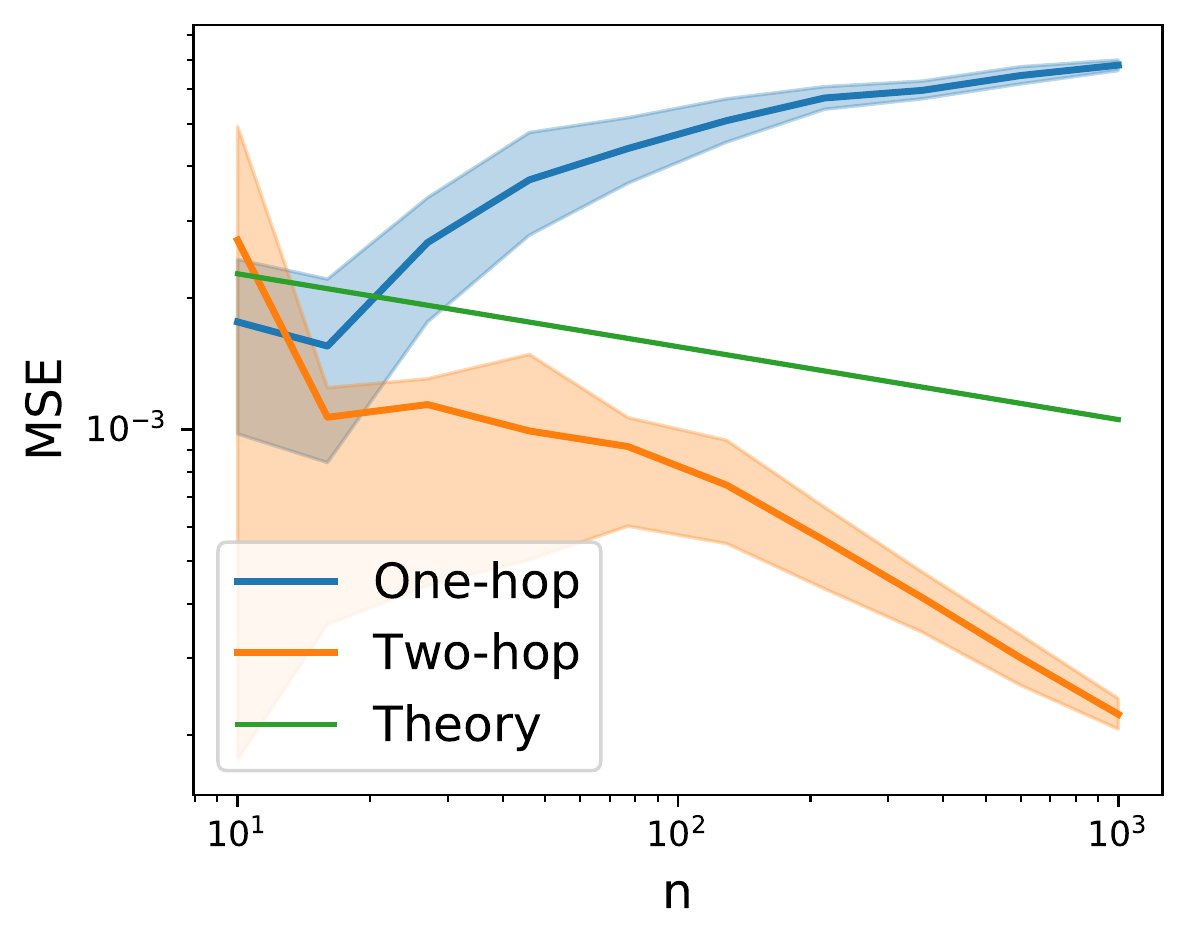}
    \caption{\small Difference between the outputs of some $\bar \Psi_{A}$ in the random edges case with $\alpha_n\sim n^{-1/3}$ and in the deterministic edges case, which converges to $\bar \Psi_{W,P}$. Convergence is observed for two-hop filtering only. The theoretical rate of Prop.~\ref{prop:two-hop-input} is slightly pessimistic. Details can be found in App.~\ref{app:numerics}.}
    \label{fig:conv}
    \vspace{-30pt}
\end{wrapfigure}
\paragraph{Two-hop filtering.} We can therefore choose to filter \emph{twice} and consider $\eta_{W,P}(x,y) = T_{W,P}[W(\cdot,y)](x)$, that is, $E_q(A) = A^2 e_q / n$. We have the following result.

\begin{proposition}\label{prop:two-hop-input}
    In the random edges case, with probability $1-\rho$, we have for all $q$:
    \begin{equation*}
        \mathsmaller{\textup{MSE}_X\pa{\frac{A^2 e_q}{n}, T_{W,P}[W(\cdot,x_q)](\cdot)} \lesssim \order{\frac{\sqrt{\log(n/\rho)}}{\alpha_n \sqrt{n}}}}\, .
    \end{equation*}
    In the deterministic edges case, the rate is $\order{1/\sqrt{n}}$.
\end{proposition}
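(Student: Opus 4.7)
The plan is to establish the bound pointwise in $(i,q)$ and take a union bound, so that a uniform pointwise control passes directly to the MSE (which is the root-mean-square over $i$). Fix $i\neq q$ and decompose
\begin{align*}
\frac{(A^2 e_q)_i}{n} - T_{W,P}[W(\cdot,x_q)](x_i)
&= \underbrace{\tfrac{1}{n}\!\!\!\sum_{j\notin\{i,q\}}\!\!\!\bigl(a_{ij}a_{jq}-W(x_i,x_j)W(x_j,x_q)\bigr)}_{=:\,U_{iq}} \\
&\quad+\underbrace{\tfrac{1}{n}\!\!\!\sum_{j\notin\{i,q\}}\!\!\!W(x_i,x_j)W(x_j,x_q)\;-\;T_{W,P}[W(\cdot,x_q)](x_i)}_{=:\,V_{iq}}\;+\;r_{iq},
\end{align*}
where $r_{iq}$ collects the two omitted boundary terms $j\in\{i,q\}$ and is deterministically $\mathcal{O}(\alpha_n^{-2}/n)$. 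The diagonal case $i=q$ will be treated separately at the end.

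The main probabilistic input is Bernstein's inequality applied to $U_{iq}$ \emph{conditionally} on $X=(x_1,\ldots,x_n)$. For $i\neq q$ and $j\notin\{i,q\}$, the edges $a_{ij}$ and $a_{jq}$ correspond to distinct unordered pairs of indices, so are independent Bernoullis; moreover the pair $(\{i,j\},\{j,q\})$ remains disjoint from the analogous pair for any other $j'\notin\{i,q\}$, so the summands of $U_{iq}$ are conditionally independent and centred, bounded by $\alpha_n^{-2}$, with conditional variance at most $\alpha_n^{-2}$ (using $\EE[a_{ij}^2\mid X]=\alpha_n^{-1}W(x_i,x_j)\le\alpha_n^{-1}$ and independence). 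Bernstein, applied with failure probability $\rho/n^2$, gives
$$|U_{iq}|\;\lesssim\;\frac{\sqrt{\log(n/\rho)}}{\alpha_n\sqrt{n}}\;+\;\frac{\log(n/\rho)}{\alpha_n^{2}n},$$
the sub-exponential remainder being absorbed into the sub-Gaussian term under the standing sparsity assumption (in the borderline regime the excess is hidden in the implicit constant). For $V_{iq}$, conditionally on $(x_i,x_q)$ the remaining $x_j$ are i.i.d.\ and the summands are bounded in $[0,1]$, so Hoeffding gives $|V_{iq}|\lesssim\sqrt{\log(n/\rho)/n}$ with failure probability $\rho/n^2$. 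A union bound over the $n^2$ pairs $(i,q)$ then yields a simultaneous pointwise bound of order $\sqrt{\log(n/\rho)}/(\alpha_n\sqrt{n})$ (using $\alpha_n\le 1$ to dominate the Hoeffding piece), which after squaring and averaging over $i$ yields the stated MSE rate \emph{for all} $q$.

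The degenerate diagonal case $i=q$ enters $\MSE_X^2$ only as a single $(1/n)$-weighted summand, so even the crude deterministic bound $\mathcal{O}(\alpha_n^{-1})$ on the bias (coming from $\EE[a_{ij}^2\mid X]=\alpha_n^{-1}W(x_i,x_j)\ne W(x_i,x_j)^2$) contributes only $\mathcal{O}(1/(\alpha_n^{2}n))$ to $\MSE_X^2$, well within the target. The deterministic-edges case is simpler: $U_{iq}\equiv 0$, only $V_{iq}$ and $r_{iq}$ remain, and Hoeffding gives $\mathcal{O}(\sqrt{\log(n/\rho)/n})$; to get the cleaner $\mathcal{O}(1/\sqrt n)$ stated one may alternatively bound $\EE[\MSE_X^{\,2}]=\mathcal{O}(1/n)$ by a direct variance computation and invoke a McDiarmid-type concentration for $\MSE_X$ as a bounded Lipschitz function of the i.i.d.\ latents.

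The main obstacle is the dependence structure inside $A^2$: a naive argument treating the products $a_{ij}a_{jq}$ as independent across $j$ fails as soon as $i=q$ or $j\in\{i,q\}$, so the boundary summands must be isolated carefully and shown to be of lower order. The second delicate point is to extract the right $1/\alpha_n$ scaling from Bernstein: the conditional variance of $a_{ij}a_{jq}$ is $\alpha_n^{-2}W(x_i,x_j)W(x_j,x_q)$ rather than $W^2$, and this extra $\alpha_n^{-2}$ factor is precisely what produces the $1/\alpha_n$ amplification in the stated rate (and, by the same token, why the one-hop filter $Ae_q$ fails to converge in the random-edges regime).
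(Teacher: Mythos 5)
Your proof is correct and follows essentially the same route as the paper's: split the error into the edge-noise term (controlled by Bernstein conditionally on $X$, with variance $\alpha_n^{-2}W(x_i,x_j)W(x_j,x_q)$ and a union bound over pairs $i\neq q$) plus the latent-sampling term, isolating the degenerate diagonal contribution. The only difference is that for the sampling term the paper invokes its chaining lemma (Dudley) to get a bound uniform over all $x\in\Xx$ with a $\sqrt{d_\Xx}$ factor and no $\log n$, whereas you use Hoeffding plus a union bound over the $n^2$ pairs, which costs an extra $\sqrt{\log n}$ that is harmless here since the Bernstein term dominates.
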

Convergence is guaranteed when $\sqrt{\log n}/(\alpha_n \sqrt{n}) \to 0$, which is stronger than relative sparsity. The difference between one-hop and two-hop filtering is illustrated in Fig.~\ref{fig:conv}.
Other strategies may also be examined, and we leave this for future work. For instance, based on Theorem \ref{thm:lei} in App.~\ref{app:add}, a strategy based on the eigenvectors of $A$ could lead to convergence even in the relatively sparse case. In the rest of the paper, we explicitly write when our results are valid for one- or two-hop filtering.

\section{Approximation power: permutation-invariant case}\label{sec:approx-inv}

In this section, we study the approximation power of continuous architectures in the permutation-invariant case. We seek to characterize the functions $\Ww \times \Pp \to \RR$ that can be well-approximated by a c-GNN or c-SGNN. We derive a generic criterion for universality arising from the Stone-Weierstrass theorem, before proving that c-SGNNs are indeed strictly more powerful than c-GNNs. Finally, we give several examples of models for which c-SGNNs are universal.

\subsection{Generic result with Stone-Weierstrass theorem}

A classical tool to prove universality of neural nets in the literature is the Stone-Weierstrass theorem \cite{Hornik-mlp, Keriven-universal}, which states that an algebra of functions that separates points on a compact space is dense in the set of continuous functions. Although NNs are typically not an algebra since they are not closed by multiplication, one of the original proofs of universality for MLPs solves this by a clever trick \cite{Hornik-mlp} which we recall in Lemma~\ref{lem:SW} in App.~\ref{app:add}. A direct application results in the following proposition.
\begin{proposition}\label{prop:sw-inv}
    Let $\Mm \subset \Ww \times \Pp$ be a \textbf{compact} subset of $\Ww \times \Pp$. c-SGNNs (resp. c-GNNs) are dense in $\Cc(\Mm,\RR)$ if and only if: for all $(W,P) \neq (W',P') \in \Mm$, there is a c-SGNN (resp. a c-GNN) such that $\bar \Psi_{W,P} \neq \bar \Psi_{W',P'}$ (resp. $\bar \Phi_{W,P} \neq \bar \Phi_{W',P'}$).
\end{proposition}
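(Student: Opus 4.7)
The ``only if'' direction is essentially free: $\Mm$, being a subset of the metric space $\Ww\times\Pp$, is Hausdorff, so by Urysohn any two distinct points can be separated by a continuous $F:\Mm\to\RR$. If the c-SGNN (resp.\ c-GNN) family were dense in $\Cc(\Mm,\RR)$, it would therefore contain a separator of any prescribed pair. The substance of the proposition lies in the converse.

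For the ``if'' direction, the plan is to invoke the Hornik-style Stone--Weierstrass variant recalled as Lemma~\ref{lem:SW} in App.~\ref{app:add}. Let $\mathcal{A}\subset\Cc(\Mm,\RR)$ denote the set of scalar-valued c-SGNN maps $(W,P)\mapsto\bar\Psi_{W,P}$, and define $\mathcal{A}'$ analogously for c-GNNs. The variant requires three ingredients: (i) $\mathcal{A}$ contains the constants; (ii) $\mathcal{A}$ separates the points of $\Mm$; (iii) for any $F_1,\ldots,F_k\in\mathcal{A}$ and any continuous $\psi:\RR^k\to\RR$, the composite $\psi(F_1,\ldots,F_k)$ lies in the closure of $\mathcal{A}$. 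Item (i) is achieved by choosing the final MLP $\bar\Phi'_{W,P}$ to be constant, and item (ii) is exactly our assumption; the real work is item (iii).

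To establish (iii), given c-SGNNs realizing $F_1,\ldots,F_k$ with bivariate inputs $\eta^{(j)}_{W,P}$ and inner c-GNNs $\Phi^{(j)}_{W,P}$, I would build a single c-SGNN running them in parallel. The joint bivariate input is the concatenation $\eta_{W,P}(x,y):=(\eta^{(1)}_{W,P}(x,y),\ldots,\eta^{(k)}_{W,P}(x,y))$, which inherits continuity, boundedness, coordinatewise Lipschitzness and the equivariance property \eqref{eq:hyp-eta}. The inner c-GNN $\Phi_{W,P}$ is assembled by placing the $\Phi^{(j)}_{W,P}$ in block-diagonal fashion across feature dimensions so that no cross-branch interaction occurs; the linearity of the pooling $\int\cdot\, dP$ preserves this block structure; and an analogous block-diagonal construction carries through the outer c-GNN up to (but not including) its final MLP. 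The resulting pre-MLP vector concatenates the pre-MLP features of all $k$ branches, and since the image of $\Mm$ under this joint map is compact (continuous image of a compact), MLP universality yields a single final MLP approximating $\psi\circ(g_1,\ldots,g_k)$ uniformly on it, where $g_j$ denotes the final MLP of branch $j$. This gives a c-SGNN uniformly approximating $\psi(F_1,\ldots,F_k)$. The c-GNN case for $\mathcal{A}'$ is strictly simpler, dropping the $\eta$ step.

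The main obstacle I anticipate is precisely this bookkeeping in (iii): one must verify that the admissibility class of $\eta_{W,P}$ together with the polynomial-filter class underlying c-GNNs is closed under direct sums, which is routine but deserves careful checking. Once (iii) is in hand, Lemma~\ref{lem:SW} immediately yields density of $\mathcal{A}$ (and of $\mathcal{A}'$) in $\Cc(\Mm,\RR)$, completing the ``if'' direction.
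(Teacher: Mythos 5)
Your proposal is correct and follows essentially the same route as the paper: both hinge on the Hornik-style Stone--Weierstrass variant (Lemma~\ref{lem:SW}), with the decisive closure-under-concatenation step realized exactly as in the paper by concatenating the bivariate inputs into $\eta''=[\eta,\eta']$ and running the inner and outer c-GNNs in parallel block-diagonally, while composition with a final MLP keeps you inside the class. The one verification the paper makes explicit that you take for granted is that $(W,P)\mapsto\bar\Psi_{W,P}$ is actually continuous for the norm $\norm{\cdot}_\infty+\normTV{\cdot}$ on $\Ww\times\Pp$ (so that the class genuinely sits inside $\Cc(\Mm,\RR)$); this follows from the assumed continuity of $(W,P)\mapsto\eta_{W,P}$, the bound $\norm{T_{W,P}[f_{W,P}]-T_{W',P'}[f_{W',P'}]}_\infty\leq\norm{f_{W,P}}_\infty(\norm{W-W'}_\infty+\normTV{P-P'})+\norm{f_{W,P}-f_{W',P'}}_\infty$, and the Lipschitzness of $\rho$.
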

Note that, by construction of the permutation-invariant c-(S)GNNs, universality is only possible when $\Mm$ does \emph{not} contain two isomorphic versions of the same random graph model. Equivalently, $\Mm$ may be a larger set \emph{quotiented} by random graph isomorphism.

\subsection{c-SGNNs are more powerful than c-GNNs}

SGNNs were proven to be strictly more powerful than the WL test, and therefore than GNNs, in \cite{Vignac-smp}. In the theorem below, we check that this strict inclusion holds for their continuous limits.

\begin{theorem}\label{thm:gnn-vs-sgnn-inv}
    The set of functions of the form $(W,P) \to \bar \Phi_{W,P}$ is \textbf{strictly} included in the set of functions $(W,P) \to \bar \Psi_{W,P}$, for both one- and two-hop input filtering.
\end{theorem}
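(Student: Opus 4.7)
The plan is to prove two things: the inclusion $\{(W,P)\mapsto \bar\Phi_{W,P}\}\subseteq \{(W,P)\mapsto \bar\Psi_{W,P}\}$, which is direct, and the strict part, which requires a separating example witnessing a function realized by some c-SGNN but by no c-GNN. The inclusion is easy; the main effort, and the main potential obstacle, is to produce an example that is simultaneously invisible to every c-GNN but distinguished by a c-SGNN for \emph{both} the one-hop choice $\eta_{W,P}=W$ and the two-hop choice $\eta_{W,P}=T_{W,P}W(\cdot,\cdot)$.

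For the inclusion, I would realize any given $\bar\Phi_{W,P}$ as a c-SGNN by trivializing the inner block. Take the inner equivariant c-GNN $\Phi$ to be one layer with all filter coefficients zero and bias $1$, so that $\Phi_{W,P}(\eta_{W,P}(\cdot,x))=\rho(1)=1$ identically (using e.g.\ $\rho=\mathrm{ReLU}$). Then the pooling step yields $\int 1\,dP=1$, which is precisely the constant input fed into any ordinary c-GNN under our convention; setting the outer block $\bar\Phi'$ equal to the target $\bar\Phi$ gives $\bar\Psi_{W,P}=\bar\Phi_{W,P}$. The same construction works for both one-hop and two-hop $\eta$, since the inner block is insensitive to its input.

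For strict inclusion, I would use a pair of two-community SBMs on $\Xx=\{1,2\}$ with $P_1=P_2=(1/2,1/2)$: let $W_1=I$ (two disconnected cliques) and $W_2\equiv 1/2$ (Erd\H{o}s--R\'enyi). Both have constant degree function $d\equiv 1/2$, yet they are not isomorphic. Since the constant function $1$ is an eigenvector of each $T_{W_i,P_i}$ with eigenvalue $1/2$, every filter $h(T_{W_i,P_i})$ acts on constants as scalar multiplication by $h(1/2)$; a short induction on layers shows that all intermediate c-GNN features $f^{(\ell)}$ remain constant functions that depend only on the scalar $1/2$ and the (shared) trainable parameters. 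Consequently $\bar\Phi_{W_1,P_1}=\bar\Phi_{W_2,P_2}$ for every c-GNN, so no c-GNN can separate these two models.

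The last step is to exhibit a c-SGNN that does separate them. For one-hop input, $\eta_{W_1,P_1}(\cdot,y)=e_y$ is a non-constant indicator on $\Xx$, while $\eta_{W_2,P_2}(\cdot,y)\equiv 1/2$ is constant. Using a single-layer inner $\Phi$ with identity filter, ReLU, and bias $b=-1/4$, the output on $W_2$ is $\rho(T_{W_2,P_2}(1/2)+b)=\rho(0)=0$, while on $W_1$ one obtains the indicator $e_y/4$ (a direct computation since $T_{W_1,P_1}e_y=e_y/2$), whose integral against $P_1$ equals $1/8$; thus the argument handed to $\bar\Phi'$ differs, and picking $\bar\Phi'$ to be a one-layer MLP that separates $0$ from $1/8$ produces $\bar\Psi_{W_1,P_1}\ne \bar\Psi_{W_2,P_2}$. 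The two-hop case is handled by the same template: $\eta_{W_1,P_1}(\cdot,y)=e_y/2$ and $\eta_{W_2,P_2}\equiv 1/4$ exhibit the identical asymmetry pattern up to an overall factor, so rescaling the bias (e.g.\ $b=-1/8$) yields the same conclusion. The main obstacle, which is modest but nontrivial, is precisely this point: one must ensure a single qualitative construction adapts to both filtering choices, which works because the c-SGNN's gain over c-GNNs comes from the non-constancy of $\eta_{W_1,P_1}(\cdot,y)$ in the first variable, a property preserved under one extra application of $T_{W_1,P_1}$.
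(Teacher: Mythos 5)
Your proof is correct and follows essentially the same strategy as the paper: both establish the inclusion by trivializing the inner block of the c-SGNN, and both obtain strictness from a two-community SBM with constant degree function (on which every c-GNN propagates only constant functions, determined by the shared degree eigenvalue) that is then separated by an explicit c-SGNN. Your witness pair ($W=I$ versus $W\equiv 1/2$ with uniform $P$, separated via a ReLU threshold) differs from the paper's one-parameter family $W_\gamma$ with $P=(1/3,2/3)$ separated by a squaring MLP, and you treat the one- and two-hop cases by explicit separate computations where the paper only writes out the two-hop case, but these are cosmetic differences rather than a different route.
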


This theorem is proven by constructing two models that are distinguished by a c-SGNN but not by any c-GNN. We note that there might be subsets $\Mm \subset \Ww \times \Pp$ that do not contain such models, and therefore on which c-GNNs and c-SGNNs have the same approximation power.

\subsection{Examples}\label{sec:example-inv}

While a generic universality theorem on random graphs seems to be out-of-reach for the moment, we examine several interesting examples, and pave the way for future extensions. We focus on c-SGNNs, but, sometimes, may not conclude on the power of c-GNNs: we could not prove that they are universal, but were not able to find a counter-example either. For simplicity, and since our purpose here is mainly illustrative, we mostly focus on one-hop input filtering.

\paragraph{Stochastic Block Models.}
SBMs \cite{Holland-SBM} are classical models to emulate graphs with communities. In our settings, SBMs with $K$ communities can be obtained with a finite latent space $\abs{\Xx} = K$, typically $\Xx = \{1, \ldots, K\}$. The kernel $W(k,k') = W_{kk'}$ can be represented as a matrix $W \in S_K$, where $S_K$ is the set of symmetric matrices in $[0,1]^{K \times K}$, and the distribution $P(k) = P_k$ as a vector $P\in \Delta^{K-1}$, where $\Delta^{K-1} = \{P\in [0,1]^K, \sum_k P_k=1\}$ is the $(K\text{--}1)$-dimensional simplex.

In the following proposition, we fix $P$ and examine universality with respect to $W$. In this case, continuous GNNs are actually quite similar to discrete ones on matrices $W$, except that the probability vector $P$ also intervenes in the computation. While GNNs on finite graphs can only be universal when using high-order tensors \cite{Maron-universal, Keriven-universal} due to invariance to graph-isomorphism, here $P$ can help to disambiguate this constraint.
We will say that $P \in \Delta^{K-1}$ is \textbf{incoherent} if: for signs $s \in \{-1,0,1\}^k$, having $\sum_{k=1}^K s_k P_k = 0$ implies $s = 0$. That is, no probability is an exact sum or difference of the others. We note that a vector drawn uniformly on $\Delta^{K-1}$ is incoherent with probability $1$. For incoherent probability vectors, we can show universality of c-SGNNs. Moreover, this is actually a case where we \emph{can} prove that c-GNNs are, in turn, \emph{not} universal.

\begin{proposition}\label{prop:sbm-inv}
    For one-hop input filtering, if $P$ is incoherent the space of functions $W \to \bar \Psi_{W,P}$ is dense in $\Cc(S_K,\RR)$. Moreover, there exists $P$ incoherent and $W \neq W'$ such that, for any c-GNN, $\bar \Phi_{W,P} = \bar \Phi_{W',P}$.
\end{proposition}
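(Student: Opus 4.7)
The plan is to apply Proposition~\ref{prop:sw-inv}: the first claim reduces to showing that for every $W\neq W'\in S_K$ (with the fixed incoherent $P$) some c-SGNN satisfies $\bar\Psi_{W,P}\neq \bar\Psi_{W',P}$. Note first that incoherence of $P$ already forces all entries $P_k$ to be pairwise distinct (otherwise the sign vector with one $+1$, one $-1$ and zeros elsewhere would witness a non-trivial cancellation), so the only isomorphism of random graph models that fixes $P$ is the identity and $W\neq W'$ genuinely corresponds to two non-isomorphic SBMs in the sense of Sec.~\ref{sec:RG}.

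For the positive direction I would construct a c-SGNN that, for each row index $y$, extracts biased ReLU moments of the $P$-weighted row measure $\mu_y(W):=\sum_{x}P_x\delta_{W_{yx}}$ on $[0,1]$. With one-hop input $\eta_{W,P}(\cdot,x)=W(\cdot,x)$, take the inner c-GNN $\Phi_{W,P}$ to be a single layer with the identity (order-$0$) graph filter and biases $b_1,\dots,b_J\in[0,1]$, so that its $j$-th output coordinate at node $y$ for source $x$ equals $\rho(W_{yx}-b_j)$. Integration against $P$ in $x$ yields $v_j(y)=\sum_{x}P_x\rho(W_{yx}-b_j)$, a finite-dimensional summary of $\mu_y(W)$. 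Take the outer c-GNN $\bar\Phi'_{W,P}$ to consist of a single order-$0$ ReLU layer followed by the final linear readout; one-hidden-layer MLP universality then lets it realize $v\mapsto \sum_y P_y G(v(y))$ for any continuous $G:\RR^J\to\RR$. Letting $J\to\infty$ with the $b_j$ dense in $[0,1]$, and using that $\mathrm{span}\{\rho(\cdot-b):b\in[0,1]\}$ is dense in $\Cc([0,1],\RR)$, the resulting family of c-SGNNs approximates $(W,P)\mapsto \sum_y P_y \tilde G(\mu_y(W))$ for every continuous functional $\tilde G$ on probability measures on $[0,1]$. Suppose for contradiction that no c-SGNN separates $W$ from $W'$; then the weighted measures $\sum_y P_y \delta_{\mu_y(W)}$ and $\sum_y P_y \delta_{\mu_y(W')}$ must coincide. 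Equating atomic masses gives $\sum_{y:\mu_y(W)=\mu}P_y = \sum_{y:\mu_y(W')=\mu}P_y$ for every $\mu$, and incoherence (no non-trivial signed subset-sum of the $P_y$ vanishes) upgrades this to the set equality $\{y:\mu_y(W)=\mu\}=\{y:\mu_y(W')=\mu\}$, so $\mu_y(W)=\mu_y(W')$ for every $y$. A second application of incoherence, now to the atomic masses $\sum_{x:W_{yx}=a}P_x$ of $\mu_y$ on $[0,1]$, yields $W_{yx}=W'_{yx}$ for all $x,y$, contradicting $W\neq W'$.

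For the counterexample I would take $K=2$ and $P=(1/3,2/3)$, which is incoherent because $\tfrac{s_1}{3}+\tfrac{2s_2}{3}\neq 0$ for every nonzero $s\in\{-1,0,1\}^2$. Set
\[
W = \begin{pmatrix} 3/5 & 2/5 \\ 2/5 & 1/2 \end{pmatrix}, \qquad W' = \begin{pmatrix} 1/5 & 3/5 \\ 3/5 & 2/5 \end{pmatrix}.
\]
Direct computation gives $T_{W,P}\mathbf{1} = T_{W',P}\mathbf{1} = \tfrac{7}{15}\mathbf{1}$, so both degree functions are the same constant $d=7/15$. A short induction on the layer index then shows that every c-GNN feature with constant input $\mathbf{1}$ is itself a constant function of the node, whose value depends only on $d$ and the c-GNN parameters: if $f^{(\ell)}_i\equiv c_i$, then $T_{W,P}^m f^{(\ell)}_i = c_i d^m \mathbf{1}$, so $f^{(\ell+1)}_j \equiv \rho\bigl(\sum_{i,m}\beta^{(\ell)}_{ijm}c_i d^m + b^{(\ell)}_j\bigr)$ is again constant, with identical value for $W$ and $W'$. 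Hence $\bar\Phi_{W,P} = g\bigl(\int f^{(M)} dP\bigr) = g(f^{(M)})$ agrees on the two models for every choice of c-GNN parameters, even though $W\neq W'$.

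The technical heart of the positive half is the architectural step: arranging the inner layer, the $x$-integration, and the outer layer so that enough independently tunable features of $\mu_y(W)$ are exposed to invoke MLP universality \emph{before} the weighted sum in $y$ collapses the per-$y$ information. Once that is in place, the two successive incoherence arguments---first separating atoms on the space of measures, then on $[0,1]$---finish the proof.
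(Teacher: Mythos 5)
Your proof is correct and follows essentially the same route as the paper's: reduce to point separation via Prop.~\ref{prop:sw-inv}, realize functionals of the form $\sum_y P_y f_1\big(\sum_x P_x f_0(W_{yx})\big)$ with a two-level c-SGNN, apply the incoherence of $P$ twice (once on the outer weighted sum over $y$, once on the inner row measures $\mu_y$), and exhibit a two-community SBM with $P=(1/3,2/3)$ and constant degree function on which every c-GNN propagates identical constants. The only cosmetic differences are your measure-theoretic phrasing and your restriction of the inner map to single units $\rho(\cdot-b_j)$ --- the literal claim that $\mathrm{span}\{\rho(\cdot-b):b\in[0,1]\}$ is dense in $\Cc([0,1],\RR)$ fails for ReLU (every element vanishes at $0$), but what you actually need is injectivity of $\mu\mapsto\big(\int\rho(t-b)\,d\mu(t)\big)_b$, which does hold, and in any case the architecture lets $f_0$ be a full MLP as in the paper.
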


\paragraph{Additive kernel.}
Let us now fix $W$ and examine universality with respect to $P$. A classical theorem on symmetric continuous functions \cite{Zaheer-deep-sets} states that any $W$ can be arbitrarily well approximated as $W(x,y) \approx u(v(x)+v(y))$ for some functions $u,v$.
    Inspired by this result, a kernel will be said to be \textbf{additive} if it can (exactly) be written as $W(x,y) = u(v(x) + v(y))$, and \textbf{injectively additive} if both $u,v$ are \emph{continuous and injective}. We prove universality in the unidimensional case below.

\begin{proposition}\label{prop:decomp-inv}
    Let $\Xx \subset \RR$, and $\tilde \Pp$ be any compact subset of $\Pp$. Assume $W$ is injectively additive with $\textup{Im}(v) \subset \RR$. For one-hop filtering, the space of functions $P \to \bar \Psi_{W,P}$ is dense in $\Cc(\tilde \Pp,\RR)$.
\end{proposition}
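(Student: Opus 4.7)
The plan is to apply the separation-of-points criterion of Prop.~\ref{prop:sw-inv} on the compact set $\Mm = \{W\}\times \tilde\Pp$, and to show that for any $P \neq P' \in \tilde\Pp$ some c-SGNN outputs different values on $(W,P)$ and $(W,P')$. I will build such a c-SGNN from the simplest possible architecture: take both the inner c-GNN $\Phi_{W,P}$ and the outer c-GNN $\bar\Phi'_{W,P}$ with zero convolutional layers and scalar output MLPs $g,g'$, so that, with one-hop input $\eta_{W,P}(x,y) = W(x,y) = u(v(x)+v(y))$, the c-SGNN reads
\begin{equation*}
    \bar\Psi_{W,P} \;=\; g'\!\left(\int\!\int g\bigl(W(y,x)\bigr)\,dP(x)\,dP(y)\right).
\end{equation*}

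Since $\Xx$ is compact and $v$ is continuous, $K := v(\Xx)+v(\Xx)$ is a compact subset of $\RR$, and since $u$ is continuous and injective, $u^{-1}$ is continuous on $u(K)$. Hence for any $\phi \in \Cc(K,\RR)$ the map $\phi \circ u^{-1}$ is continuous on $u(K)$, and MLP universality~\cite{Pinkus-MLP} lets $g$ approximate it arbitrarily well in sup-norm, while $g'$ can approximate the identity on any bounded interval. Consequently the family of real-valued functionals
\begin{equation*}
    I_\phi(P) \;:=\; \int\!\int \phi\bigl(v(x)+v(y)\bigr)\,dP(x)\,dP(y) \;=\; \int \phi\,d(\mu*\mu),\qquad \mu := v_\sharp P,
\end{equation*}
indexed by $\phi\in\Cc(K,\RR)$, lies in the uniform closure of the c-SGNN outputs on $\tilde\Pp$; it therefore suffices to show that $\{I_\phi\}_\phi$ separates distinct elements of $\tilde\Pp$.

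Assume $I_\phi(P) = I_\phi(P')$ for every continuous $\phi$; by Riesz representation this gives $\mu * \mu = \mu' * \mu'$, where $\mu' := v_\sharp P'$. Both measures are supported on the compact set $v(\Xx)$, so $\hat\mu, \hat{\mu'}$ extend to entire functions on $\mathbb{C}$, and $\widehat{\mu * \mu} = \hat\mu^{\,2}$. Thus $\hat\mu^{\,2} = \hat{\mu'}^{\,2}$ on $\mathbb{C}$, i.e.\ $(\hat\mu - \hat{\mu'})(\hat\mu + \hat{\mu'}) \equiv 0$. The ring of entire functions is an integral domain, so one factor vanishes identically; the sum cannot, since $\hat\mu(0) + \hat{\mu'}(0) = 2$, whence $\hat\mu \equiv \hat{\mu'}$ and $\mu = \mu'$. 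Finally, $v$ is continuous and injective on the compact space $\Xx$, hence a homeomorphism onto its image, so $v_\sharp$ is injective on $\Pp$ and $\mu = \mu'$ forces $P = P'$. Contrapositively, any $P \neq P'$ is separated by some $I_\phi$, and a uniform-continuity argument then produces an explicit c-SGNN with this separation property, which is exactly the hypothesis required by Prop.~\ref{prop:sw-inv}.

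The single delicate point is the implication $\mu*\mu = \mu'*\mu' \Rightarrow \mu = \mu'$: everything else is a routine MLP/Stone--Weierstrass manipulation combined with the assumed additive structure of $W$. The compact-support hypothesis on $\mu$ is essential, since without it the ``$\pm$'' ambiguity in extracting a square root of $\hat\mu^{\,2}$ becomes a genuine obstruction. An equivalent, arguably cleaner, route uses the recursion $m_k(\mu*\mu) = 2 m_k(\mu) + (\text{polynomial in } m_{<k}(\mu))$ to recover the moments of $\mu$ from those of $\mu * \mu$, after which the Hausdorff moment problem on the compact set $v(\Xx)$ gives $\mu = \mu'$.
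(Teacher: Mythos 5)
Your proposal is correct, and it reaches the separation property of Prop.~\ref{prop:sw-inv} by a genuinely different route than the paper. The paper keeps the outer MLP \emph{inside} the outer integral: with $f_0\approx u^{-1}$ and $f_1\approx x^k$ it obtains $\int (v(x)+\EE_P v)^k\,dP(x)=\int (v(x)+\EE_{P'}v)^k\,dP'(x)$, runs a direct recursion on $k$ to get $\EE_P v^k=\EE_{P'}v^k$ for all $k$, and concludes $v_\sharp P=v_\sharp P'$ by polynomial density. You instead restrict to the smaller family where the outer nonlinearity sits \emph{outside} both integrals, so the only information available is $\int\phi\,d(\mu*\mu)$ with $\mu=v_\sharp P$; this determines $\mu*\mu$ but then forces you to solve a deconvolution problem, which you handle via the entire-function factorization $\widehat{\mu}^{\,2}=\widehat{\mu'}^{\,2}$ (or, equivalently, the moment recursion for $\mu*\mu$ plus Hausdorff determinacy). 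Both arguments are sound: the paper's is more elementary and sidesteps deconvolution entirely by exploiting the nested architecture, while yours correctly identifies compact support of $\mu$ as the hypothesis that removes the sign ambiguity in extracting $\widehat{\mu}$ from $\widehat{\mu}^{\,2}$, and shows that even the weaker ``fully pooled'' subclass of c-SGNNs already separates points here. Your final steps (MLP approximation of $\phi\circ u^{-1}$ and of the identity, injectivity of $v_\sharp$ from $v$ being a homeomorphism onto its image on the compact $\Xx$) match the paper's.
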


It is easy to see that injectively additive kernels include all SBMs for which $W_{ij} \neq W_{ij'}$ when $j\neq j'$, so Prop.~\ref{prop:decomp-inv} completes Prop.~\ref{prop:sbm-inv}.
However, unlike additive kernels, injectively additive kernels are \emph{a priori} \textbf{not} universal approximators of symmetric continuous functions: this result \cite{Zaheer-deep-sets} is only valid when $\Im(v)$ can be multidimensional. But it is known for instance that there is no continuous injective map from $[0,1]^2$ to $[0,1]$, so if $\Im(v) = [0,1]^2$, $u$ cannot be both continuous and injective.

\paragraph{Radial kernel.}
We conclude this section with an important class of kernels, \emph{radial} kernels $W(x,y) = w(\norm{x-y})$ for some function $w:\RR_+ \to [0,1]$. They include the popular Gaussian kernel and so-called $\varepsilon$-graphs. Below, we give an example in one dimension, for which c-SGNNs are universal on symmetric distributions. The case of non-symmetric distributions seems more involved and we leave it for future investigations.

\begin{proposition}\label{prop:rad-inv}
    Assume that $\Xx = [-1,1]$ and $W(x,y) = w(\abs{x-y})$ where $w$ is continuous and injective. Let $\tilde \Pp \subset \Pp$ be any compact set of \textbf{symmetric} distributions. For one-hop input filtering, the space of functions $P \to \bar \Psi_{W,P}$ is dense in $\Cc(\tilde \Pp,\RR)$.
\end{proposition}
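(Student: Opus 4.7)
By Proposition~\ref{prop:sw-inv} applied to the compact set $\Mm = \{W\}\times\tilde\Pp$, density is equivalent to separation of points by c-SGNNs. Distinct elements of $\tilde\Pp$ are not identified under isomorphism: if a bijection $\varphi:[-1,1]\to[-1,1]$ satisfies $W_\varphi = W$, injectivity of $w$ forces $|\varphi(x)-\varphi(y)|=|x-y|$ for all $x,y$, so $\varphi$ is an isometry of $[-1,1]$, i.e., $\pm\mathrm{id}$, both of which stabilize any symmetric $P$. It therefore suffices to fix $P \neq P'$ in $\tilde\Pp$ and construct a c-SGNN separating them.

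The plan is to make the c-SGNN compute an approximation of $\EE_P|X-Y|^{2k_0} := \int\!\!\int |y-x|^{2k_0}\, dP(x)\, dP(y)$ for $X,Y$ i.i.d. $\sim P$ and a well-chosen $k_0$. Expanding $(X-Y)^{2k} = |X-Y|^{2k}$ and using $m_j(P) := \int x^j\, dP(x) = 0$ for odd $j$ by symmetry,
\begin{equation*}
\EE_P|X-Y|^{2k} \;=\; 2\, m_{2k}(P) \;+\; F_k\bigl(m_2(P),\ldots,m_{2k-2}(P)\bigr)
\end{equation*}
for an explicit polynomial $F_k$. Since symmetric measures on $[-1,1]$ are determined by their even moments (Hausdorff moment problem, applied to the pushforward $|\cdot|_\sharp P$ on $[0,1]$), $P\neq P'$ forces $m_{2k_0}(P)\neq m_{2k_0}(P')$ for some smallest $k_0\geq 1$; the triangular structure above then yields $\EE_P|X-Y|^{2k_0} \neq \EE_{P'}|X-Y|^{2k_0}$.

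To realize this quantity, take both the inner $\Phi_{W,P}$ and the outer $\bar\Phi'_{W,P}$ with $M=0$ layers and scalar output MLPs $g$, $g'$ respectively; with one-hop input $\eta_{W,P}(y,x)=w(|y-x|)$ this gives $\bar\Psi_{W,P} = g'\bigl(\int\!\!\int g(w(|y-x|))\, dP(x)\, dP(y)\bigr)$. Since $w$ restricted to $[0,2]$ is a continuous injection onto a compact subset of $[0,1]$, the map $s\mapsto (w^{-1}(s))^{2k_0}$ is continuous on $w([0,2])$; extend it to $\RR$ by the Tietze extension theorem and approximate it uniformly on $w([0,2])$ by an MLP $g$, while taking $g'$ to approximate the identity. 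Then $\bar\Psi_{W,P}$ is arbitrarily close to $\EE_P|X-Y|^{2k_0}$ (and similarly $\bar\Psi_{W,P'}$ to $\EE_{P'}|X-Y|^{2k_0}$), so making the approximations fine enough yields $\bar\Psi_{W,P}\neq\bar\Psi_{W,P'}$. The main conceptual obstacle is identifying a complete invariant of symmetric measures expressible via the ``two-point'' functionals $\int\!\!\int \phi(|y-x|)\,dP(x)\,dP(y)$ that one-hop filtering naturally produces; it is exactly the symmetry assumption that makes the sequence $\{\EE_P|X-Y|^{2k}\}_k$ a complete invariant, which plausibly explains why the nonsymmetric case is left open.
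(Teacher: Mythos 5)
Your proof is correct, and it takes a recognizably different route from the paper's. The paper separates $P\neq P'$ using the nested functional $\int f_1\bigl(\int f_0(W(x,y))\,dP(y)\bigr)dP(x)$ with $f_0\approx(\cdot)^2\circ w^{-1}$ and $f_1\approx(\cdot)^k$: since symmetric $P$ is centered, the inner integral collapses to $x^2+\EE_P X^2$, and a recursion over $k$ extracts the even moments $\EE_P X^{2k}$ one by one. You instead use the ``flat'' two-point functional $\int\!\!\int g(w(|x-y|))\,dP(x)\,dP(y)$ with $g\approx(w^{-1}(\cdot))^{2k_0}$, i.e., you target $\EE_P|X-Y|^{2k_0}$ directly and invoke the triangular expansion $\EE_P|X-Y|^{2k}=2m_{2k}(P)+F_k(m_2,\ldots,m_{2k-2})$ to isolate the first disagreeing even moment. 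Both arguments bottom out in the same fact (a symmetric measure on $[-1,1]$ is determined by its even moments, via Hausdorff), but yours needs no nonlinearity between the inner pooling and the outer integration — the outer MLP only approximates the identity — so it establishes separation for a strictly simpler subfamily of c-SGNNs, at the price of a slightly more involved combinatorial bookkeeping ($F_k$ and the choice of minimal $k_0$) where the paper gets a clean one-moment-per-$k$ recursion. Your preliminary check that no nontrivial isomorphism of $([-1,1],W)$ identifies two distinct symmetric $P$'s is a nice sanity check but is logically subsumed by the separation construction itself. Two cosmetic points: the even moments of $P$ are the moments of the pushforward of $P$ under $x\mapsto x^2$ (rather than under $|\cdot|$, whose odd moments you do not control), though either phrasing leads to the same determinacy conclusion; and if one is fussy about $M=0$ layers, the same maps are realized with $M=1$ and order-zero filters.
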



\section{Approximation power: permutation-equivariant case}\label{sec:approx-eq}

In the equivariant case, recall that the outputs of c-(S)GNNs are \emph{functions} on $\Xx$. The ``traditional'' notion of universality is to evaluate the approximation power of mappings $(W,P) \to \Ff$, where $\Ff$ is some space of equivariant functions, as is done for the discrete case in \cite{Keriven-universal, Lelarge-power}. However, a potentially simpler and more relevant notion here is to \emph{fix} $(W,P)$, and directly examine the properties of the space of functions $\Xx \to \RR$ represented by c-(S)GNNs, that is, the space of functions $\{\Psi_{W,P}:\Xx \to \RR\}$ for all possible c-SGNNs $\Psi_{W,P}$ (and similar for $\Phi_{W,P}$). Indeed, this directly answers such questions as: given an SBM, does there exist a c-GNN that can labels the communities (Fig.~\ref{fig:sbm})? Or: given the structure of a mesh, what functions can be computed on it, e.g. for segmentation?

\subsection{Generic result with Stone-Weierstrass theorem}

As in the invariant case, the Stone-Weierstrass theorem yields a generic separation condition.

\begin{proposition}\label{prop:sw-eq}
    Let $(W,P)$ be fixed. Then c-SGNNs (resp. c-GNNs) are dense in $\Cc(\Xx,\RR)$ iff: for all $x \neq x' \in \Xx$, there is a c-SGNN (resp. a c-GNN) such that $\Psi_{W,P}(x) \neq \Psi_{W,P}(x')$ (resp. $\Phi_{W,P}(x) \neq \Phi_{W,P}(x')$).
\end{proposition}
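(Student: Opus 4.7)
The plan is to reduce the proposition to an MLP-flavored Stone--Weierstrass argument, exactly mirroring the strategy used in Proposition~\ref{prop:sw-inv}. Fix $(W,P)$ and introduce
\[
\mathcal{A}_\Psi \;:=\; \{\Psi_{W,P}\;:\; \Psi \text{ is a c-SGNN with output dimension } 1\} \;\subset\; \Cc(\Xx,\RR),
\]
with $\mathcal{A}_\Phi$ defined analogously. The ``only if'' direction is immediate: $\Cc(\Xx,\RR)$ separates points of the compact metric space $\Xx$ (for instance by $x\mapsto m_\Xx(x,x_0)$), so any dense subfamily must separate them too.

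For ``if'', I would apply Lemma~\ref{lem:SW} to $\mathcal{A}_\Psi$. The lemma requires three ingredients: (i) constants lie in $\mathcal{A}_\Psi$, which follows by taking all polynomial filters zero and absorbing the desired constant into the bias of the final MLP of $\Phi'$; (ii) $\mathcal{A}_\Psi$ separates points of $\Xx$, which is the hypothesis; (iii) $\mathcal{A}_\Psi$ is closed under composition with scalar MLPs, i.e.\ for any $\Psi^{(1)},\ldots,\Psi^{(k)} \in \mathcal{A}_\Psi$ and any MLP $g:\RR^k\to\RR$, the function $x\mapsto g(\Psi^{(1)}_{W,P}(x),\ldots,\Psi^{(k)}_{W,P}(x))$ also lies in $\mathcal{A}_\Psi$.

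The only real content is (iii), which I would obtain by a block-diagonal stacking construction. Given the $k$ sub-architectures, build a single c-SGNN whose inner c-GNN has per-layer width equal to the sum of the widths of the $\Phi^{(i)}$, with each polynomial filter $h^{(\ell)}_{ij}$ chosen as the direct sum of the corresponding filters of the sub-architectures and biases obtained by concatenation. Since $\rho$ acts pointwise and polynomials of $T_{W,P}$ commute block-wise with this direct sum, the $i$-th block of the output reproduces $\Phi^{(i)}_{W,P}(\eta_{W,P}(\cdot,x))$ exactly; the inner final MLP is taken as the block-wise application of the $g^{(i)}_{\text{inner}}$, which is itself an MLP. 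Averaging against $P$ is linear and preserves blocks, and the outer c-GNN $\Phi'$ is stacked the same way up to its last layer. In that last MLP of $\Phi'$, rather than concatenating the block outputs, I replace it by $g\circ(g^{(1)}_{\text{outer}},\ldots,g^{(k)}_{\text{outer}})$, which is still an admissible MLP (a composition of MLPs, acceptable under \cite{Pinkus-MLP}). The resulting c-SGNN realizes the target function, establishing (iii). The c-GNN case is the same argument with the outer layer removed: stack the $\Phi^{(i)}$ block-diagonally and merge the final MLPs via $g$.

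The main obstacle I expect is purely bookkeeping: making the parallel construction rigorous when the sub-architectures have different depths, widths, filter orders and MLP geometries, and verifying that the single shared input $\eta_{W,P}$ is compatible with all of them (which it is, since $\eta$ is part of the ambient setup and is not varied when enumerating c-SGNNs). Padding shallower or narrower sub-architectures with identity-like layers of zero filters lets everything fit into a common shape, after which the direct-sum argument is mechanical.
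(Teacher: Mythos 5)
Your proposal is correct and follows essentially the same route as the paper: Proposition~\ref{prop:sw-eq} is proved there as a direct application of Lemma~\ref{lem:SW}, with the closure under concatenation and under composition with an MLP asserted exactly as you establish via block-diagonal stacking and merging of the final MLPs. The only detail worth adding is that if the sub-architectures use different input transforms $\eta$, one concatenates them into $\eta''=[\eta,\eta']$ (as the paper notes in the invariant case); otherwise your argument matches the paper's.
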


If $(W,P)$ are such that c-(S)GNNs satisfy some symmetry or invariance (see for instance Prop.~\ref{prop:rad-eq}), then the space $\Xx$ can be quotiented to obtain universality among functions satisfying these constraints.

\subsection{c-SGNNs are more powerful than c-GNNs}

Using a proof similar to Theorem \ref{thm:gnn-vs-sgnn-inv}, we can then prove that c-SGNNs are indeed strictly more powerful than c-GNNs for some $(W,P)$.

\begin{theorem}\label{thm:gnn-vs-sgnn-eq}
    For both one- and two-hop input filtering, the following holds. For any $(W,P)$, the set of functions of the form $\Phi_{W,P}$ is included in the set of functions $\Psi_{W,P}$, and there exist $(W,P)$ such that the inclusion is \textbf{strict}.
\end{theorem}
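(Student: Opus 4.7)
The statement has two parts. For the inclusion, valid for every $(W,P)$ and either input filtering, my plan is to realize any c-GNN as a c-SGNN by using a trivial inner block inside \eqref{eq:csgnn}. Specifically, take the inner c-GNN $\Phi_{W,P}$ to be an architecture which ignores its input and outputs the constant function $1 \in \Cc(\Xx,\RR)$, say by setting all filter coefficients to zero and using an output MLP $g$ with appropriate biases returning $1$. Then $\int \Phi_{W,P}(\eta_{W,P}(\cdot,x))\,dP(x) \equiv 1$, and the outer c-GNN $\Phi'_{W,P}$ is applied to this constant input, giving $\Psi_{W,P} = \Phi'_{W,P}(1) = \Phi'_{W,P}$ under the paper's convention. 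Since $\Phi'_{W,P}$ ranges over all c-GNNs, this proves $\{\Phi_{W,P}\} \subseteq \{\Psi_{W,P}\}$.

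For strictness, I would follow the same template as Thm~\ref{thm:gnn-vs-sgnn-inv}: pick $(W,P)$ whose degree function $d(x) = \int W(x,z)\,dP(z)$ is a constant $c$. A one-line induction on layers then shows that every c-GNN $\Phi_{W,P}$ is a constant on $\Xx$: starting from $f^{(0)} \equiv 1$, the operator $T_{W,P}$ maps constants to constants, so each polynomial filter $h_{ij}(T_{W,P})f^{(\ell)}_i$ is constant, and both the bias/$\rho$ step and the final row-wise MLP $g$ preserve constancy. It therefore suffices to produce a single c-SGNN with non-constant output.

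I would use the asymmetric two-block SBM with $\Xx = \{1,2\}$, $P = (1/3, 2/3)$ and $W = \mathrm{diag}(1, 1/2)$, for which $d \equiv 1/3$. The two latent points have different masses and different self-connectivities, so no non-trivial bijection $\varphi : \Xx \to \Xx$ preserves the model and equivariance does not forbid separating them. For one-hop filtering ($\eta_{W,P} = W$), take the inner c-GNN to be a single ReLU layer $f \mapsto \rho(f - \tfrac{1}{4})$ and the outer c-GNN to be the identity. A direct computation gives $\Psi_{W,P}(1) = \tfrac{1}{3} \cdot \tfrac{3}{4} = \tfrac{1}{4}$ and $\Psi_{W,P}(2) = \tfrac{2}{3} \cdot \tfrac{1}{4} = \tfrac{1}{6}$, so $\Psi_{W,P}$ is non-constant. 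The two-hop case reduces to the same kind of finite computation with $\eta_{W,P}(x,y) = T_{W,P}[W(\cdot,y)](x)$ and an adapted threshold (here $\theta \in (0,1/6)$ works). The only delicate point is the simultaneous requirement that $(W,P)$ be \emph{regular} enough (constant $d$) to kill all c-GNNs, yet \emph{asymmetric} enough that no random-graph isomorphism identifies the two points of $\Xx$; a symmetric two-block SBM would satisfy the first but fail the second, which is why the asymmetric diagonal example above is needed.
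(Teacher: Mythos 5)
Your proposal is correct and follows essentially the same route as the paper's proof: inclusion via a trivial inner block feeding a constant into the outer c-GNN, and strictness via a $2$-community SBM with constant degree function (so every c-GNN is constant by the induction you describe) on which an explicit c-SGNN separates the two communities. The only differences are cosmetic --- the paper uses $P=(1/3,2/3)$ with $W=\bigl(\begin{smallmatrix}1/2 & 1/4\\ 1/4 & 3/8\end{smallmatrix}\bigr)$ and an MLP approximating $x\mapsto x^2$ for the two-hop case, whereas you use $W=\mathrm{diag}(1,1/2)$ with an exact ReLU threshold and treat one-hop explicitly --- and your numerical values ($1/4$ vs.\ $1/6$ for one-hop, and the two-hop threshold condition) check out, as does your correct observation that the witness model must be asymmetric so that isomorphism-equivariance does not force the c-SGNN to be constant.
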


Again, we note that, for some random graph models $(W,P)$, c-SGNNs and c-GNNs might have the same approximation power. 

\subsection{Examples}\label{sec:eq-example}

We treat the same examples as before, with the addition of two-hop filtering for SBMs and radial kernels on the $d$-dimensional sphere.

\paragraph{SBM.}
Universality in the SBM case corresponds to being able to distinguish communities, that is, $\Psi_{W,P}:\Xx \to \RR$ returns a different value for each element of the latent space $\Xx$. In the following result, we assume that $W$ is invertible, and prove the result for \emph{both} one- and two-hop filtering, meaning that c-SGNNs can indeed distinguish communities of SBMs with random edges under some mild conditions. On the other hand, c-GNNs may fail on such models.

\begin{proposition}\label{prop:sbm-eq}
    Let $P$ be incoherent, and $W$ be invertible. For \textbf{both} one- and two-hop input filtering, c-SGNNs are dense in $\Cc(\Xx, \RR)$. Moreover, there exist $(W,P)$ satisfying the conditions above such that any c-GNN $\Phi_{W,P}$ is a constant function.
\end{proposition}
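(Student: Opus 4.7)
The plan is to invoke Proposition \ref{prop:sw-eq}: it suffices to exhibit, for every pair of distinct communities $k \neq k' \in \Xx = \{1,\ldots,K\}$, a c-SGNN $\Psi_{W,P}$ with $\Psi_{W,P}(k) \neq \Psi_{W,P}(k')$. I would use a minimal architecture: the inner c-GNN $\Phi_{W,P}$ consists of a single layer with identity filter, tunable bias $b \in \RR$ and ReLU; the outer c-GNN $\Phi'_{W,P}$ is just the pointwise identity. Writing $\eta = W$ in the one-hop case and $\eta = W D W$ (with $D = \mathrm{diag}(P)$) in the two-hop case, the intermediate feature after $P$-pooling reads
\[
S_b(k) \;=\; \sum_{x} P_x\,\rho\bigl(\eta(k,x) + b\bigr) \;=\; \int_\RR \rho(w+b)\, d\mu_k(w), \qquad \mu_k := \sum_x P_x\,\delta_{\eta(k,x)}.
\]
The right-derivative $\partial_b S_b(k) = \mu_k((-b,\infty))$ shows that $b \mapsto S_b(k)$ determines the atomic measure $\mu_k$, so producing a separating c-SGNN reduces to the measure-theoretic claim $\mu_k \neq \mu_{k'}$.

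\textbf{Core lemma via incoherence and invertibility.} I would prove $\mu_k \neq \mu_{k'}$ by contradiction. If $\mu_k = \mu_{k'}$, then for each value $w$ in the range of $\eta$ the sets $A_w = \{x : \eta(k,x) = w\}$ and $B_w = \{x : \eta(k',x) = w\}$ satisfy
\[
\sum_{x \in A_w\setminus B_w} P_x \;-\; \sum_{x \in B_w\setminus A_w} P_x \;=\; 0,
\]
which is a $\{-1,0,1\}$-signed combination of the coordinates of $P$. Incoherence forces the combination to be trivial, so $A_w = B_w$ for every $w$ and the $k$th and $k'$th rows of $\eta$ coincide. For one-hop this contradicts invertibility of $W$; for two-hop $\eta = WDW$ is invertible because $D$ is (incoherence rules out zero entries of $P$), giving the same contradiction. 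Choosing any $b$ with $S_b(k)\neq S_b(k')$ then yields the desired separating c-SGNN.

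\textbf{Bad model for c-GNNs.} Here the plan is to exhibit an explicit pair $(W,P)$ with $WP = c\mathbf 1$ and to observe that this already forces every c-GNN $\Phi_{W,P}(1)$ to be a constant function, by straightforward induction on depth. With $M := W\mathrm{diag}(P)$ one has $M\mathbf 1 = WP = c\mathbf 1$, so any $h(M)(c'\mathbf 1) = c'h(c)\mathbf 1$ remains constant; constancy is preserved by adding biases, applying $\rho$ and composing with the final pointwise MLP. A valid $K=2$ instance is $P = (1/3,2/3)$ (incoherent since $p \notin \{0, 1/2, 1\}$) together with
\[
W = \begin{pmatrix} 9/10 & 1/10 \\ 1/10 & 5/10 \end{pmatrix},
\]
which is invertible and satisfies $WP = \tfrac{11}{30}\mathbf 1$.

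\textbf{Main obstacle.} The only substantive step is the incoherence-based reduction ``identical pooled features $\Rightarrow$ identical rows of $\eta$''; everything else combines Proposition \ref{prop:sw-eq}, the invertibility hypotheses, and a routine layer-by-layer induction for the negative example.
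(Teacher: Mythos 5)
Your proposal is correct and follows essentially the same route as the paper: reduce to point separation via Prop.~\ref{prop:sw-eq}, use a pooled test-function architecture together with incoherence to force $\eta(k,\cdot)=\eta(k',\cdot)$, invoke invertibility of $W$ (and of $W\mathrm{diag}(P)W$ in the two-hop case) to conclude $k=k'$, and exhibit a constant-degree two-block SBM for the c-GNN failure. The only cosmetic differences are that you extract the measure equality $\mu_k=\mu_{k'}$ via a ReLU-bias derivative trick where the paper plugs in $\{0,1\}$-valued MLPs (if the activation $\rho$ is not exactly ReLU, your step is recovered by noting that MLP universality already gives $\int f\,d\mu_k=\int f\,d\mu_{k'}$ for all continuous $f$), and that you supply your own admissible counterexample $(W,P)$ instead of reusing the one from Theorem~\ref{thm:gnn-vs-sgnn-eq}.
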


This proposition is illustrated in Fig.~\ref{fig:sbm}: on an SBM with constant degree function, any GNN converges to a c-GNN with constant output, while a SGNN with two-hop input filtering can be close to a c-SGNN that perfectly separates the communities.

\begin{figure}
    \centering
    \includegraphics[width=0.32\textwidth]{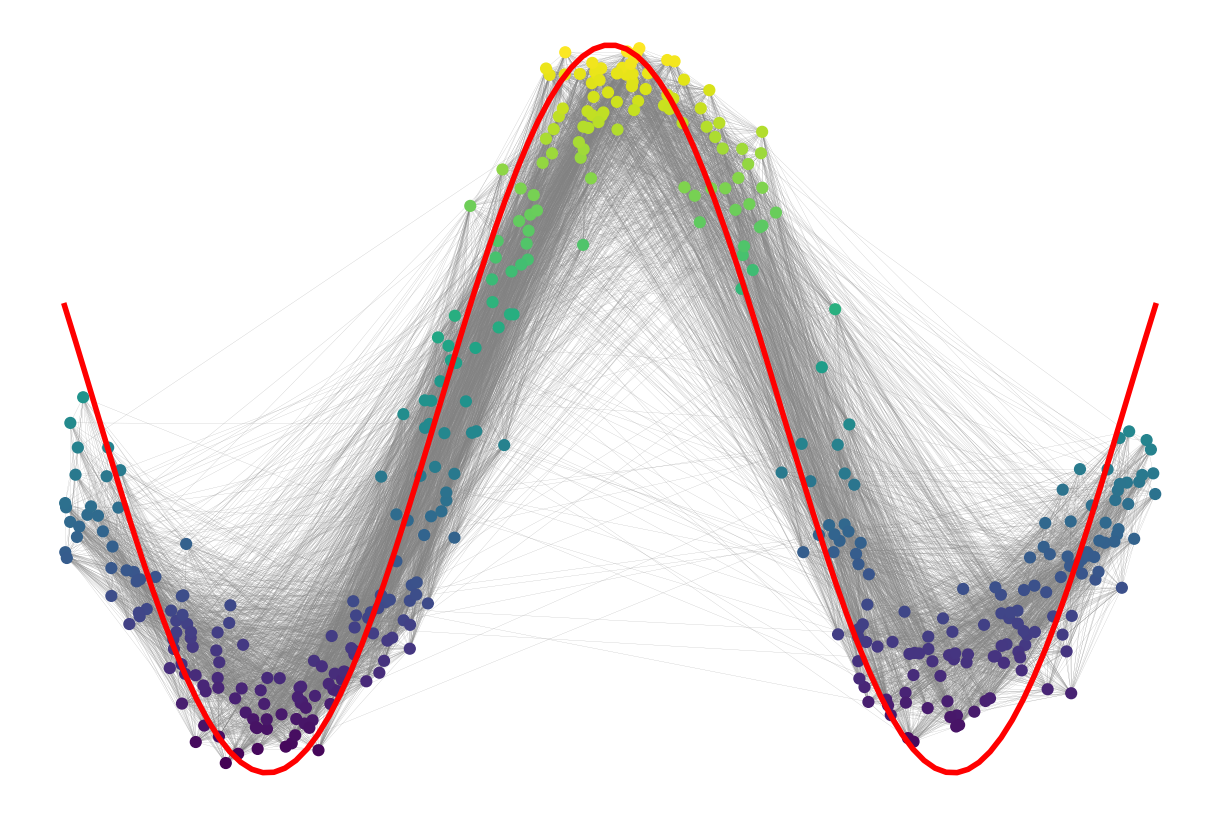}
    \includegraphics[width=0.32\textwidth]{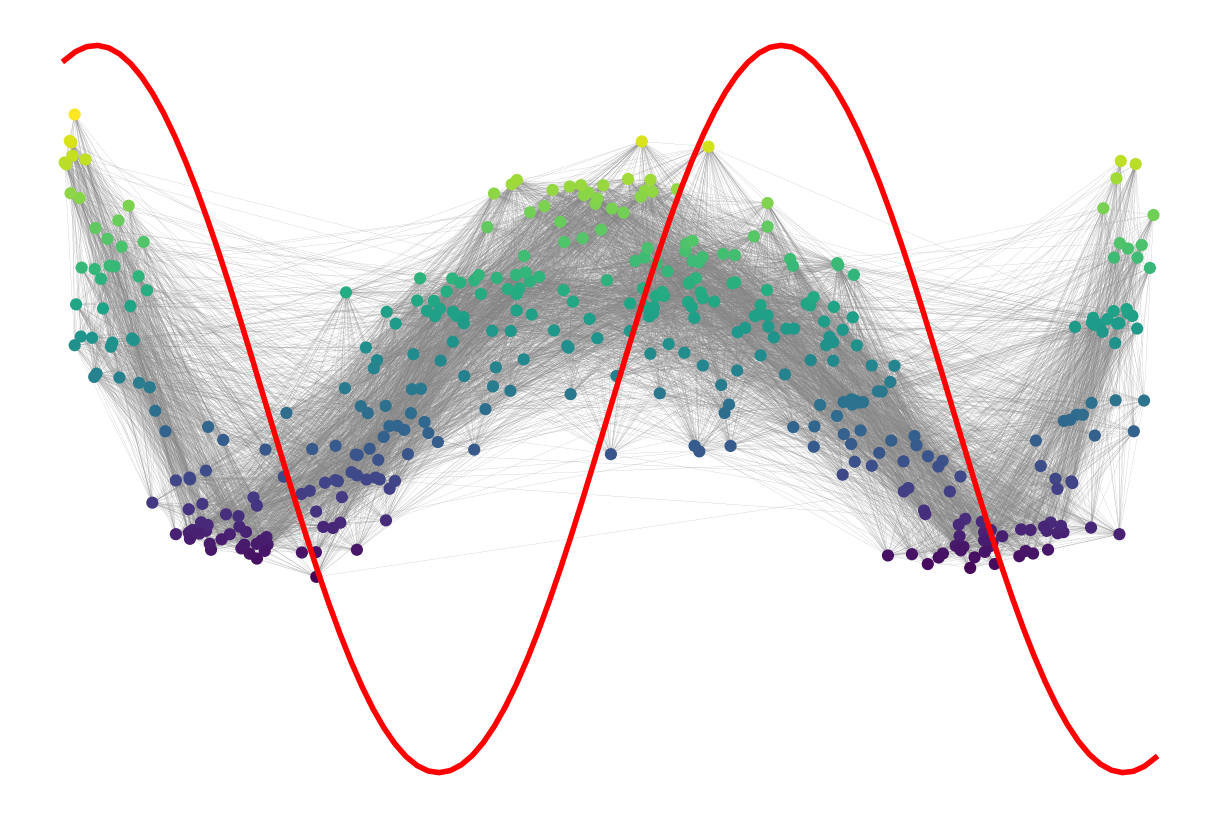}
    \includegraphics[width=0.32\textwidth]{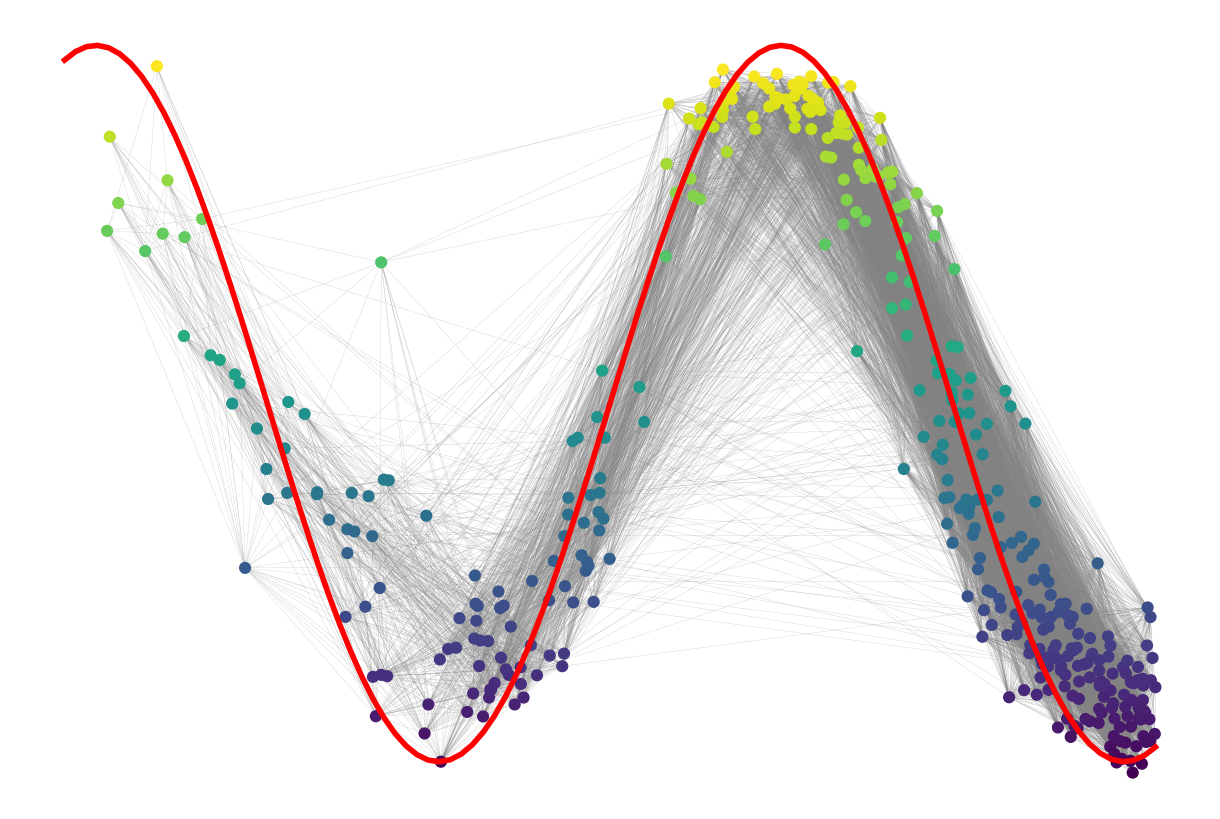}
    \caption{\small Illustration of Prop.~\ref{prop:rad-eq} on Gaussian kernel, but with random edges and two-hop input filtering. The x-axis is the latent variables in $\Xx = [-1,1]$. The y-axis is the output of a SGNN trained to approximate some function $f$ (red curve). On the left, both distribution $P$ and $f$ are symmetric, c-SGNNs are universal in that case. In the center, $P$ is symmetric but not $f$, and the training expectedly fails since the limit c-SGNN is symmetric. On the right, $P$ and $f$ are non-symmetric, and universality holds again. Details can be found in App.~\ref{app:numerics}.}
    \label{fig:approx}
\end{figure}

\paragraph{Additive kernel.}

As in the invariant case, injectively additive kernels lead to universality.

\begin{proposition}\label{prop:decomp-eq}
    Assume $W$ is injectively additive, fix any $P$. Then, for one-hop input filtering, c-SGNNs are dense in $\Cc(\Xx, \RR)$.
\end{proposition}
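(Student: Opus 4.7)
By Proposition~\ref{prop:sw-eq}, it suffices to show that c-SGNNs separate points of $\Xx$: for every $x_0 \neq x_0'$ I need a c-SGNN with $\Psi_{W,P}(x_0) \neq \Psi_{W,P}(x_0')$. Writing $W(x,y) = u(v(x)+v(y))$ with $u,v$ continuous and injective, we have $v(x_0) \neq v(x_0')$. The plan is to construct a c-SGNN whose output approximates the function $x \mapsto v(x)+c$ for some constant $c = c(W,P)$, so that the output separates $x_0$ from $x_0'$ as soon as the uniform approximation error is less than $\abs{v(x_0)-v(x_0')}/3$.

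\textbf{Construction.} The key observation is that any c-GNN in which every polynomial filter is reduced to the scalar identity (keep only the $k=0$ coefficient in $h(T_{W,P})=\sum_k \beta_k T_{W,P}^k$) acts as an MLP applied pointwise to the values of its input function; combined with the row-wise output MLP $g$ and the hypothesis on $\rho$, such a c-GNN can uniformly approximate \emph{any} continuous scalar function applied pointwise, by standard MLP universality. I would use this three times. First, with one-hop filtering, $\eta_{W,P}(\cdot,\xi)=W(\cdot,\xi)=u(v(\cdot)+v(\xi))$ takes values in the compact set $u(v(\Xx)+v(\Xx))$, on which $u^{-1}$ is continuous; choose the inner $\Phi_{W,P}$ to approximate $u^{-1}$ pointwise, yielding $\Phi_{W,P}(\eta_{W,P}(\cdot,\xi))(y) \approx v(y)+v(\xi)$ uniformly in $y,\xi$. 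Second, the pooling $\int \cdot\, dP(\xi)$ then produces a function uniformly close to $y \mapsto v(y)+c$, where $c=\int v\,dP$. Third, choose the outer $\Phi'_{W,P}$ to approximate the identity pointwise on the compact set $v(\Xx)+c$, giving $\Psi_{W,P}(x) \approx v(x)+c$.

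\textbf{Error control and obstacle.} The main technical step is propagating the three uniform approximation errors through the composition. Because $\rho$ is $1$-Lipschitz, the scalar MLP built from an identity-filter c-GNN is Lipschitz with controlled constant; the pooling step is integration against a probability measure, hence contractive in the sup norm; and the outer MLP is Lipschitz on a compact enlargement of $v(\Xx)+c$. A standard $\epsilon/3$-argument then lets me choose the three MLP approximations fine enough so that $\norm{\Psi_{W,P}-(v(\cdot)+c)}_\infty < \abs{v(x_0)-v(x_0')}/3$, giving $\Psi_{W,P}(x_0)\neq\Psi_{W,P}(x_0')$ and, via Proposition~\ref{prop:sw-eq}, density in $\Cc(\Xx,\RR)$. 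The main obstacle is simply this error book-keeping; the construction itself is essentially forced by the additive decomposition of $W$, and the argument parallels that of Proposition~\ref{prop:decomp-inv}, the only difference being that here we exploit the pointwise structure of the equivariant output rather than a scalar distribution-distinguishing functional.
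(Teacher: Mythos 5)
Your proposal is correct and takes essentially the same approach as the paper: both reduce to Prop.~\ref{prop:sw-eq} and use the one-hop c-SGNN $x \mapsto \int f\pa{W(x,y)}\,dP(y)$ with $f \approx u^{-1}$, which computes $v(x) + \int v\,dP$ and therefore separates points by injectivity of $v$. The paper phrases this contrapositively and omits the $\epsilon/3$ error book-keeping you spell out, but the construction is identical.
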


\paragraph{Radial kernel.} Unlike the invariant case (Prop.~\ref{prop:rad-inv}) which was limited to symmetric distributions, we treat both symmetric and non-symmetric case: when $P$ is symmetric, then so is $\Psi_{W,P}$ (by permutation-equivariance), and we have universality \emph{among symmetric functions}. When $P$ is non-symmetric, we have universality among \emph{all} functions. See Fig.~\ref{fig:approx} for an illustration.

\begin{proposition}\label{prop:rad-eq}
    Consider $\Xx = [-1, 1]$, $W(x,y) = w(|x-y|)$ a radial kernel with an invertible, continuous $w$, and $P$ with a piecewise continuous density such that $\EE_P X=0$. Then, for one-hop input filtering: if $P$ is symmetric, c-SGNNs are dense in the space of \emph{symmetric} functions in $\Cc(\Xx,\RR)$, and if $P$ is not symmetric, c-SGNNs are dense in $\Cc(\Xx,\RR)$.
\end{proposition}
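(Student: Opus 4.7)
The plan is to invoke the Stone--Weierstrass criterion of Proposition~\ref{prop:sw-eq}, which reduces the claim to a point-separation statement. Note first that when $P$ is symmetric, $(W,P)$ is isomorphic to itself via the bijection $\varphi(x) = -x$ (since $W$ is radial and $\varphi_\sharp P = P$), so by the equivariance of c-SGNNs under isomorphism, $\Psi_{W,P}(-y) = \Psi_{W,P}(y)$ for every c-SGNN; density can therefore only be sought in the space of symmetric functions in $\Cc(\Xx,\RR)$, and what has to be shown is that c-SGNNs separate the equivalence classes $\{y,-y\}$. When $P$ is not symmetric no such nontrivial self-isomorphism is available, and all pairs $y_1 \neq y_2$ must be separated.

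The first step is to exhibit a concrete, easily analysed subfamily of c-SGNNs. Taking only zero-order filters (of the form $h(T) = c\,I$) in both the inner c-GNN $\Phi_{W,P}$ and the outer c-GNN $\Phi'_{W,P}$ reduces each to a pointwise MLP, so by the universality of MLPs \cite{Pinkus-MLP} and the one-hop input choice $\eta_{W,P}(y,x) = W(y,x) = w(|y-x|)$, the c-SGNN can realise any function of the form
\begin{equation*}
\Psi_{W,P}(y) \;=\; \tilde g\bigl(h_{g_1}(y),\ldots,h_{g_k}(y)\bigr),\qquad h_g(y) := \int g\bigl(w(|y-x|)\bigr)\,dP(x),
\end{equation*}
for any integer $k \geq 1$ and any continuous $g_1,\ldots,g_k:\RR\to\RR$ and $\tilde g:\RR^k\to\RR$. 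Since $w$ is a continuous bijection, finding some $h_g$ with $h_g(y_1)\neq h_g(y_2)$ is equivalent to $\tilde\mu_{y_1} \neq \tilde\mu_{y_2}$, where $\tilde\mu_y$ is the pushforward of $P$ under $x\mapsto |y-x|$.

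The technical heart of the proof is then the following analytic lemma: \emph{if $\tilde\mu_{y_1} = \tilde\mu_{y_2}$ with $y_1\neq y_2$, then $p$ is symmetric about $(y_1+y_2)/2$}. The density of $\tilde\mu_y$ on $[0,\infty)$ is $q_y(t) = p(y+t)+p(y-t)$ (with $p$ extended by zero outside $[-1,1]$), so the equality gives the functional equation $p(y_1+t)+p(y_1-t) = p(y_2+t)+p(y_2-t)$ for every $t\geq 0$. Taking Fourier transforms (valid because $p$ has compact support, so $\hat p$ is entire) and using $\hat p(-\xi) = \overline{\hat p(\xi)}$, I would factor the identity as $\hat p(\xi)(1-e^{ih\xi}) + \overline{\hat p(\xi)}\,e^{-2iy_1\xi}(1-e^{-ih\xi}) = 0$ with $h = y_2-y_1$. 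Simplifying the phases using $(1-e^{-ih\xi})/(1-e^{ih\xi}) = -e^{-ih\xi}$ then shows that $\hat p(\xi)\,e^{i(y_1+y_2)\xi/2}$ is real-valued on $\RR$, which is precisely the Fourier characterisation of $p$ being symmetric about $(y_1+y_2)/2$. The hypothesis $\EE_P X = 0$ forces the axis of symmetry to be $0$, i.e.\ $y_2 = -y_1$.

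Combining these pieces closes both parts of the proposition: in the symmetric case, pairs $y_1 \neq \pm y_2$ are separated by some $h_g$, giving density among symmetric continuous functions; in the non-symmetric case, a failure of separation for some $y_1\neq y_2$ would force $y_2 = -y_1$ together with $p$ symmetric about $0$, contradicting the hypothesis that $P$ is not symmetric. The hard part will be the Fourier step: the factored identity only pins the phase of $\hat p$ modulo $\pi$, so potential $\pm \pi$ jumps at zeros of $\hat p$ must be ruled out. I would fix the correct branch on a neighbourhood of $\xi = 0$ (where $\hat p(0) = 1 \neq 0$) and then exploit that $\hat p(\xi)\,e^{i(y_1+y_2)\xi/2}$ is entire (being the product of entire functions) and real on an interval, hence real on all of $\RR$ by analytic continuation.
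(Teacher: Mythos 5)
Your proposal is correct, and it reaches the conclusion by a genuinely different route than the paper in the key separation step. Both proofs reduce, via Proposition~\ref{prop:sw-eq} (quotiented by the self-isomorphism $\varphi(x)=-x$ in the symmetric case, exactly as you do), to showing that the family $h_g(y)=\int g(w(|y-x|))\,dP(x)$ separates the relevant points. The paper then works only with \emph{moments}: choosing $g=(\cdot)^2\circ w^{-1}$ gives $x^2=(x')^2$ from $\EE_P X=0$, and in the non-symmetric case it rules out $x'=-x$ by an induction on the odd moments $M_{2k+1}$ of $P$, using $g=(\cdot)^{2k+2}\circ w^{-1}$, a binomial expansion, and a Legendre-polynomial lemma (Lemma~\ref{lem:legendre}) to conclude that all odd moments vanishing would make $P$ symmetric. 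You instead identify the full pushforward measure $\tilde\mu_y$ of $P$ under $x\mapsto|y-x|$ and prove the stronger structural lemma that $\tilde\mu_{y_1}=\tilde\mu_{y_2}$ with $y_1\neq y_2$ forces $p$ to be symmetric about $(y_1+y_2)/2$, via the functional equation $p(y_1+t)+p(y_1-t)=p(y_2+t)+p(y_2-t)$ and Fourier transforms; the mean-zero hypothesis then pins the symmetry axis to $0$ and both cases follow at once. Your approach is more uniform and yields a cleaner characterization of when separation fails; the paper's is more elementary (polynomial test functions only) and tracks more directly what MLPs are asked to approximate. Two small polish points on your lemma: the functional equation is a priori only for $t\geq 0$, but its left- and right-hand sides are even in $t$, so it holds for all $t\in\RR$ before transforming; and the branch/phase worry you raise at the end is unnecessary, since dividing the transformed identity by $1-e^{ih\xi}$ off the discrete set $\{h\xi\in 2\pi\mathbb{Z}\}$ directly yields $\hat p(\xi)e^{i(y_1+y_2)\xi/2}=\overline{\hat p(\xi)e^{i(y_1+y_2)\xi/2}}$ there, and continuity of $\hat p$ extends this to all of $\RR$ without any phase unwrapping.
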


Finally, we look at radial kernels on the $d$-dimensional sphere $\Xx = \mathbb{S}^{d-1} = \{x \in \RR^d~|~\norm{x}=1\}$, an important example sometimes referred to as random geometric graphs \cite{Penrose-RGG}. In this case, the kernel only depends on the dot product $W(x,y) = w(x^\top y)$. Denoting by $d\tau$ the uniform measure on $\mathbb{S}^{d-1}$, it is known \cite{costas2014spherical,Dai-sphere} that functions in $L^2(d\tau)$ can be uniquely decomposed as $f(x) = \sum_{k\geq 0} f_k(x) = \sum_{k\geq 0} \sum_{j=1}^{N(d,k)} a_{k,j} Y_{k,j}(x)$ where $Y_{k,j}$ are \emph{spherical harmonics}, that is, homogeneous harmonic polynomials of degree~$k$ which form an orthonormal basis of $L^2(d\tau)$. We will say that such a function is \textbf{injectively decomposed} if the mapping $x \to [f_k(x)]_{k\geq 0}$ from $\mathbb{S}^{d-1}$ to $\ell_2(\RR)$ is injective.
Note that generically this is verified if~$f_k$ is non-zero for more than~$d-1$ distinct values of~$k > 0$, as this corresponds to solving an over-determined system of polynomial equations, but there may be degenerate situations where this is not enough. The proof of the following proposition is based on the well-known Legendre/Gegenbauer polynomial decomposition of spherical harmonics \cite{Dai-sphere} (see App.~\ref{app:rad-eq}).

\begin{proposition}\label{prop:dotprod-eq}
    Assume that $\Xx = \mathbb{S}^{d-1}$, that $W(x,y) = w(x^\top y)$ with continuous invertible $w:[-1,1] \to [0,1]$, and that $P = f d\tau$ has a density $f$ which is injectively decomposed. Then for one-hop input filtering c-SGNNs are dense in $\Cc(\Xx,\RR)$.
\end{proposition}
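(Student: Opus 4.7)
The plan is to invoke Proposition \ref{prop:sw-eq} and exhibit, for every pair $x \neq x' \in \mathbb{S}^{d-1}$, a c-SGNN $\Psi_{W,P}$ with $\Psi_{W,P}(x) \neq \Psi_{W,P}(x')$.

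I would first argue that the inner c-GNN $\Phi_{W,P}$, applied to the one-hop input $\eta_{W,P}(\cdot, y) = W(\cdot, y) = w((\cdot)^\top y)$, can implement any pointwise continuous transformation of its input. Indeed, using only zeroth-order filters $h_{ij}(T_{W,P}) = \text{const}$ (so that the operator $T_{W,P}$ is never actually invoked), every layer acts as a pointwise affine map composed with $\rho$, and combined with the terminal MLP $g$ this is nothing but a deep MLP applied pointwise. Since $W(x,y) \in [0,1]$, Pinkus's universal approximation theorem for $\rho$-MLPs supplies, for any continuous $\phi : [0,1] \to \RR$ and any $\varepsilon > 0$, an inner c-GNN such that $\abs{\Phi_{W,P}(W(\cdot,y))(x) - \phi(w(x^\top y))} \leq \varepsilon$ uniformly in $(x,y)$. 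Since $w$ is a continuous bijection $[-1,1] \to w([-1,1])$ (hence a homeomorphism), equivalently we can uniformly approximate the map $(x,y) \mapsto \tilde\phi(x^\top y)$ for any chosen $\tilde\phi \in \Cc([-1,1], \RR)$.

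Second, I would integrate against $P = f\, d\tau$. In the limit of exact realization, the inner pool signal of the c-SGNN is
\begin{equation*}
    I_0(x) = \int \tilde\phi(x^\top y) f(y)\, d\tau(y).
\end{equation*}
Decomposing $f = \sum_k f_k$ into spherical harmonics and applying the Funk--Hecke formula to each $f_k$ gives $I_0 = \sum_{k \geq 0} \lambda_k(\tilde\phi)\, f_k$, where $\lambda_k(\tilde\phi)$ is the Funk--Hecke (Gegenbauer) coefficient of $\tilde\phi$ at index $k$. Choosing $\tilde\phi$ to be the degree-$k_0$ Gegenbauer polynomial $P_{k_0}^{(d)}$, orthogonality of these polynomials yields $\lambda_k(P_{k_0}^{(d)}) = 0$ for $k \neq k_0$ and $\lambda_{k_0}(P_{k_0}^{(d)}) \neq 0$; hence $I_0 = c\, f_{k_0}$ for an explicit nonzero constant $c$. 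Since $f$ is injectively decomposed, for any $x \neq x'$ there exists $k_0$ with $f_{k_0}(x) \neq f_{k_0}(x')$; this choice of $\tilde\phi$ then yields $I_0(x) \neq I_0(x')$.

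Third, to finish I would take $\Phi'_{W,P}$ to be (approximately) the pointwise identity. Because $\rho$ satisfies Pinkus's conditions and MLPs with activation $\rho$ are dense in $\Cc(\mathbb R, \mathbb R)$ on any compact range, a shallow $\Phi'_{W,P}$ using trivial filters can approximate identity uniformly on $\{I_0(x)\}_{x \in \mathbb{S}^{d-1}}$; the standard two-channel ReLU formula $u = \rho(u) - \rho(-u)$ is the prototype. The final approximation step handles that both the inner MLP and the outer identity are only approximated: because $f$ is bounded and the integrand in $I_0$ depends continuously on $\tilde\phi$, a uniform $\varepsilon$-approximation of $\tilde\phi \approx P_{k_0}^{(d)}$ yields a uniform $\order{\varepsilon}$-approximation of $I_0$, so for $\varepsilon$ small enough the strict inequality $\Psi_{W,P}(x) \neq \Psi_{W,P}(x')$ is preserved. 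Proposition \ref{prop:sw-eq} then concludes.

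The main obstacle is the first step: recording precisely that the inner c-GNN, with zeroth-order filters and the terminal MLP $g$, collapses to a pointwise MLP so that Pinkus's theorem applies and produces the arbitrary $\tilde\phi$. Once that reduction is made, the Funk--Hecke identity turns the Stone--Weierstrass separation condition on $\mathbb{S}^{d-1}$ directly into the injective-decomposition hypothesis on $f$, and continuity/approximation arguments are routine.
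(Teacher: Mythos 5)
Your proposal is correct and follows essentially the same route as the paper's proof: apply Prop.~\ref{prop:sw-eq}, reduce the inner c-GNN to a pointwise MLP implementing (a Gegenbauer polynomial)${}\circ w^{-1}$, observe that integrating $P_k(x^\top y)$ against $f\,d\tau$ isolates the harmonic component $f_k(x)$, and conclude from the injective-decomposition hypothesis. The only cosmetic difference is that you invoke the Funk--Hecke formula where the paper uses the addition formula for $P_k$ directly (the two give the same identity $I_0 = c\,f_{k_0}$), and you spell out the approximation bookkeeping that the paper leaves implicit.
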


\section{Conclusion and outlooks}\label{sec:conclusion}

It is known that permutation-invariant GNNs fail to distinguish regular graphs of the same order, and permutation-equivariant GNNs return constant output on regular graphs. Similarly, their continuous counterparts suffer from the same flaw on random graph with constant or almost-constant degree function \cite{Magner-power}. However, we showed that the recently proposed SGNNs converge to continuous architectures which, like in the discrete world, are strictly more powerful than c-GNNs. Moreover, we proved that both permutation-invariant and permutation-equivariant c-SGNNs are universal on many random graph models of interest, including a large class of SBMs and random geometric graphs. 

We believe that our work opens many possibilities for future investigations.
We examined very simple strategies for choosing the inputs $E_q(A)$ of the SGNN, but more complex, spectral-based choices could exhibit better convergence properties and approximation power.
We showed universality in specific random graph models of interest, but deriving a more generic criterion is still an open question. More directly, most of our examples illustrate the one-dimensional case $\Xx\subset \RR$, and a generalization to multidimensional latent spaces would be an important step forward.
Besides SGNNs, architectures that include high-order tensors \cite{Maron-powerful} (sometimes called FGNN \cite{Lelarge-power}) are known to be more powerful than the WL test. Conditions for their convergence on large graphs are still open, in particular since high-order tensors lead to high-order operators that may be difficult to manipulate.
Finally, we remark that directly estimating the latent variables $x_i$ is a classical task in statistics, for which conditions of success have been derived for various approaches, e.g. for Spectral Clustering \cite{Lei-SC}. Comparing them with (S)GNNs is an important path for future work.

\bibliographystyle{plain}
\bibliography{power}

\appendix
\clearpage

\section{Convergence}\label{app:conv}

Let us start with some notations. Given a GNN $\Phi$, we define some bounds on its parameters that will be used in the multiplicative constants of the theorem. Recall that the filters are written $h_{ij}^{(\ell)}(\lambda) = \sum_{k=0}^\infty \beta_{ijk}^{(\ell)} \lambda^k$. We define $B_k^{(\ell)} = \pa{\beta_{ijk}^{(\ell)}}_{ji} \in \RR^{d_{\ell+1} \times d_\ell}$ the matrix containing the order-$k$ coefficients, and by $B_{k,\abs{\cdot}}^{(\ell)} = \pa{\abs{\beta_{ijk}^{(\ell)}}}_{ji}$ the same matrix with absolute value on all coefficients. Recall that $\norm{\cdot}$ is the operator norm for matrices. We define the following bounds:
\begin{align*}
&H^{(\ell)}_2 = \sum_k \norm{B_k^{(\ell)}}
&&H^{(\ell)}_{\partial, 2} = \sum_k \norm{B_k^{(\ell)}}k \\
&H^{(\ell)}_\infty = \norm{B_{0,\abs{\cdot}}^{(\ell)}} + \sum_{k\geq 1} \norm{B_{k}^{(\ell)}}
&&H^{(\ell)}_{\partial,\infty} = \sum_k \norm{B_k^{(\ell)}} k \sqrt{\log k}
\end{align*}
which all converge by our assumptions on the $\beta_k$. We may also denote $H_2$ by $H_{L^2(P)}$ for convenience but this quantity does not depend on $P$.
Note that, only for $H_\infty$, we use the spectral norm of the matrix $B_{0,\abs{\cdot}}$ with non-negative coefficients, which is suboptimal compared to using $B_{0}$. This is due to a part of our analysis where we do not operate in a Hilbert space but only in a Banach space $\Bb(\Xx)$, see Lemma \ref{lem:filter_partial}. We also define $\norm{b^{(\ell)}} = \sqrt{\sum_j (b_j^{(\ell)})^2}$ to measure the norm of the bias.

Given $X = \{x_1,\ldots, x_n\}$ and any dimension $d$, we denote by $S_X$ the sampling operator acting on functions $f:\Xx \to \RR^d$ defined by $S_X f \eqdef [f(x_1), \ldots, f(x_n)] \in \RR^{n \times d}$. We have $\norm{\frac{1}{\sqrt{n}}S_X f}_F \leq \norm{f}_\infty$.
Finally, given $X$ and $W$, we define $W(X) \eqdef (W(x_i,x_j))_{ij} \in \RR^{n\times n}$, and remark that $\frac{W(X)}{n} \circ S_X = S_X \circ T_{W,X}$.
In the deterministic edges case, the adjacency matrix $A$ is directly $W(X)$. In the random edges case, $A$ has expectation $W(X)$ (conditionally on $X$).

\subsection{Convergence of GNNs: proof of Theorem \ref{thm:conv-gnn}}\label{app:conv-gnn}

This proof is a variant of \cite{us}. We prove Theorem \ref{thm:conv-gnn} with the following error terms:
\begin{align}
    &R_1(n) = \frac{C_1\sqrt{d_\Xx} + C_2\sqrt{\log\pa{\frac{\sum_\ell d_\ell}{\rho}}}}{\sqrt{n}} 
    &&R_2(n) = \frac{C_\nu C_3}{\sqrt{\alpha_n n}} 
    &&R_3(n) = C_4 \sqrt{\frac{\log (1/\rho)}{n}}\, , \label{eq:conv-gnn-error}
\end{align}
where $R_2$ is present in the random edges case and $R_3$ in the permutation-invariant case, and the following constants:
\begin{align*}
    &C = L_g \prod_{\ell=0}^{M-1}H_2^{(\ell)} &&C_1 = \sum_{\ell=0}^{M-1} C^{(\ell)} H_{\partial, \infty}^{(\ell)} \\
    &C_2 = C_1(1+ D_\Xx L_W) &&C_3 = \sum_{\ell=0}^{M-1} C^{(\ell)} H_{\partial, 2}^{(\ell)} &&C_4 = C^{(M)}
\end{align*}
where the MLP $g$ is $L_g$-Lipschitz and $D_\Xx = \sup_{x,x'\in \Xx}m_\Xx(x,x')$ is the diameter of $\Xx$, and
\begin{align*}
    C^{(\ell)} &= L_g\pa{\prod_{s=\ell+1}^{M-1}H_2^{(s)}}\pa{C_f \prod_{s=0}^{\ell-1} H_\infty^{(s)} + \sum_{s=0}^{\ell-1} \norm{b^{(s)}}\prod_{p=s+1}^{\ell-1}H_\infty^{(p)}}
\end{align*}
with the conventions that an empty product is $1$ and an empty sum is $0$.

We begin the proof by the equivariant case, the invariant case will simply use an additional concentration inequality. We have
\begin{align*}
\text{MSE}_X\pa{\Phi_A(Z^{(0)}), \Phi_{W,P}(f)} &= \frac{1}{\sqrt{n}}\norm{\Phi_A(Z^{(0)}) - S_X \Phi_{W,P}(f^{(0)})}_F \\
&\leq L_g \frac{1}{\sqrt{n}} \norm{Z^{(M)} - S_X f^{(M)}}_F .
\end{align*}
We therefore seek to bound that last term.
Define the notation
\begin{equation*}
    \Delta^{(\ell)} = \frac{1}{\sqrt{n}}\sqrt{\sum_j \norm{ \sum_i \pa{h^{(\ell)}_{ij}\pa{\frac{A}{n}} S_X f^{(\ell)}_i - S_X h^{(\ell)}_{ij}(T_{W,P}) f^{(\ell)}_i}}^2} .
\end{equation*}
Then, using the Lipschitzness of $\rho$, Lemma \ref{lem:filter_partial} with $\norm{A/n}\leq 1$, and the fact that $S_X \circ \rho = \rho \circ S_X$, we have
\begin{align*}
    &\norm{Z^{(\ell+1)} -S_X f^{(\ell+1)}}_F \\
    &=\pa{\sum_j \norm{\rho\pa{\sum_{i=1}^{d_\ell} h^{(\ell)}_{ij}\pa{\frac{A}{n}} Z^{(\ell)}_{:,i} + b_{j}^{(\ell)}1_n } - S_X \rho\pa{\sum_{i=1}^{d_\ell} h^{(\ell)}_{ij}(T_{W,P}) f^{(\ell)}_i + b_{j}^{(\ell)} }}^2}^\frac12 \\
    &=\pa{\sum_j \norm{\rho\pa{\sum_{i=1}^{d_\ell} h^{(\ell)}_{ij}\pa{\frac{A}{n}} z^{(\ell)}_i + b_{j}^{(\ell)}1_n } - \rho\pa{S_X\pa{\sum_{i=1}^{d_\ell} h^{(\ell)}_{ij}(T_{W,P}) f^{(\ell)}_i + b_{j}^{(\ell)}}}}^2}^\frac12 \\
    &\leq\pa{\sum_j \norm{\sum_{i=1}^{d_\ell} h^{(\ell)}_{ij}\pa{\frac{A}{n}} Z^{(\ell)}_{:,i} - S_X h^{(\ell)}_{ij}(T_{W,P}) f^{(\ell)}_i}^2}^\frac12 \\
    &\leq\pa{\sum_j \norm{\sum_{i=1}^{d_\ell} h^{(\ell)}_{ij}\pa{\frac{A}{n}} \pa{Z^{(\ell)}_{:,i} - S_X f^{(\ell)}_i}}^2}^\frac12 \\
    &\quad + \pa{\sum_j \norm{\sum_{i=1}^{d_\ell} h^{(\ell)}_{ij}\pa{\frac{A}{n}} S_X f^{(\ell)}_i - S_X h^{(\ell)}_{ij}(T_{W,P}) f^{(\ell)}_i}^2}^\frac12 \\
    &\leq H^{(\ell)}_2 \norm{Z^{(\ell)} -S_X f^{(\ell)}}_F + \sqrt{n}\Delta^{(\ell)} .
\end{align*}
A recursion shows that, for all $Z^{(0)}$:
\begin{align}\label{eq:conv-inter1}
    \text{MSE}_X\pa{\Phi_A(Z^{(0)}), \Phi_{W,P}(f)} &\leq L_g\sum_{\ell=0}^{M-1} \Delta^{(\ell)}\prod_{s=\ell+1}^{M-1} H_2^{(s)}\notag \\
    &\quad + L_g\pa{\prod_{\ell=0}^{M-1} H_2^{(\ell)}} \textup{MSE}_X(Z^{(0)}, f^{(0)}) .
\end{align}

We now bound all $\Delta^{(\ell)}$ with high probability.
Recall that $\frac{W(X)}{n} \circ S_X = S_X \circ T_{W,X}$, and that we have $\norm{\frac{S_X}{\sqrt{n}} f} \leq \norm{f}_\infty$ and $\norm{T_{W,X}}_\infty \leq 1$. By Lemma \ref{lem:filter_partial} we have
\begin{align}
    \Delta^{(\ell)} &\leq \sqrt{\sum_j \norm{\sum_i \pa{h^{(\ell)}_{ij}\pa{\frac{A}{n}} - h^{(\ell)}_{ij}\pa{\frac{W(X)}{n}}}\frac{S_X}{\sqrt{n}} f^{(\ell)}_i}^2} \notag \\
    &\qquad + \sqrt{\sum_j\norm{\sum_i \frac{S_X}{\sqrt{n}}\pa{h^{(\ell)}_{ij}(T_{W,X}) - h^{(\ell)}_{ij}(T_{W,P})} f^{(\ell)}_i}^2} \notag \\
    &\leq H^{(\ell)}_{\partial,2}\norm{\frac{A-W(X)}{n}}\norm{f^{(\ell)}}_\infty \notag \\
    &\qquad + \sum_k \norm{B_k} \sqrt{\sum_i \pa{\sum_{p=0}^{k-1} \norm{(T_{W,X}-T_{W,P})T_{W,P}^{k-1-p} f_i^{(\ell)}}_\infty}^2}. \label{eq:conv-inter2}
\end{align}

The first term in \eqref{eq:conv-inter2} is $0$ in the deterministic edges case. In the random edges case, it is handled with a recent concentration inequality for Bernoulli matrices \cite{Lei-SC}, recalled in Theorem \ref{thm:lei} in App. \ref{app:add}. Since $\alpha_n \gtrsim \frac{\log n}{n}$, for any $\nu$, there is a constant $C_\nu$ such that, with probability $1-n^{-\nu}$ on the random edges (conditionally on $X$), $\norm{\frac{A-W(X)}{n}} \leq \frac{C_\nu}{\sqrt{\alpha_n n}}$. By the law of total probability, it is valid with joint probability $1- n^{-\nu}$ on both $X$ and the random edges.

We now bound the second term in \eqref{eq:conv-inter2}. Define $\rho_k = \frac{C \rho}{(k+1)^2 \sum_\ell d_\ell}$ with $C$ such that $\sum_{k \ell} d_\ell \rho_k = \rho/4$ (even when the filters are not of finite order). Using an application of Dudley's inequality detailed in Lemma \ref{lem:chaining}, applied with $U(x,y) = W(x,y)f(y)$ which is bounded by $\norm{f}_\infty$ and has Lipschitz constant $L_W \norm{f}_\infty$ in the first variable, and a union bound, we obtain with probability $1-\rho/4$ that: for all $i,\ell, k$, we have
\begin{align*}
    \norm{(T_{W,X} - T_{W,P}) T^k_{W,P} f^{(\ell)}_i}_\infty &\lesssim \frac{1}{\sqrt{n}}\norm{f_i^{\ell}}_\infty \pa{\sqrt{d_\Xx} + (1 + D_\Xx L_W)\sqrt{\log \rho_k^{-1}}} .
\end{align*}
Coming back to the second term of \eqref{eq:conv-inter2}, with probability $1-\rho/4$:
\begin{align*}
    \sum_k &\norm{B_k} \sqrt{\sum_i \pa{\sum_{p=0}^{k-1}\pa{\norm{(T_{W,X}-T_{W,P})T_{W,P}^{k-1-p} f_i^{(\ell)}}_\infty}^2}} \\
    &\lesssim \frac{\pa{\sqrt{d_\Xx} + (1 + D_\Xx L_W)\sqrt{\log \frac{\sum_\ell d_\ell}{\rho}}}}{\sqrt{n}} \sum_k \norm{B_k} k\sqrt{\log k} \sqrt{\sum_i \norm{f_i^{(\ell)}}_\infty^2} \\
    &\leq \frac{\pa{\sqrt{d_\Xx} + (1 + D_\Xx L_W)\sqrt{\log \frac{\sum_\ell d_\ell}{\rho}}}}{\sqrt{n}} H^{(\ell)}_{\partial, \infty} \norm{f^{(\ell)}}_\infty .
\end{align*}

At the end of the day we obtain that with probability $1-\rho$, for all $\ell$:
\begin{equation*}
\Delta^{(\ell)} \propto \norm{f^{(\ell)}}_\infty \pa{\frac{H^{(\ell)}_{\partial,2}}{\sqrt{\alpha_n n}} +\frac{ H_{\partial, \infty}^{(\ell)}\pa{\sqrt{d} + (1 + D_\Xx L_W)\sqrt{\log \frac{\sum_\ell d_\ell}{\rho}}}}{\sqrt{n}}} .
\end{equation*}
We then use Lemma \ref{lem:bound_cgnn} to bound $\norm{f^{(\ell)}}_\infty$ and conclude.

For the invariant case, we have
\begin{align*}
    \norm{\bar\Phi_A(Z^{(0)}) - \bar \Phi_{W,P}(f^{(0)})}&\leq \textup{MSE}_X(\Phi_A(Z^{(0)}), \Phi_{W,P}(f^{(0)})) \\
    &\qquad + L_g \norm{\frac{1}{n}\sum_{i=1}^n f^{(M)}(x_i) - \int f^{(M)}(x) dP(x)}
\end{align*}
We use a vector Hoeffding's inequality \cite[Lemma 4]{Rahimi-RF} and a bound on $\norm{f^{(M)}}_\infty$ (Lemma \ref{lem:bound_cgnn}) to conclude.

\subsection{Convergence of SGNNs}\label{app:conv-sgnn}

We prove Theorem \ref{thm:conv-sgnn} with the same form of error terms \eqref{eq:conv-gnn-error} where $R_i$ is replaced by $R'_i$ with modified multiplicative constants $C'_i$. Here we will have:
\begin{align*}
    C' &= D L_g \prod_{\ell=0}^{M-1}H_2^{(\ell)} \\
    C'_1 &= D L_\Phi + \sum_{\ell=0}^{M-1} H_{\partial, \infty}^{\prime (\ell)} C^{\prime (\ell)} + \sum_{\ell=0}^{M-1} H_{\partial, \infty}^{(\ell)} C^{(\ell)} \\
    C'_2 &= (1+D_\Xx L_W) \sum_{\ell=0}^{M-1} H_{\partial, \infty}^{\prime (\ell)} C^{\prime (\ell)} + D L_\Phi D_\Xx + D C_\Phi + \sum_{\ell=0}^{M-1} H_{\partial, \infty}^{(\ell)} (C^{(\ell)} + D_\Xx L^{(\ell)}) \\
    C'_3 &= \sum_{\ell=0}^{M-1} H_{\partial, 2}^{\prime (\ell)} C^{\prime (\ell)} + H_{\partial, 2}^{(\ell)} C^{(\ell)}, \qquad C'_4 = C^{\prime (M)}
\end{align*}
where $H^{\prime(\ell)}_\star$ is like $H^{(\ell)}_\star$ but for the weights in $\Phi'$, the final-layer MLP of $\Phi'$ is denoted by $g'$ with a Lipschitz constant $L_{g'}$, and:
\begin{align*}
    D &= L_{g'}\prod_{\ell=0}^{M-1} H_2^{\prime (\ell)} \\
    C^{\prime (\ell)} &= L_{g'}\pa{\prod_{s=\ell+1}^{M-1}H_2^{\prime (s)}}\pa{C_\Phi \prod_{s=0}^{\ell-1} H_\infty^{\prime (s)} + \sum_{s=0}^{\ell-1} \norm{b^{\prime(s)}}\prod_{p=s+1}^{\ell-1}H_\infty^{\prime (p)}} \\
    C^{(\ell)} &= D L_g \pa{\prod_{s=\ell+1}^{M-1} H_2^{(s)}} \tilde C^{(\ell)}_\infty \\
    L^{(\ell)} &= D L_g \pa{\prod_{s=\ell+1}^{M-1} H_2^{(s)}} \pa{L_W \tilde C^{(\ell)}_\infty + \sqrt{d_\ell} L_\eta \prod_{s=0}^{\ell-1} H_\infty^{(s)}} \\
    C_\Phi &= \norm{g(0)} + L_g \tilde C^{(M)}_\infty \\
    L_\Phi &= L_g \Bigg( L_\eta \prod_{\ell=0}^{M-1}\norm{B_0^{(\ell)}} + L_W \sum_{\ell=0}^{M-1} \pa{\prod_{s=\ell+1}^{M-1}\norm{B_0^{(s)}}}\tilde C^{(\ell)}_2\Bigg)
\end{align*}
with
\begin{align*}
    \tilde C^{(\ell)}_\star &= C_\eta \prod_{s=0}^{\ell-1} H^{(s)}_\star + \sum\limits_{s=0}^{\ell-1} \norm{b^{(s)}}\prod\limits_{p=s+1}^{\ell-1} H^{(p)}_\star \quad\text{for $\star\in \{2, \infty\}$}\, .
\end{align*}

We start by applying Theorem \ref{thm:conv-gnn} on the outer GNN $\Phi'$. Since the result is uniformly valid over all input of the GNN $Z^{(0)}$ with probability $1-\rho$:
\begin{equation}\label{eq:conv-sgnn-decomp1}
    \textup{MSE}_X(\Psi_A, \Psi_{W,P}) \leq D \textup{MSE}_X\pa{\frac{1}{n}\sum_q \Phi_A(E_q(A)), \int \Phi_{W,P}(\eta(\cdot,x))dP(x)} + R'(n)
\end{equation}
where, from Theorem \ref{thm:conv-gnn}, $D = L_{g'}\prod_{\ell=0}^{M-1} H_2^{\prime (\ell)}$ is $C_1$ but for the weights in $\Phi'$, and $R'(n)$ is the error term formed by summing various $R'_i(n)$, taking into account that by Lemma \ref{lem:bound_cgnn} the function inputed in $\Phi'$ is bounded by $C_\Phi$.

We must therefore bound the first term in \eqref{eq:conv-sgnn-decomp1}. We write
\begin{align}
    \textup{MSE}_X&\pa{\frac{1}{n}\sum_q \Phi_A(E_q(A)), \int \Phi_{W,P}(\eta(\cdot,x))dP(x)} \notag \\
    &\leq \textup{MSE}_X\pa{\frac{1}{n}\sum_q \Phi_A(E_q(A)), \frac{1}{n}\sum_q \Phi_{W,P}(\eta(\cdot, x_q))} \notag \\
    &\quad + \textup{MSE}_X\pa{\frac{1}{n}\sum_q \Phi_{W,P}(\eta(\cdot, x_q)), \int \Phi_{W,P}(\eta(\cdot,x))dP(x)} \notag \\
    &\leq \sup_q \textup{MSE}_X\pa{\Phi_A(E_q(A)), \Phi_{W,P}(\eta(\cdot, x_q))} \notag \\
    &\quad + \norm{\frac{1}{n}\sum_q \Phi_{W,P}(\eta(\cdot, x_q)) - \int \Phi_{W,P}(\eta(\cdot,x))dP(x)}_\infty . \label{eq:conv-sgnn-decomp2}
\end{align}

Let us start with the second term. By Lemma \ref{lem:bound_cgnn} and the Lipschitzness of $g$, $U(x,y)\eqdef \Phi_{W,P}(\eta(\cdot, y))(x)$ is $C_\Phi$-bounded and $L_\Phi$-Lipschitz with respect to $x$.

Hence, applying Lemma \ref{lem:chaining}: with probability $1-\rho$,
\begin{equation}
    \norm{\frac{1}{n}\sum_q \Phi_{W,P}(\eta(\cdot, x_q)) - \int \Phi_{W,P}(\eta(\cdot,x))dP(x)}_\infty \lesssim \frac{L_\Phi \sqrt{d_\Xx} + (L_\Phi D_\Xx + C_\Phi)\sqrt{\log(1/\rho)}}{\sqrt{n}} .
\end{equation}

For the first term in \eqref{eq:conv-sgnn-decomp2}, we introduce some notations. We denote by $f^{(\ell)}_i:\Xx \times \Xx \to \RR$ the bivariate function propagated at each layer of the inner part of the c-SGNN, as:
\begin{align}
    &f^{(0)}_0 =\eta
    &&f^{(\ell+1)}_j = \rho \pa{ \sum_i h_{ij}^{(\ell)}(T_{W,P})f_i^{(\ell)} + b^{(\ell)}_j}
\end{align}
where $T_{W,P}$ is here to be understood as an operator on $\Cc(\Xx \times \Xx)$ defined by $T_{W,P}[f](x,y) = \int W(x,z) f(z,y)dP(z)$. With these notations, $\Phi_{W,P}(\eta(\cdot, x_q)) = g(f^{(M)}(\cdot, x_q))$. Note that we still have $\norm{T_{W,P}}_\infty \leq 1$ for this version.
We perform the computation as in the proof of Theorem \ref{thm:conv-gnn} in \eqref{eq:conv-inter1} to obtain:
\begin{align}\label{eq:conv-sgnn-inter1}
    &\sup_q \text{MSE}_X\pa{\Phi_A(E_q(A)), \Phi_{W,P}(\eta(\cdot,x_q))} \notag \\
    &\qquad \leq L_g\sum_{\ell=0}^{M-1} \sup_q \Delta_q^{(\ell)}\prod_{s=\ell+1}^{M-1} H_2^{(s)} + L_g\pa{\prod_{\ell=0}^{M-1} H_2^{(\ell)}} \sup_q \textup{MSE}_X(E_q(A), \eta(\cdot,x_q))
\end{align}
with
\begin{equation}
    \Delta_q^{(\ell)} = \frac{1}{\sqrt{n}}\sqrt{\sum_j \norm{ \sum_i \pa{h^{(\ell)}_{ij}\pa{\frac{A}{n}} S_X f^{(\ell)}_i(\cdot,x_q) - S_X h^{(\ell)}_{ij}(T_{W,P}) [f^{(\ell)}_i(\cdot,x_q)]}}^2} .
\end{equation}
Then, again we decompose
\begin{align}
    \sup_q \Delta^{(\ell)}_q &\leq H^{(\ell)}_{\partial,2}\norm{\frac{A-W(X)}{n}}\norm{f^{(\ell)}}_\infty \notag \\
    &\qquad + \sum_k \norm{B_k} \sqrt{\sum_i \pa{\sum_{p=0}^{k-1} \norm{(T_{W,X}-T_{W,P})T_{W,P}^{k-1-p} [f_i^{(\ell)}]}_\infty}^2} \label{eq:conv-sgnn-inter2}
\end{align}
where we recall here that $f^{(\ell)}_i$ is a bivariate function.

Again, the first term is $0$ in the deterministic edges case, and otherwise by Theorem \ref{thm:lei} we have $\norm{\frac{A-W(X)}{n}} \leq C_\nu/\sqrt{\alpha_n n}$ with probability $1- n^{-\nu}$, and by Lemma \ref{lem:bound_csgnn} we have
\begin{equation*}
    \norm{f^{(\ell)}}_\infty \leq C_\eta \prod_{s=0}^{\ell-1} H^{(s)}_\infty + \sum\limits_{s=0}^{\ell-1} \norm{b^{(s)}}\prod\limits_{p=s+1}^{\ell-1} H^{(p)}_\infty .
\end{equation*}

Fix $k, \ell, i$ for now. We will apply Lemma \ref{lem:chaining} with $U:(\Xx \times \Xx) \times \Xx \to \RR$ defined as $U((x,x'), y) = W(x,y) f(y,x')$ for $f(y,x') = T_{W,P}^k [ f^{(\ell)}_i(\cdot, x')](y)$. Since $\norm{T_{W,P}}_\infty \leq 1$ we have $\norm{f}_\infty \leq \norm{f^{(\ell)}_i}_\infty$. Then, 
\begin{align*}
    \norm{T_{W,P}^k [ f^{(\ell)}_i(\cdot, x')] - T_{W,P}^k [ f^{(\ell)}_i(\cdot, x'')]}_\infty &= \norm{T_{W,P}^k [ f^{(\ell)}_i(\cdot, x')- f^{(\ell)}_i(\cdot, x'')]}_\infty \\
    &\leq \norm{f^{(\ell)}_i(\cdot, x')- f^{(\ell)}_i(\cdot, x'')}_\infty \\
    &\leq L_\eta m_\Xx(x,x') \prod_{s=0}^{\ell-1} H_\infty^{(s)}
\end{align*}
by Lemma \ref{lem:bound_csgnn}. Hence $U$ is bounded by $\norm{f^{(\ell)}_i}_\infty$ and $L_i^{(\ell)}$-Lipschitz with respect to $(x,x')$, with
\begin{equation}
    L_i^{(\ell)} = L_W \norm{f^{(\ell)}_i}_\infty + L_\eta \prod_{s=0}^{\ell-1} H_\infty^{(s)} .
\end{equation}
Finally, note that $\Xx \times \Xx$ is compact with covering numbers proportional to $\varepsilon^{-2d}$. Hence by Lemma \ref{lem:chaining} and a union bound, again defining $\rho_k$ as in the proof of Theorem \ref{thm:conv-gnn} such that $\sum_{ik\ell} \rho_k = \rho$: with probability $1-\rho$, we have simultaneously for all $i,k,\ell$:
\begin{align*}
    &\sup_x \norm{(T_{W,X} - T_{W,P}) T^k_{W,P} f^{(\ell)}_i(\cdot,x)}_\infty \\
    &\qquad\qquad \lesssim \frac{1}{\sqrt{n}} \pa{\norm{f^{(\ell)}_i}_\infty\sqrt{d_\Xx} + (\norm{f^{(\ell)}_i}_\infty + D_\Xx L^{(\ell)}_i)\sqrt{\log \rho_k^{-1}}} .
\end{align*}
Hence, as in the previous proof:
\begin{align*}
    &\sum_k \norm{B_k} \sqrt{\sum_i \pa{\sum_{p=0}^{k-1}\pa{\norm{(T_{W,X}-T_{W,P})T_{W,P}^{k-1-p} f_i^{(\ell)}}_\infty}^2}} \\
    &\qquad \qquad \leq \frac{\pa{\norm{f^{(\ell)}}_\infty\sqrt{d_\Xx} + (\norm{f^{(\ell)}}_\infty + D_\Xx L^{(\ell)})\sqrt{\log \frac{\sum_\ell d_\ell}{\rho}}}H^{(\ell)}_{\partial, \infty}}{\sqrt{n}} 
\end{align*}
where $L^{(\ell)} = (\sum_i (L_i^{(\ell)})^2)^\frac12$.

\subsection{Proof of Prop. \ref{prop:two-hop-input}}\label{app:input}

The error can be written as
\begin{align*}
    \frac{1}{\sqrt{n}}\norm{A^2 e_q/n - S_X T_{W,P}(W(\cdot, x_q))}_2 &\leq \frac{1}{\sqrt{n}}\norm{A^2 e_q/n - W(X)^2 e_q / n}_2 \\
    &\quad + \norm{(T_{W,X}- T_{W,P})W(\cdot, x_q)}_\infty .
\end{align*}
Using chaining as in the previous section, we have 
\begin{align*}
    &\sup_x \norm{(T_{W,X} - T_{W,P}) W(\cdot,x)}_\infty \lesssim \frac{1}{\sqrt{n}} \pa{\sqrt{d_\Xx} + (1 + D_\Xx L_W)\sqrt{\log 1/\rho}}\, . 
\end{align*}

The first term is $0$ in the deterministic edges case, and otherwise:
\begin{align*}
    \frac{1}{\sqrt{n}}\norm{A^2 e_q/n - W(X)^2 e_q / n}_2 &= \left(\frac{1}{n}\sum_i \left(\frac{1}{n} \sum_j a_{ij} a_{jq} - \frac{1}{n} \sum_j W(x_i, x_j) W(x_j, x_q)\right)^2\right)^\frac12 \\
    &\leq \left(\frac{1}{n}\sum_{i\neq q} \left(\frac{1}{n} \sum_j a_{ij} a_{jq} - W(x_i, x_j) W(x_j, x_q)\right)^2\right)^\frac12\\
    &\quad + \frac{1}{\sqrt{n}} \left(\frac{1}{n} \sum_j a_{jq}^2 - \frac{1}{n} \sum_j W(x_j, x_q)^2\right)^2 .
\end{align*}
Now, by Bernstein inequality with 
\[
    Var(a_{ij} a_{jq}) \leq \EE(a_{ij}^2 a_{jq}^2) = \alpha_n^{-2} \EE(a_{ij} a_{jq}) = \alpha_n^{-2}W(x_i, x_j) W(x_j, x_q)
\]
for $i\neq q$ and a union bound, with proba $1-\delta$, we have:
\begin{align*}
    &\abs{\frac{1}{n} \sum_j a_{ij} a_{jq} - \frac{1}{n} \sum_j W(x_i, x_j) W(x_j, x_q)} \lesssim \frac{\sqrt{\log(n/\delta)}}{\alpha_n \sqrt{n}} \text{ for all $q$ and $i\neq q$.}
\end{align*}
Since$\abs{\frac{1}{n} \sum_j a_{jq}^2 - \frac{1}{n} \sum_j W(x_j, x_q)^2} \leq 2$, we have
\begin{equation}
    \frac{1}{\sqrt{n}}\sup_q \norm{A^2 e_q/n - S_X T_W(W(\cdot, x_q))}_2 \lesssim \frac{\sqrt{\log(n/\delta)}}{\alpha_n \sqrt{n}} + \frac{\sqrt{d_\Xx} + (1 + D_\Xx L_W)\sqrt{\log 1/\rho}}{\sqrt{n}}
\end{equation}

\section{Approximation power: invariant case}\label{app:inv}

\subsection{Application of Stone-Weierstrass}

\begin{proof}[Proof of Prop. \ref{prop:sw-inv}]
    We do the proof for cSGNNws, it is exactly similar for cGNNs.

    This is a direct application of Lemma \ref{lem:SW}: for any two cSGNNs $\bar \Psi:\Ww \times \Pp \to \RR^d$, $\bar \Psi':\Ww \times \Pp \to \RR^{d'}$, their concatenation $[\bar \Psi, \bar \Psi']:\Ww \times \Pp \to \RR^{d+d'}$ is also a cSGNN (if they do not use the same input transforms $\eta, \eta'$, one can concatenate $\eta'' = [\eta,\eta']$), and for any MLP $g$, $g \circ \bar \Psi$ is also a cSGNN.

    One must just check that cSGNNs are continuous with respect to $\norm{\cdot}_\infty + \normTV{\cdot}$ on $\Ww \times \Pp$:
    \begin{itemize}
        \item $(W,P) \mapsto \eta$ is continuous by assumption ;
        \item for any $f_{W,P}\in \Cc(\Xx, \RR^d)$ continuously indexed by $(W,P)$,
        \begin{align*}
            \norm{T_{W,P}[f_{W,P}] - T_{W',P'}[f_{W',P'}]}_\infty &\leq \norm{f_{W,P}}_\infty (\norm{W-W'}_\infty + \normTV{P - P'}) \\&\quad+ \norm{f_{W,P}-f_{W',P'}}_\infty
        \end{align*}
        and similarly for $\norm{\int f_{W,P} dP - \int f_{W',P'} dP'}$ ;
        \item the non-linearity $\rho$ is Lipschitz.
    \end{itemize}
\end{proof}

\subsection{cSGNNs are more powerful than cGNNs}\label{app:gnn-vs-sgnn-inv}

\begin{proof}[Proof of Theorem \ref{thm:gnn-vs-sgnn-inv}]
    By construction, cGNNs are included in cSGNNs, since one can take $\Phi=0$ as the input GNN before pooling in \eqref{eq:sgnn}.

    To prove strict inclusion, we will construct two models $(W,P), (W',P')$ such that, for any cGNN we have $\bar\Phi_{W,P}=\bar\Phi_{W',P'}$, but there exists a cSGNN such that $\bar\Psi_{W,P}=\bar\Psi_{W',P'}$. We do the proof in the random edges case with two-hop input filtering $\eta_{W,P} = T_{W,P}(W)$, since such cSGNNs can of course also be constructed in the deterministic edges case.

    Since $\Xx$ is not a singleton, one can can single out two arbitrary elements $x,x'$ and take $P$ as a sum of two Diracs over them, which is equivalent to considering that $\Xx = \{x,x'\}$ (since any invariant architecture involves a final integration by $P$, it is useless to consider $W$ outside of the support of $P$). This results in a two-community SBM, for which $P$ can be represented as a $2$-vector on the simplex and $W$ as a $2$-by-$2$ symmetric matrix. We then consider a family of SBMs indexed by $\gamma \in [0,1]$:
    \begin{equation*}
        P = \pa{\begin{matrix} 1/3 \\ 2/3 \end{matrix}}, \quad W_\gamma = \pa{\begin{matrix} \gamma & \frac{1-\gamma}{2} \\ \frac{1-\gamma}{2} & \frac{1+\gamma}{4} \end{matrix}} .
    \end{equation*}
    It is not hard to see that $T_{W_\gamma,P}[1] = 1/3 \cdot 1$ for any $\gamma$. Therefore, for any cGNN $\Phi$, the function propagated inside its layers is always constant, and does not depend on $\gamma$. That is, $\bar \Phi_{W_\gamma,P} = \bar \Phi_{W_0,P}$ for any $\gamma$. On the other hand, consider the following SGNN:
    \begin{equation*}
        \bar \Psi_{W,P} = \int_x \int_y f(T_W(W(\cdot,y))) dP(y)dP(x)
    \end{equation*}
    where $f$ is an MLP. By the universality theorem, $f$ can approximate $x \to x^2$, for which we obtain:
    \begin{align*}
        \bar \Psi_{W_\gamma,P} &\approx 1/16*\gamma^4 - 1/12*\gamma^3 + 1/24*\gamma^2 - 1/108*\gamma + 17/1296 .
    \end{align*}
    This is not a constant function, so we can always find $\gamma,\gamma'$ such that $\bar \Psi_{W_\gamma,P} \neq \bar \Psi_{W_{\gamma'},P}$, which concludes the proof.
\end{proof}

\subsection{SBMs}\label{app:sbm-inv}

\begin{proof}[Proof of Prop. \ref{prop:sbm-inv}]
    We apply Prop. \ref{prop:sw-inv}. We fix $P$ as an incoherent vector in the $k$-simplex, and define $\Mm = \{(W,P): W \in S_k([0,1])\}$ which is indeed compact. It therefore suffices to show that cSGNNs separates points in $\Mm$.

    We proceed by contraposition: assume that $W,W'$ are such that $\bar \Psi_{W,P} = \bar \Psi_{W',P}$ for any cSGNN $\Psi$. We must show that necessarily $W=W'$.
    We look at cSGNNs of the form 
    \begin{align*}
        \Psi_{W,P} &= \int f_1\pa{\int f_0\pa{W(x, y)}dP(y)}dP(x) \\
        &= \sum_i P_i f_1(\sum_j P_j f_0(W_{ij})) = \sum_i P_i f_1(\sum_j P_j f_0(W'_{ij}))
    \end{align*}
    where $f_0,f_1$ are MLPs. By the universality theorem, they can approximate any continuous function. Pick any $f_0$. Then $f_1$ can be chosen as to take only values in $\{0,1\}$ on the discrete set $\{\sum_j P_j f_0(W_{ij}), \sum_j P_j f_0(W'_{ij})\}_i$ of size $2K$. Moreover, if there was an index $i_0$ such that $\sum_j P_j f_0(W_{i_0j})\neq \sum_j P_j f_0(W'_{i_0j})$, $f_1$ can be chosen to give different values on them. Then, defining $s_i = f_1(\sum_j P_j f_0(W_{ij})) -  f_1(\sum_j P_j f_0(W'_{ij})) \in \{-1,0,1\}$, we have both $s_{i_0} \neq 0$ and
    \begin{equation*}
        \sum_i P_i s_i = \sum_i P_i \left(f_1(\sum_j P_j f_0(W_{ij})) -  f_1(\sum_j P_j f_0(W'_{ij}))\right) = 0
    \end{equation*}
    which contradicts the incoherence of $P$. So, for all $f_0$ and $i$, we have $\sum_j P_j f_0(W_{ij}) = \sum_j P_j f_0(W'_{ij})$. By the exact same reasoning on $f_0$, we obtain that for all $i,j$, $W_{ij} = W'_{ij}$, which concludes the proof.

    For the failure of c-GNNs, the proof is immediate using the example SBM in the proof of Theorem \ref{thm:gnn-vs-sgnn-inv}, since $P=[1/3,2/3]$ is indeed incoherent.
\end{proof}

\subsection{Decomposed kernel}

\begin{proof}[Proof of Prop. \ref{prop:decomp-inv}]
    Applying Prop. \ref{prop:sw-inv} with $\Mm = \{W\} \times \tilde \Pp$, it suffices to show that cSGNNs separate the distributions in $\tilde \Pp$.
    By contraposition, assume that $P,P' \in \tilde \Pp$ are such that $\bar\Psi_{W,P} = \bar\Psi_{W,P'}$ for any cSGNN, and we want to prove that necessarily $P=P'$.
    
    We look at cSGNN of the form $P \mapsto \int f_1(\int f_0(W(x,y))dP(y))dP(x)$, where $f_0,f_1$ are MLPs, that can approximate any continuous functions by the universality theorem.
    Since $u$ is continuous and injective, it is well-known that it has a continuous inverse on its image. Hence $f_0$ can be chosen to approximate $f_0 \approx u^{-1}$. By choosing $f_1$ to approximate $x \to x^k$, we obtain that:
    \begin{equation*}
        \int \pa{v(x) + \EE_P v}^k dP(x) = \int \pa{v(x) + \EE_{P'} v}^k dP'(x)\, .
    \end{equation*}
    Taking $k=1$ we obtain that $\EE_P v = \EE_{P'} v$, and by an easy recursion we have $\EE_P v^k = \EE_{P'} v^k$ for all $k$. Since $v$ is invertible and polynomial functions are universal approximators on compacts one can write $v^{-1}(x) = \sum_k a_k x^k$ and $x = \sum_k a_k v(x)^k$, such that $\EE_P X^k = \EE_{P'} X^k$. Again, by the universality of polynomial functions, $\EE_P f = \EE_{P'} f$ for any continuous function, which is well-known to be equivalent to $P=P'$ and concludes the proof.
\end{proof}

\subsection{Radial kernel}

\begin{proof}[Proof of Prop. \ref{prop:rad-inv}]
    We proceed as in the proof of Prop. \ref{prop:decomp-inv} above: assuming $P,P' \in \tilde \Pp$ are such that $\bar\Psi_{W,P} = \bar\Psi_{W,P'}$ for any cSGNN, we want to prove that necessarily $P=P'$. We look at cSGNN of the form $P \mapsto \int f_1(\int f_0(W(x,y))dP(y))dP(x)$, where $f_0,f_1$ are MLPs. Since $w$ is injective $f_0$ can approximate $(x \to x^2) \circ w^{-1}$. By choosing $f_1$ to approximate $x \to x^k$, and since $P,P'$ are centered we obtain
    \begin{equation*}
        \int \pa{x^2 + \EE_P X^2}^k dP(x) = \int \pa{x^2 + \EE_{P'} X^2}^k dP'(x)\, .
    \end{equation*}
    Taking $k=1$ we have $\EE_P X^2 = \EE_{P'} X^2$, and by an easy recursion $\EE_P X^{2k} = \EE_{P'} X^{2k}$ for all $k$. Since $P,P'$ have $0$ odd-order moments, $\EE_P X^{k} = \EE_{P'} X^{k}$ for all $k$, from which we can conclude $P=P'$ as in the previous proof.
\end{proof}

\section{Approximation power: equivariant case}\label{app:eq}

\subsection{Application of Stone-Weierstrass}

\begin{proof}[Proof of Prop. \ref{prop:sw-eq}]
    As the proof of Prop. \ref{prop:sw-inv}, this is a direct application of Lemma \ref{lem:SW}: the set of cSGNNs is closed by concatenation and composition with an MLP, $\Xx$ is compact, and any equivariant cSGNN in continuous since by assumption $W$, $\eta_{W,P}$ and $\rho$ are.
\end{proof}

\subsection{cSGNNs are more powerful than cGNNs}

\begin{proof}[Proof of Theorem.~\ref{thm:gnn-vs-sgnn-eq}]
    As in the proof of Theorem~\ref{thm:gnn-vs-sgnn-inv} in App.~\ref{app:gnn-vs-sgnn-inv}, non-strict inclusion is immediate. To prove strict inclusion, as in the proof of Theorem~\ref{thm:gnn-vs-sgnn-inv} we consider again the same $2$-community SBM but for $\gamma=1/2$:
    \begin{equation*}
        P = \pa{\begin{matrix} 1/3 \\ 2/3 \end{matrix}}, \quad W = \pa{\begin{matrix} 1/2 & 1/4 \\ 1/4 & 3/8 \end{matrix}} .
    \end{equation*}
    Again any c-GNN would return a constant function $\Phi_{W,P}(1) = \Phi_{W,P}(2)$, while if we consider the following c-SGNN for two-hop filtering:
    \begin{equation*}
        \Psi_{W,P} = \int f(T_W(W(\cdot,y))) dP(y)
    \end{equation*}
    with $f$ an MLP that approximates $x\to x^2$, we obtain $\Psi_{W,P}(1)\approx 1/8$ and $\Psi_{W,P}(2) \approx 11/96$, hence a non-constant function.
\end{proof}

\subsection{SBMs}

\begin{proof}[Proof of Prop.~\ref{prop:sbm-eq}]
    We treat the two-hop filtering case, since they are included in one-hop architectures. By Prop.~\ref{prop:sw-eq}, we must prove the separation of elements of $\Xx$, which here are discrete community labels $\Xx = \{1,\ldots, K\}$. Fix $P\in \Delta^{K-1}$ incoherent and $W \in S_K$ invertible. Assume that $k,k'$ are two communities such that $\Psi_{W,P}(k) = \Psi_{W,P}(k')$ for all $\Psi$ with two-hop input filtering. We want to show that necessarily $k=k'$. By assumption, we have:
    \begin{align*}
        \sum_i f\pa{\sum_j W_{kj}W_{ji}P_j} P_i = \sum_i f\pa{\sum_j W_{k'j}W_{ji}P_j} P_i
    \end{align*}
    for any MLP $f$. As in the proof of Prop.~\ref{prop:sbm-inv} in App.~\ref{app:sbm-inv}, $f$ can approximate a function that is $\{0,1\}$-valued on its inputs such that, if there is an index $i$ such that $\sum_j W_{kj}W_{ji}P_j \neq \sum_j W_{kj}W_{ji}P_j$, then the incoherency of $P$ is contradicted. Hence, for all $i$, $\sum_j W_{kj}W_{ji}P_j = \sum_j W_{kj}W_{ji}P_j$, or in other words:
    \begin{equation*}
        W \cdot (P \odot (W_{k,:} - W_{k',:})) = 0 .
    \end{equation*}
    Since $W$ is invertible and $P$ has only non-zero coordinates (by incoherency), we obtain $W_{k,:} = W_{k',:}$. Since $W$ is invertible it has necessarily distinct columns, so $k=k'$, which concludes the proof.

    For the failure of c-GNNs we use the example of the proof of Theorem.~\ref{thm:gnn-vs-sgnn-eq}, for which $P$ is incoherent, $W$ is invertible, but any c-GNN is constant.
\end{proof}

\subsection{Additive kernel}

\begin{proof}[Proof of Prop.~\ref{prop:decomp-eq}]
    Again we apply Prop.~\ref{prop:sw-eq}. Assume that $x,x'$ are such that $\Psi_{W,P}(x) = \Psi_{W,P}(x')$ for all one-hop c-SGNN. In particular,
    \begin{equation*}
        \int f\pa{u(v(x)+v(y))} dP(y) = \int f\pa{u(v(x')+v(y))} dP(y)
    \end{equation*}
    for all MLP $f$. By taking $f = u^{-1}$, we obtain $v(x) = v(x')$, which leads to $x=x'$ by assumption of injectivity and concludes the proof.
\end{proof}

\subsection{Radial kernel}\label{app:rad-eq}

\begin{proof}[Proof of Prop.~\ref{prop:rad-eq}]
    Again we apply Prop.~\ref{prop:sw-eq}. Let $x,x'$ such that $\Psi_{W,P}(x) = \Psi_{W,P}(x')$ for all c-SGNNs. We want to prove that: if $P$ is symmetric, then $x=x'$ or $x=-x'$ (i.e. we quotient $[-1,1]$ by symmetry), and if $P$ is not symmetric, then necessarily $x=x'$.
    
    By assumption $\int f(W(x,y))dP(y) = \int f(W(x',y))dP(y)$ for all MLP $f$.

    \paragraph{If $P$ is symmetric.} By choosing $f = (\cdot)^2 \circ w^{-1}$, we have
    \begin{align*}
    0 = \EE(x-X)^2 - \EE(x'-X)^2 = x^2 + 2x \EE X + \EE X^2 - (x')^2 - 2x' \EE X - \EE X^2 = x^2-(x')^2
    \end{align*}
    which is indeed $x'=x$ or $x'=-x$.

    \paragraph{If $P$ is not symmetric.} By the previous reasoning, we still have $x'=x$ or $x'=-x$, however, we must now show that the case $x'=-x$ is not possible. By contradiction, assume $x'=-x$ (and $x\neq 0$). Denote $M_k = \EE_P X^k$ the kth moment of $P$. By Lemma \ref{lem:legendre}, $P$ is symmetric iff $M_{2k+1}=0$ for all $k$. We are going to show that this is the case by recursion: that is true for $k=0$ by assumption, and if $M_{2\ell+1}=0$ for all $\ell \leq k-1$, by taking $f_0(t)=t^{2k+2}$:
    \begin{align*}
        0 = \EE (x-X)^{2(k+1)} - \EE (x+X)^{2(k+1)} &= \sum_{\ell=0}^{2(k+1)} \binom{2(k+1)}{\ell} x^\ell (-1)^\ell M_{2(k+1)-\ell} \\
        &\qquad- \left(\sum_{\ell=0}^{2(k+1)} \binom{2(k+1)}{\ell} x^\ell M_{2(k+1)-\ell}\right) \\
        &= \sum_{\ell=1}^{k+1} \binom{2(k+1)}{2\ell-1} x^{2\ell-1} M_{2(k+1-\ell)+1} \\
        &= 2(k+1) x M_{2k+1}
    \end{align*}
    and therefore $M_{2k+1}=0$, and $P$ is symmetric, which is a contradiction. Therefore, necessarily $x'=x$, which completes the proof.
\end{proof}

\begin{proof}[Proof of Prop.~\ref{prop:dotprod-eq}]
    Again we apply Prop.~\ref{prop:sw-eq}. Let $x, x' \in \mathbb S^{d-1}$ such that $\Psi_{W,P}(x) = \Psi_{W,P}(x')$ for all c-SGNNs. We want to prove that $x=x'$.
    In particular,
    \begin{equation}
    \label{eq:dotprod_eq_assumption}
    \int g(w(x^\top y)) dP(y) = \int g(w(x'^\top y)) dP(y),
    \end{equation}
    for all MLP $g$.

    Recall that we have assumed that that $P$ has a density $f$ decomposed as
    $f(x) = \sum_{k \geq 0} f_k(x) = \sum_{k \geq 0} \sum_{j=1}^{N(d,k)} a_{k,j} Y_{k,j}(x)$, where $Y_{k,j}$ are spherical harmonics.

    Let~$P_k$ denote the Legendre/Gegenbauer polynomial of degree~$k$, which satisfies the addition formula
    \begin{equation*}
    P_k(x^\top y) = \frac{1}{N(d,k)} \sum_{j=1}^{N(d,k)} Y_{k,j}(x) Y_{k,j}(y).
    \end{equation*}
    Then, taking~$g = N(d,k) P_k \circ w^{-1}$, note that we have
    \begin{align*}
    \int g(w(x^\top y)) dP(y) &= N(d,k)\int P_k(x^\top y) f(y) d \tau(y) \\
        &= \sum_j Y_{k,j}(x) \langle f, Y_{k,j} \rangle_{L^2(d\tau)} \\
        &= f_k(x).
    \end{align*}
    Thus, \eqref{eq:dotprod_eq_assumption} implies $f_k(x) = f_k(x')$ for all~$k$. By assumption of injectivity of $x \to [f_k(x)]_k$, necessarily $x=x'$, which concludes the proof.
\end{proof}

\section{Additional material}\label{app:add}

\begin{lemma}\label{lem:SW}
    Let $(\Xx,d)$ be a compact metric space, $\Ff \subset \cup_{d\geq 1}\Cc(\Xx,\RR^d)$ be a subspace of continuous multivariate functions on $\Xx$ that is closed by concatenation, that is, $f,f' \in \Ff \Rightarrow [f,f'] \in \Ff$. Define $\Ff_\rho = \left\{ g \circ f ~|~ f \in \Ff,~\text{$g:\RR^d\to\RR$ is a MLP with non-linearity $\rho$} \right\}$, where $\rho$ is not polynomial. If $\Ff$ separates points, that is, $\forall x \neq x', \exists f \in \Ff, f(x) \neq f(x')$, then $\Ff_\textup{MLP}$ is dense in $\Cc(\Xx,\RR)$ for the supremum norm.
\end{lemma}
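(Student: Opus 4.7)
The plan is to apply the classical Stone-Weierstrass theorem combined with the Pinkus universal approximation theorem for MLPs, following Hornik's trick that circumvents the fact that $\Ff_\rho$ is not itself an algebra (products of MLPs need not be MLPs applied to a single input).

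First, I would introduce the auxiliary class
\begin{equation*}
\mathcal{A} = \{ g \circ f : f \in \Ff \cap \Cc(\Xx,\RR^d),\ g \in \Cc(\RR^{d}, \RR),\ d \geq 1 \}
\end{equation*}
of compositions of elements of $\Ff$ with arbitrary continuous scalar functions, and check that $\mathcal{A}$ is a subalgebra of $\Cc(\Xx,\RR)$ containing the constants. The key point is that the concatenation hypothesis on $\Ff$ lets us realize sums and products inside the outer continuous map: for $g \circ f$ and $g' \circ f'$ in $\mathcal{A}$, set $[f,f'] \in \Ff$, $\tilde g_+(u,v) = g(u)+g'(v)$, $\tilde g_\times(u,v) = g(u)g'(v)$, both continuous, so sums and products lie in $\mathcal{A}$; constants are obtained by taking $g$ constant.

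Second, I would verify that $\mathcal{A}$ separates points. Given $x \neq x'$ in $\Xx$, the separation hypothesis provides some $f \in \Ff$ with $f(x) \neq f(x')$ in $\RR^d$; then any continuous $g:\RR^d \to \RR$ with $g(f(x)) \neq g(f(x'))$ (for instance a suitable linear form) gives $g \circ f \in \mathcal{A}$ separating $x$ and $x'$. The Stone-Weierstrass theorem then yields that $\mathcal{A}$ is dense in $\Cc(\Xx,\RR)$ for the sup norm.

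Third, I would approximate each element of $\mathcal{A}$ by an element of $\Ff_\rho$. Fix $g \circ f \in \mathcal{A}$ and $\varepsilon > 0$. Because $\Xx$ is compact and $f$ is continuous, $K = f(\Xx) \subset \RR^d$ is compact. By Pinkus' theorem \cite{Pinkus-MLP}, applicable since $\rho$ is non-polynomial, there is an MLP $\hat g$ with $\sup_{u \in K}|g(u) - \hat g(u)| \leq \varepsilon$, so $\|g \circ f - \hat g \circ f\|_\infty \leq \varepsilon$ and $\hat g \circ f \in \Ff_\rho$. Thus $\Ff_\rho$ is dense in $\mathcal{A}$, and by transitivity dense in $\Cc(\Xx,\RR)$.

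The only non-routine step is the algebra verification: closure of $\mathcal{A}$ under pointwise products must genuinely use concatenation of the inner maps (absorbing the product into the outer continuous function), which is precisely what the hypothesis on $\Ff$ supplies; everything else is a direct invocation of Stone-Weierstrass and Pinkus.
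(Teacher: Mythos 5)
Your proof is correct but follows a genuinely different route from the paper's. Both rest on the Stone--Weierstrass theorem and both must deal with the same obstruction --- that $\Ff_\rho$ is not closed under pointwise multiplication --- but they resolve it differently. The paper uses Hornik's trick: it specializes the outer maps to cosine MLPs, where a product-to-sum identity for cosines combined with concatenation of the inner maps makes $\Ff_{\cos}$ a genuine algebra; Stone--Weierstrass is applied to $\Ff_{\cos}$, and $\rho$-MLPs are then argued to be dense in $\Ff_{\cos}$ by the universality of MLPs. You instead enlarge the outer maps to arbitrary continuous functions, so that your class $\mathcal{A}$ is trivially an algebra (sums and products are absorbed into the outer continuous map via concatenation), apply Stone--Weierstrass to $\mathcal{A}$, and invoke Pinkus' theorem once at the end, uniformly on the compact image $f(\Xx)$. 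Your version is somewhat more modular: it avoids the trigonometric identity entirely, and it makes explicit the compactness-of-$f(\Xx)$ argument that the paper's final step (replacing cosine networks by $\rho$-MLPs) also relies on but states only briefly. The paper's version, in exchange, stays within explicitly parameterized network families throughout. Both arguments are complete.
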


\begin{proof}
    The proof uses the classical Stone-Weierstrass theorem: an algebra of continuous functions that separates points is dense in the space of continuous functions (for the supremum norm).

    The main point is to check that $\Ff_\rho$ is an algebra. It is closed by linear combination: for all $g,g'$ MLPs, there is a $g''$ such that $g\circ f + g'\circ f' = g'' \circ [f, f']$ and $\Ff$ is closed by concatenation. Closure by multiplication is not true in general, however, following \cite{Hornik-mlp}, this is true when $\rho = \cos$: since $\cos(a)\cos(b) = \frac12(\cos(a+b) - \cos(a-b))$, we have: for $g(x) = \sum_i a_i \cos (b_i^\top x + c_i)$ and similarly $g'$,
    \begin{align*}
        (g\circ f) \cdot (g' \circ f') &= \sum_{ij} a_i a'_j \cos\left( b_i^\top f(x) + c_i \right)\cos\left( (b'_j)^\top f'(x) + c'_j \right) \\
        &= \sum_{ij} a_i a'_j \frac12 \Big(\cos\left( [b_i,b_j]^\top [f,f'](x) + c_i + c'_j \right) \\&\qquad- \cos\left( [b_i,-b_j]^\top [f,f'](x) + c_i - c'_j \right) \Big) \\
        &= g'' \circ [f,f']
    \end{align*}
    for a certain MLP $g''$.

    Hence $\Ff_{\cos}$ is an algebra. Moreover, it separates points: for $x \neq x'$, by hypothesis there is a $f\in \Ff$ such that $f(x) \neq f(x')$, and by the universality theorem applied to MLPs, this is also true for some $g \circ f$.

    To conclude the proof, we note that, by the universality theorem of MLPs, $\cos$ itself can be approached by a MLP with any non-polynomial non-linearity $\rho$, so that $\Ff_\rho$ is dense in $\Ff_{\cos}$.
\end{proof}

\begin{lemma}\label{lem:legendre}
    A piecewise continuous function $p$ on $[-1, 1]$ is symmetric iff $\int t^{2k+1} p(t) dt=0$ for all $k$.
\end{lemma}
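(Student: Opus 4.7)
The plan is to handle the two directions of the equivalence separately, with the forward direction being essentially immediate and the reverse direction reducing to density of polynomials via a symmetrization trick.

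For the ``only if'' direction, if $p$ is symmetric, i.e.\ $p(-t) = p(t)$, then for each $k \geq 0$ the integrand $t^{2k+1} p(t)$ is an odd piecewise continuous function on $[-1,1]$, so its integral vanishes by a simple change of variables $t \mapsto -t$.

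For the ``if'' direction, I would decompose $p$ into its even and odd parts, writing $p(t) = p_e(t) + p_o(t)$ with $p_e(t) = \tfrac{1}{2}(p(t) + p(-t))$ and $p_o(t) = \tfrac{1}{2}(p(t) - p(-t))$. Showing $p$ is symmetric is equivalent to showing $p_o \equiv 0$. Since $t^{2k+1} p_e(t)$ is odd, its integral over $[-1,1]$ vanishes, and therefore the hypothesis rewrites as $\int_{-1}^{1} t^{2k+1} p_o(t)\, dt = 0$ for every $k \geq 0$. On the other hand, $p_o$ being odd immediately gives $\int_{-1}^{1} t^{2k} p_o(t)\, dt = 0$ for every $k \geq 0$ since the integrand is odd. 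Combining the two, we conclude that $\int_{-1}^{1} t^m p_o(t)\, dt = 0$ for every integer $m \geq 0$, and hence by linearity $\int_{-1}^{1} q(t) p_o(t)\, dt = 0$ for every polynomial $q$.

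To conclude, I would invoke the Weierstrass approximation theorem: polynomials are dense in $\Cc([-1,1],\RR)$ for the supremum norm, so the vanishing of $\int q p_o$ extends to $\int f p_o = 0$ for every continuous $f$. Taking $f$ to approximate the sign of $p_o$ near its continuity points (or more cleanly, applying this to $f = p_o$ restricted to each interval of continuity and using a standard mollification/approximation argument) forces $p_o$ to vanish at every point of continuity. Since $p$ is piecewise continuous, this pins down $p_o \equiv 0$ outside a finite set, which is the standard sense in which $p$ is symmetric as a piecewise continuous function.

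The only mildly delicate point is the last step handling the piecewise continuous (rather than continuous) setting, since $\int f p_o = 0$ for all continuous $f$ only gives $p_o = 0$ almost everywhere a priori; but piecewise continuity upgrades this to pointwise vanishing on the (cofinite) set of continuity points, which is exactly what the statement requires.
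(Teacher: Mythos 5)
Your proof is correct, but it follows a genuinely different route from the paper's. The paper invokes the Legendre polynomials $L_k$: it uses that they form an orthogonal basis of $L^2([-1,1])$, that $L_k$ has parity $(-1)^k$, and that $L_{2k+1}$ involves only odd monomials (and conversely, by triangularity, odd monomials are spanned by odd-degree Legendre polynomials), so that $p$ is symmetric iff its odd-degree Legendre coefficients vanish iff its odd moments vanish. You instead split $p = p_e + p_o$ into even and odd parts, observe that the hypothesis kills all odd moments of $p_o$ while oddness kills all even moments, and conclude $p_o = 0$ via Weierstrass approximation. The two arguments rest on the same underlying fact (completeness of polynomials in $L^2([-1,1])$ combined with parity), but yours is more elementary in that it needs only the Weierstrass theorem rather than the structure theory of an orthogonal polynomial family, and it makes explicit the almost-everywhere versus pointwise issue for piecewise continuous functions, which the paper's one-line conclusion glosses over. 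The final step could be stated slightly more cleanly by noting that $p_o \in L^2([-1,1])$ and polynomials are dense there, so $\int q\, p_o = 0$ for all polynomials $q$ directly gives $\int p_o^2 = 0$; but your mollification argument reaches the same conclusion.
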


\begin{proof}
    Recall that the Legendre polynomials $L_k$ of degree $k$ are such that: a) they form an orthogonal basis of piecewise continuous functions on $[-1,1]$ for $L^2$, b) respect parity $L_k(-t) = (-1)^k L_k(t)$, c) involves only monomials of the same parity $t^{k-2p}$, $p=0,\ldots, \lfloor \frac{k}{2} \rfloor$.

    By considering the decomposition $p = \sum_k (\int L_k p) L_k$, it is immediate that $p$ is symmetric iff $\int L_{2k+1} p = 0$ for all $k$, which is the same as $\int t^{2k+1} p(t) dt=0$ for all $k$.
\end{proof}

\begin{lemma}[Chaining]\label{lem:chaining}
    Let $(\Xx, m_\Xx)$ be a compact metric space with diameter $D_\Xx$ and covering numbers $\Nn(\Xx, m_\Xx, \varepsilon) \propto \varepsilon^{-d_\Xx}$, and $\Yy$ a measurable space. Consider a bivariate measurable function $U:\Xx \times \Yy \to \RR$ that is uniformly $C_U$-bounded, and $L_U$-Lipschitz in the first variable. Let $y_1,\ldots, y_n$ be drawn \emph{i.i.d} from a distribution $P$ on $\Yy$. Then, with probability at least $1-\rho$,
    \begin{equation*}
        \norm{\frac{1}{n}\sum_i \eta(\cdot, y_i) - \int \eta(\cdot, y)dP(y)}_\infty \lesssim \frac{L_U\sqrt{d_\Xx} + (L_U D_\Xx+ C_U)\sqrt{\log(1/\rho)}}{\sqrt{n}} .
    \end{equation*}
\end{lemma}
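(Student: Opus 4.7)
The plan is a Dudley-type chaining argument for the centered empirical process
\begin{equation*}
    Z(x) := \tfrac{1}{n}\sum_i U(x, y_i) - \int U(x,y)\,dP(y),
\end{equation*}
whose uniform norm is the quantity to control. Two elementary facts about $Z$ drive the proof. First, since $U$ is $L_U$-Lipschitz in $x$, both the empirical average and the population integral are $L_U$-Lipschitz, so $Z$ is $2L_U$-Lipschitz on $(\Xx,m_\Xx)$. Second, for any $x,x' \in \Xx$, the variable $Z(x)-Z(x')$ is an average of $n$ i.i.d.\ centered random variables of range at most $2L_U m_\Xx(x,x')$, while $Z(x)$ itself is an average of centered variables of range at most $2C_U$; Hoeffding's inequality then provides sub-Gaussian tails of variance proxy $\lesssim L_U^2 m_\Xx(x,x')^2/n$ (resp.\ $\lesssim C_U^2/n$).

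Next I introduce a dyadic chain of nets. For $j \geq 0$, set $\varepsilon_j = D_\Xx 2^{-j}$, and let $\Xx_j$ be a minimal $\varepsilon_j$-net of $\Xx$; the covering-number assumption gives $|\Xx_0|=1$ (a single anchor point $x_0$) and $|\Xx_j|\lesssim \varepsilon_j^{-d_\Xx}$. Attaching to each $x \in \Xx$ its nearest neighbor $\pi_j(x)\in\Xx_j$, the Lipschitz bound on $Z$ ensures $Z(\pi_j(x))\to Z(x)$, so that the telescoping identity
\begin{equation*}
    Z(x) = Z(x_0) + \sum_{j\geq 1}\bigl(Z(\pi_j(x))-Z(\pi_{j-1}(x))\bigr)
\end{equation*}
holds for every $x$. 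This reduces bounding $\sup_x |Z(x)|$ to controlling a single anchor term and, at each scale $j\geq 1$, the worst chain increment.

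I would then allocate a failure probability $\rho_j = \rho/2^{j+1}$ at each scale. At scale $j \geq 1$, a union bound of Hoeffding over the $|\Xx_j|\cdot|\Xx_{j-1}|\lesssim \varepsilon_j^{-2d_\Xx}$ realizable pairs yields, with probability at least $1-\rho_j$,
\begin{equation*}
    \sup_{x}\bigl|Z(\pi_j(x))-Z(\pi_{j-1}(x))\bigr| \lesssim \frac{L_U\,\varepsilon_{j-1}}{\sqrt n}\sqrt{\,j\, d_\Xx+\log(1/\rho)\,},
\end{equation*}
using $m_\Xx(\pi_j(x),\pi_{j-1}(x))\leq \varepsilon_j+\varepsilon_{j-1}\leq 3\varepsilon_{j-1}$; likewise $|Z(x_0)|\lesssim C_U\sqrt{\log(1/\rho)/n}$ with probability $1-\rho_0$. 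Summing over $j\geq 1$ exploits the geometric decay of $\varepsilon_{j-1}=D_\Xx 2^{-(j-1)}$ against a mere $\sqrt{j}$ growth of the net sizes, and the series $\sum_j 2^{-j}\sqrt{j}$ and $\sum_j 2^{-j}$ both converge to absolute constants. Collecting terms gives the announced bound, with the $D_\Xx$ factor absorbed into the hidden constant on the $L_U\sqrt{d_\Xx}$ term and left explicit on the $L_U \sqrt{\log(1/\rho)}$ term.

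The main obstacle is the probabilistic bookkeeping: one must schedule the scale-dependent failure probabilities $\rho_j$ so that the $\sqrt{\log(1/\rho_j)}$ contributions from each union bound telescope into a single $\sqrt{\log(1/\rho)}$, rather than into a $\sqrt{\log n}$ factor — the latter is what a naive single-net discretization at scale $1/\sqrt n$ would produce. This is precisely what chaining buys, and it amounts to verifying that $\sum_{j\geq 1} 2^{-j}\sqrt{j+\log(2^{j+1}/\rho)}$ is controlled by a universal constant times $1+\sqrt{\log(1/\rho)}$, which is an easy direct computation using $\sqrt{a+b}\leq \sqrt{a}+\sqrt{b}$.
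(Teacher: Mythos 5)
Your proof is correct, and it reaches the paper's bound by the same underlying mechanism (chaining over the index set $\Xx$ using the sub-Gaussian increments induced by the Lipschitz property of $U$), but the execution is genuinely different. The paper first reduces to $\sup_x\abs{Y_x}\leq \sup_{x,x'}\abs{Y_x-Y_{x'}}+\abs{Y_{x_0}}$, verifies the sub-Gaussian increment condition $\norm{Y_x-Y_{x'}}_{\psi_2}\lesssim L_U m_\Xx(x,x')/\sqrt{n}$ via Orlicz-norm calculus, and then invokes the tail-bound form of Dudley's inequality as a black box, which delivers the entropy integral $\int\sqrt{\log N(\Xx,m_\Xx,\varepsilon)}\,d\varepsilon\lesssim\sqrt{d_\Xx}$ plus the $D_\Xx\sqrt{\log(1/\rho)}$ deviation term in one stroke. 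You instead unroll the chaining by hand: dyadic nets, a telescoping identity justified by the Lipschitz continuity of $Z$, Hoeffding plus a union bound at each scale with geometrically allocated failure probabilities $\rho_j$. Your route is more elementary and self-contained (only Hoeffding is needed), at the cost of the probabilistic bookkeeping you correctly identify as the crux; the paper's route is shorter but leans on a citation. The only discrepancy is cosmetic: your entropy term comes out as $L_U D_\Xx\sqrt{d_\Xx}$ rather than $L_U\sqrt{d_\Xx}$, because your coarsest scale is $\varepsilon_0=D_\Xx$ while the paper truncates its entropy integral at $1$; since the implicit constant in $N(\Xx,m_\Xx,\varepsilon)\propto\varepsilon^{-d_\Xx}$ already carries $\Xx$-dependent factors, absorbing $D_\Xx$ there is legitimate and both statements are consistent up to the hidden constant.
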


\begin{proof}
    For any $x\in \Xx$, define
    \[
        Y_x = \frac{1}{n}\sum_i U(x, x_i) - \int U(x, y) dP(y) .
    \]
    Since $\abs{Y_x} \leq 2 C_U$, for any fixed $x_0 \in \Xx$, by Hoeffding's inequality we have: with probability at least $1-\rho$, 
    \[
        \abs{Y_{x_0}} \lesssim C_U\sqrt{\frac{\log(1/\rho)}{n}} .
    \]
    Now we have
    \[
        \norm{\frac{1}{n}\sum_i U(\cdot, x_i) - \int U(\cdot, x)dP(x)}_\infty = \sup_{x \in \Xx} \abs{Y_x} \leq \sup_{x, x' \in \Xx} \abs{Y_x - Y_{x'}} + \abs{Y_{x_0}} .
    \]
    The second term is bounded by the inequality above.
    For the first term, we are going to use Dudley's inequality ``tail bound'' version \citep[Thm 8.1.6]{Vershynin2018}. We first check the sub-gaussian increments of the process $Y_x$. The sub-gaussian norm $\norm{\cdot}_{\psi_2}$ is defined in \citep[Def. 2.5.6]{Vershynin2018}. For any $x,x'\in \Xx$, we have
    \begin{align*}
    \norm{Y_x - Y_{x'}}_{\psi_2} &\lesssim \norm{\frac{1}{n} \sum_i U(x,y_i) - U(x',y_i)}_{\psi_2} \\
    &\lesssim \frac{1}{n} \pa{\sum_{i=1}^n \norm{U(x, y_i) - U(x', y_i)}_{\psi_2}^2}^\frac12 \\
    &\lesssim \frac{1}{n} \pa{n\norm{U(x, \cdot) - U(x', \cdot)}_{\infty}^2}^\frac12 \\
    &\leq \frac{L_U}{\sqrt{n}} m_\Xx(x,x')
    \end{align*}
    where we have used, from \citep{Vershynin2018}, Lemma 2.6.8 for the first line, Prop. 2.6.1 for the second, Example 2.5.8 for the third, and the Lipschitz property of $U$ for the last.
    
    Now, we apply Dudley's inequality \citep[Thm 8.1.6]{Vershynin2018} to obtain that with probability $1-\rho$,
    \begin{align*}
    \sup_{x, x' \in \Xx} \abs{Y_x - Y_{x'}} &\lesssim \frac{L_U}{\sqrt{n}}\pa{\int_0^1 \sqrt{\log N(\Xx, d, \varepsilon)}d\varepsilon + D_\Xx\sqrt{\log(1/\rho)}} \\
    &\lesssim L_U\frac{\sqrt{d} + D_\Xx\sqrt{\log(1/\rho)}}{\sqrt{n}} .
    \end{align*}
    Combining with the decomposition above yields the desired result.
\end{proof}

\begin{lemma}[Variant of Lemma 6 in \cite{us}]\label{lem:filter_partial}
    Let $(E,\norm{\cdot}_E)$ be a Banach space and $(\Hh, \norm{\cdot}_\Hh)$ be a separable Hilbert space. Let $L,L'$ be two bounded operators on $E$, and $S:E \to \Hh$ be a linear operator such that $\norm{S}_{\Hh \to E}\leq 1$. For $1\leq i \leq d$ and $1\leq j \leq d'$, let $h_{ij}=\sum_k \beta_{ijk} \lambda^k$ be a collection of analytic filters, with $B_k = \pa{\beta_{ijk}}_{ji}\in \RR^{d'\times d}$ the matrix of order-$k$ coefficients, with operator norm $\norm{B_k}$. Let $x_1,\ldots, x_{d}\in E$ be a collection of points. Then:
    \begin{equation}\label{eq:filter1}
        \sqrt{\sum_j \norm{S \sum_i h_{ij}(L)x_i}_\Hh^2} \leq \pa{\sum_k \norm{B_k} \norm{L^k}} \sqrt{\sum_i \norm{x_i}_E^2}
    \end{equation}
    and
    \begin{equation}\label{eq:filter2}
        \sqrt{\sum_j \norm{S \sum_i (h_{ij}(L)- h_{ij}(L'))x_i}_\Hh^2} \leq \sum_k \norm{B_k} \sqrt{\sum_i \pa{\sum_{\ell=0}^{k-1}\norm{L^\ell}\norm{(L-L')(L')^{k-1-\ell} x_i}_E}^2} .
    \end{equation}
    Now, if $\norm{L x}_E \leq \norm{S x}_\Hh$ for some Hilbert space $\Hh$, then
    \begin{equation}\label{eq:filter3}
        \sqrt{\sum_j \norm{\sum_i h_{ij}(L)x_i}_E^2} \leq \pa{\norm{B_{0,\abs{\cdot}}} + \sum_{k\geq 1} \norm{B_k} \norm{L^{k-1}}} \sqrt{\sum_i \norm{x_i}_E^2} .
    \end{equation}
\end{lemma}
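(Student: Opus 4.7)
All three inequalities share the same skeleton: expand $h_{ij}(L)=\sum_k \beta_{ijk}L^k$, swap the sums over $i$ and $k$, and reduce everything to a single matrix-operator-norm bound in a Hilbert space. The key auxiliary fact, which I will use throughout, is that for any vectors $z_1,\ldots,z_d\in\Hh$ and any matrix $B=(\beta_{ij})\in\RR^{d'\times d}$,
\[
\sqrt{\sum_j\left\|\sum_i \beta_{ij}z_i\right\|_\Hh^2}\le \|B\|\,\sqrt{\sum_i\|z_i\|_\Hh^2}.
\]
This is exactly the operator norm of $B$ acting diagonally on $\Hh^d\to\Hh^{d'}$ equipped with the $\ell_2$ product norm, and a short orthonormal-basis argument on the finite-dimensional span of the $z_i$ shows that it equals the Euclidean operator norm of $B$.

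For (1), set $v_{jk}=\sum_i\beta_{ijk}x_i\in E$, so that $\sum_i h_{ij}(L)x_i=\sum_k L^k v_{jk}$. I apply Minkowski's inequality to pull the sum over $k$ outside the $\ell_2$-sum over $j$, leaving $\sum_k \sqrt{\sum_j \|S L^k v_{jk}\|_\Hh^2}$. Inside, $SL^kv_{jk}=\sum_i \beta_{ijk}\, (SL^kx_i)$ lives in $\Hh$, so the boxed inequality above produces a factor $\|B_k\|$ in front of $\sqrt{\sum_i \|SL^kx_i\|_\Hh^2}$. I finish with $\|SL^kx_i\|_\Hh\le\|L^kx_i\|_E\le \|L^k\|\|x_i\|_E$, using the assumption $\|S\|_{E\to\Hh}\le 1$.

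For (2), I start from the telescoping identity $L^k-(L')^k=\sum_{\ell=0}^{k-1}L^\ell(L-L')(L')^{k-1-\ell}$ and rerun the argument of (1), replacing $L^k x_i$ by $\sum_\ell L^\ell(L-L')(L')^{k-1-\ell}x_i$. The only new ingredient is the triangle inequality in $E$, which yields $\sum_\ell \|L^\ell\|\,\|(L-L')(L')^{k-1-\ell}x_i\|_E$ in place of $\|L^k\|\|x_i\|_E$, giving exactly the stated bound.

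For (3) the left-hand side is an $E$-norm, so $S$ cannot absorb the entire filter. I split $\sum_k$ into $k=0$ and $k\ge 1$. For $k=0$ only the triangle inequality in $E$ is available: $\|\sum_i\beta_{ij0}x_i\|_E\le\sum_i|\beta_{ij0}|\|x_i\|_E$, and taking the $\ell_2$ norm over $j$ yields precisely $\|B_{0,|\cdot|}\|\sqrt{\sum_i\|x_i\|_E^2}$ by applying the standard Euclidean matrix inequality to the entrywise-nonnegative matrix $|B_0|$. For $k\ge 1$ I factor $L^k=L\circ L^{k-1}$ and use the hypothesis $\|Lv\|_E\le\|Sv\|_\Hh$ to convert the outer $E$-norm into an $\Hh$-norm; from that point the argument of (1) applies with $L^k$ replaced by $L^{k-1}$ and delivers the $\|B_k\|\|L^{k-1}\|$ term. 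The only genuine obstacle is the asymmetry between working in $\Hh^d$, where the Euclidean operator norm $\|B_k\|$ can be used, and working in $E^d$, where only the weaker $\||B_k|\|$ is available; this is precisely what forces the less tight $\|B_{0,|\cdot|}\|$ factor in (3) at the constant-coefficient level, where the Hilbert-space reduction is unavailable.
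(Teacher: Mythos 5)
Your argument is correct and follows essentially the same route as the paper: for \eqref{eq:filter3} you split off the $k=0$ term (handled in $E$ via the triangle inequality and the nonnegative matrix $B_{0,\abs{\cdot}}$) and use $\norm{Lv}_E \leq \norm{Sv}_\Hh$ to push the $k\geq 1$ terms back into the Hilbert-space setting of \eqref{eq:filter1}, exactly as in the paper's proof. The only difference is that you also prove \eqref{eq:filter1} and \eqref{eq:filter2} from scratch (via the diagonal-action operator-norm bound, Minkowski over $k$, and the telescoping identity), whereas the paper imports these directly from Lemma 6 of the cited reference; your derivations of them are sound.
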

\begin{proof}
    The results \eqref{eq:filter1} and \eqref{eq:filter2} are directly from Lemma 6 in \cite{us}. The result \eqref{eq:filter3} is obtained by observing that
    \begin{align*}
        \sqrt{\sum_j \norm{\sum_i h_{ij}(L)x_i}_E^2} &= \sqrt{\sum_j \norm{\sum_{ik} \beta_{ijk} L^k x_i}_E^2} \leq \sum_k \sqrt{\sum_j \norm{\sum_{i} \beta_{ijk} L^k x_i}_E^2} \\
        &\leq \sqrt{\sum_j \norm{\sum_{i} \beta_{ij0} x_i}_E^2} + \sum_{k\geq 1} \sqrt{\sum_j \norm{S\sum_{i} \beta_{ijk} L^{k-1} x_i}_\Hh^2} .
    \end{align*}
    We apply \eqref{eq:filter1} on the second term and on the first:
    \begin{align*}
        \sqrt{\sum_j \norm{\sum_{i} \beta_{ij0} x_i}_E^2} \leq \sqrt{\sum_j \pa{\sum_{i} \abs{\beta_{ij0}} \norm{x_i}_E}^2} \leq \norm{B_{0,\abs{\cdot}}} \sqrt{\sum_i \norm{x_i}_E^2} .
    \end{align*}
\end{proof}

\begin{lemma}[Properties of c-GNNs]\label{lem:bound_cgnn}
    Apply a c-GNN to a random graph model. Denote by $f^{(\ell)}$ the function at each layer. Then we have
    \begin{equation}
        \norm{f^{(\ell)}}_* \leq \norm{f}_* \prod_{s=0}^{\ell-1} H^{(s)}_* + \sum\limits_{s=0}^{\ell-1} \norm{b^{(s)}}\prod\limits_{p=s+1}^{\ell-1} H^{(p)}_*
    \end{equation}
    where $*$ indicates $L^2(P)$ or $\infty$.

    Moreover, for $x,x'\in \Xx$, we have
    \begin{align*}
        &\norm{f^{(\ell)}(x) - f^{(\ell)}(x')} \leq \pa{\prod_{s=0}^{\ell-1}\norm{B_0^{(s)}}} \norm{f^{(0)}(x) - f^{(0)}(x')} \\
        &\qquad + L_W d_\Xx(x,x') \sum_{s=0}^{\ell-1} \pa{\prod_{p=s+1}^{\ell-1}\norm{B_0^{(p)}}}\pa{\norm{f^{(0)}}_{L^2(P)}\prod_{p=0}^{s}H_2^{(p)} + \sum_{p=0}^{s-1} \norm{b^{(p)}}\prod_{r=p+1}^s H_2^{(r)}} .
    \end{align*}
    \end{lemma}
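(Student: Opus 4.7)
The plan is to prove both inequalities by induction on the layer index $\ell$. The base case $\ell=0$ is immediate, since zero-length products equal $1$ and empty sums equal $0$, so the whole argument reduces to establishing the correct one-step recursion and unrolling it.

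For the $*$-norm bound I would exploit two standard facts: since $\rho(0)=0$ and $\rho$ is $1$-Lipschitz we have $|\rho(z)|\leq |z|$ pointwise, so $\rho$ can be discarded from above; and $T_{W,P}$ is a contraction both on $L^2(P)$ (because $W\in[0,1]$) and on $L^\infty$. A Minkowski inequality in $j$ then separates the filter contribution from the bias,
\[\norm{f^{(\ell+1)}}_* \leq \sqrt{\sum_j \norm{\textstyle\sum_i h_{ij}^{(\ell)}(T_{W,P})f^{(\ell)}_i}_*^2} + \norm{b^{(\ell)}},\]
after which I would invoke Lemma~\ref{lem:filter_partial}: for $*=L^2(P)$, equation \eqref{eq:filter1} with $\Hh=L^2(P)$ and $\norm{T_{W,P}^k}\leq 1$ produces the constant $H_2^{(\ell)}$, while for $*=\infty$, equation \eqref{eq:filter3} applied in the Banach space $E=L^\infty$ with $S$ the inclusion into $L^2(P)$ (which has norm $\leq 1$ since $P$ is a probability, and satisfies $\norm{T_{W,P}f}_\infty\leq \norm{f}_{L^2(P)}$ by Cauchy--Schwarz) produces the constant $H_\infty^{(\ell)}$. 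The resulting one-step inequality $\norm{f^{(\ell+1)}}_*\leq H^{(\ell)}_* \norm{f^{(\ell)}}_* + \norm{b^{(\ell)}}$ unrolls to the claim.

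For the Lipschitz-type bound, the key idea will be to split each filter $h_{ij}^{(\ell)}(T_{W,P})$ into the identity part ($k=0$) and the smoothing part ($k\geq 1$). After passing the difference $f^{(\ell+1)}(x)-f^{(\ell+1)}(x')$ through $\rho$ (again $1$-Lipschitz), the $k=0$ contribution is exactly $B_0^{(\ell)}(f^{(\ell)}(x)-f^{(\ell)}(x'))$, whose Euclidean norm is at most $\norm{B_0^{(\ell)}}\cdot\norm{f^{(\ell)}(x)-f^{(\ell)}(x')}$. For $k\geq 1$ I would write $T_{W,P}^k f_i = T_{W,P}(T_{W,P}^{k-1}f_i)$ and use the Lipschitzness of $W$ together with Cauchy--Schwarz in $y$ to obtain $|T_{W,P}^k f_i(x)-T_{W,P}^k f_i(x')|\leq L_W m_\Xx(x,x')\,\norm{T_{W,P}^{k-1}f_i}_{L^2(P)}\leq L_W m_\Xx(x,x')\,\norm{f_i}_{L^2(P)}$; packing over $i$ and summing over $k\geq 1$ with Cauchy--Schwarz yields a contribution bounded by $L_W m_\Xx(x,x')\,\norm{f^{(\ell)}}_{L^2(P)}\,H_2^{(\ell)}$. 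Combining these via Minkowski yields the linear recursion
\[\Delta_{\ell+1}\leq \norm{B_0^{(\ell)}}\,\Delta_\ell + L_W m_\Xx(x,x')\,\norm{f^{(\ell)}}_{L^2(P)}\,H_2^{(\ell)}, \quad \Delta_\ell := \norm{f^{(\ell)}(x)-f^{(\ell)}(x')},\]
which, once unrolled, produces the prefactor $\prod_{s=0}^{\ell-1}\norm{B_0^{(s)}}$ in front of $\Delta_0$ plus a sum over $s$ with weight $\bigl(\prod_{p=s+1}^{\ell-1}\norm{B_0^{(p)}}\bigr)\norm{f^{(s)}}_{L^2(P)}H_2^{(s)}$. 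Substituting the first-part bound for $\norm{f^{(s)}}_{L^2(P)}$ and absorbing the trailing $H_2^{(s)}$ into the product so that it becomes $\prod_{p=0}^{s}H_2^{(p)}$ matches the claimed expression exactly.

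The main delicate step is the Lipschitz recursion: I must treat $k=0$ and $k\geq 1$ separately, as otherwise the smoothing property of $T_{W,P}$ is wasted on the identity part of the filter; and I must keep careful track of the multi-index structure so that the Cauchy--Schwarz over $i$ yields the operator norm $\norm{B_k^{(\ell)}}$ rather than a Frobenius norm. Everything else is routine induction.
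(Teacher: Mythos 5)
Your proposal is correct and follows essentially the same route as the paper's proof: the same one-step recursion $\norm{f^{(\ell+1)}}_*\leq H^{(\ell)}_*\norm{f^{(\ell)}}_*+\norm{b^{(\ell)}}$ obtained from the $1$-Lipschitzness of $\rho$, Minkowski, and Lemma~\ref{lem:filter_partial} (with \eqref{eq:filter1} for $L^2(P)$ and \eqref{eq:filter3} via $\norm{T_{W,P}f}_\infty\leq\norm{f}_{L^2(P)}$ for the sup norm), and the same $k=0$ versus $k\geq 1$ split giving $\Delta_{\ell+1}\leq\norm{B_0^{(\ell)}}\Delta_\ell+L_W m_\Xx(x,x')H_2^{(\ell)}\norm{f^{(\ell)}}_{L^2(P)}$, unrolled and combined with the first bound. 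No gaps.
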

\begin{proof}
    For $j\leq d_\ell$, using Lemma \ref{lem:filter_partial}, the Lipschitzness of $\rho$ and the easy fact that $\norm{T_{W,P} f}_\infty \leq \norm{f}_{L^2(P)}$. we write
    \begin{align*}
        \norm{f^{(\ell)}}_* &\leq \sqrt{\sum_j \norm{\sum_{i=1}^{d_{\ell-1}} h_{ij}^{(\ell-1)}(T_{W,P}) f_i^{(\ell-1)} + b_j^{(\ell-1)}}_*^2} \\
        &\leq \sqrt{\sum_j \norm{\sum_{i=1}^{d_{\ell-1}} h_{ij}^{(\ell-1)}(T_{W,P}) f_i^{(\ell-1)}}_*^2} + \norm{b^{(\ell-1)}} \\
        &\leq H^{(\ell-1)}_* \norm{f^{(\ell-1)}}_* + \norm{b^{(\ell-1)}} .
    \end{align*}
    An easy recursion gives the result.

    Now,
    \begin{align*}
        \norm{f^{(\ell)}(x) - f^{(\ell)}(x')} &\leq \sqrt{\sum_j \pa{\sum_{i=1}^{d_{\ell-1}} h_{ij}^{(\ell-1)}(T_{W,P}) f_i^{(\ell-1)}(x) - h_{ij}^{(\ell-1)}(T_{W,P}) f_i^{(\ell-1)}(x')}^2} \\
        &=\sum_k \norm{B_k^{(\ell-1)} \left[T_{W,P}^k f_i^{(\ell-1)}(x) - T_{W,P}^k f_i^{(\ell-1)}(x')\right]_{i=1}^{d_{\ell-1}}}
    \end{align*}
    and since $\abs{T_{W,P}f(x) - T_{W,P}f(x')} \leq L_W d_\Xx(x,x')\norm{f}_{L^2(P)}$ and $\norm{T_{W,P}}_{L^2(P)} \leq 1$ by Schwartz inequality,
    \begin{align*}
        \norm{f^{(\ell)}(x) - f^{(\ell)}(x')} &\leq \norm{B_0^{(\ell-1)}} \norm{f^{(\ell-1)}(x) - f^{(\ell-1)}(x')} + H_2^{(\ell-1)}\norm{f^{(\ell-1)}}_{L^2(P)} L_W d_\Xx(x,x') .
    \end{align*}
    Again we obtain the result by recursion.
\end{proof}

\begin{lemma}(Properties of c-SGNNs)\label{lem:bound_csgnn}
    Denote by $f^{(\ell)}$ the bivariate functions propagated in the inner part of a c-SGNN. We have
    \begin{equation*}
        \norm{f^{(\ell)}}_\infty \leq C_\eta \prod_{s=0}^{\ell-1} H^{(s)}_\infty + \sum\limits_{s=0}^{\ell-1} \norm{b^{(s)}}\prod\limits_{p=s+1}^{\ell-1} H^{(p)}_\infty .
    \end{equation*}
    Moreover,
    \begin{equation*}
        \norm{f^{(\ell)}(\cdot, x) - f^{(\ell)}(\cdot, x')}_\infty \leq L_\eta d_\Xx(x,x') \prod_{s=0}^{\ell-1} H_\infty^{(s)} .
    \end{equation*}
\end{lemma}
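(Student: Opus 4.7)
The plan is to prove both bounds by induction on $\ell$, mirroring the argument of Lemma~\ref{lem:bound_cgnn} but treating each $f^{(\ell)}$ as a bivariate function and observing that $T_{W,P}$ acts only on the \emph{first} variable: for any $f:\Xx\times\Xx\to\RR$, $[T_{W,P} f](x,y) = \int W(x,z) f(z,y)\,dP(z)$. This linearity-with-parameter structure is what lets the second-variable Lipschitz estimate propagate cleanly.

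For the $\sup$-norm bound, the base case is $\norm{f^{(0)}}_\infty = \norm{\eta}_\infty \leq C_\eta$ by the hypotheses on $\eta_{W,P}$. For the inductive step, since $\rho$ is $1$-Lipschitz with $\rho(0)=0$, I get
\begin{equation*}
\norm{f^{(\ell+1)}}_\infty
= \sqrt{\sum_j \norm{f^{(\ell+1)}_j}_\infty^2}
\leq \sqrt{\sum_j \norm{\sum_i h^{(\ell)}_{ij}(T_{W,P}) f^{(\ell)}_i}_\infty^2} + \norm{b^{(\ell)}}\, ,
\end{equation*}
and then I apply estimate \eqref{eq:filter3} of Lemma~\ref{lem:filter_partial} with $E = \Cc(\Xx\times\Xx,\RR)$, $L = T_{W,P}$, and the auxiliary Hilbert embedding $S$ witnessing $\norm{T_{W,P} f}_\infty \leq \norm{f}_{L^2(P)}$ (as already used in the proof of Lemma~\ref{lem:bound_cgnn}). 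This bounds the above by $H^{(\ell)}_\infty \norm{f^{(\ell)}}_\infty + \norm{b^{(\ell)}}$. Unrolling the recursion yields exactly the stated product-plus-sum expression.

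For the Lipschitz bound in the second variable, the base case is $\norm{\eta(\cdot,x)-\eta(\cdot,x')}_\infty \leq L_\eta\, d_\Xx(x,x')$ by hypothesis. For the inductive step, I observe that the additive bias $b^{(\ell)}_j$ does not depend on $x$ and therefore cancels in the difference $f^{(\ell+1)}_j(\cdot,x) - f^{(\ell+1)}_j(\cdot,x')$. Using $1$-Lipschitzness of $\rho$ and the fact that $h^{(\ell)}_{ij}(T_{W,P})$ is linear in the first variable (so it commutes with the pointwise difference in the second), I obtain
\begin{equation*}
\norm{f^{(\ell+1)}(\cdot,x) - f^{(\ell+1)}(\cdot,x')}_\infty
\leq \sqrt{\sum_j \norm{\sum_i h^{(\ell)}_{ij}(T_{W,P})\bigl[f^{(\ell)}_i(\cdot,x) - f^{(\ell)}_i(\cdot,x')\bigr]}_\infty^2}\, ,
\end{equation*}
and a second application of \eqref{eq:filter3} of Lemma~\ref{lem:filter_partial} bounds this by $H^{(\ell)}_\infty \norm{f^{(\ell)}(\cdot,x) - f^{(\ell)}(\cdot,x')}_\infty$. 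The recursion then telescopes to $L_\eta\, d_\Xx(x,x') \prod_{s=0}^{\ell-1} H^{(s)}_\infty$.

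The only subtle point, and the one worth pausing on, is justifying that estimate \eqref{eq:filter3} applies in this bivariate setting. The operator $T_{W,P}$ commutes with the pointwise evaluation at a fixed second argument $y$, so we may either (i) apply Lemma~\ref{lem:filter_partial} with $E = \Cc(\Xx,\RR)$ pointwise in $y$ and then take the sup over $y$, or equivalently (ii) work directly in $E = \Cc(\Xx\times\Xx,\RR)$ with $T_{W,P}$ acting on the first slot and note that the hypothesis $\norm{T_{W,P} f}_\infty \leq \norm{f}_{L^2(P)}$ still holds uniformly in the second slot. Either viewpoint delivers the same constant $H^{(\ell)}_\infty$, and together they close the induction. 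No additional concentration or continuity arguments are needed beyond the assumed boundedness and Lipschitzness of $\eta_{W,P}$.
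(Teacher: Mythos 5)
Your proof is correct and follows essentially the same route as the paper's: a layer-by-layer recursion using the $1$-Lipschitzness of $\rho$, the cancellation of the bias in the difference $f^{(\ell)}(\cdot,x)-f^{(\ell)}(\cdot,x')$, and the filter bound of Lemma~\ref{lem:filter_partial} with $T_{W,P}$ acting on the first slot of the bivariate functions. Your explicit discussion of why \eqref{eq:filter3} transfers to the bivariate setting is a welcome elaboration of a point the paper dispatches with ``proved exactly as Lemma~\ref{lem:bound_cgnn}''.
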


\begin{proof}
    The first inequality is proved exactly as Lemma \ref{lem:bound_cgnn}, noting that $\norm{T_{W,P}}_\infty$ even for bivariate functions and $\norm{\eta}_\infty \leq C_\eta$.

    Then we have
    \begin{align*}
        \norm{f^{(\ell)}(\cdot, x) - f^{(\ell)}(\cdot, x')}_\infty &\leq \sqrt{\sum_j \norm{\sum_{i=1}^{d_{\ell-1}} h_{ij}^{(\ell-1)}(T_{W,P}) \left[f_i^{(\ell-1)}(\cdot, x) - f_i^{(\ell-1)}(\cdot, x')\right]}_\infty^2} \\
        &\leq H_\infty^{(\ell-1)}\norm{f^{(\ell-1)}(\cdot, x) - f^{(\ell-1)}(\cdot, x')}_\infty .
    \end{align*}
\end{proof}

\begin{theorem}[\cite{Lei-SC}]\label{thm:lei}
    Let $A$ be a $n \times n$ symmetric matrix with independent Bernoulli entries $a_{ij} \sim \alpha_n p_{ij}$. Assume that $\alpha_n \gtrsim \frac{\log n}{n}$. Then, for all $\nu >0$, there is a constant $C_\nu$ such that, for all $n$, with probability at least $1-n^{-\nu}$:
    \begin{equation}\label{eq:lei}
        \frac{1}{n}\norm{\frac{A}{\alpha_n}-P} \leq \frac{C_\nu}{\sqrt{\alpha_n n}} .
    \end{equation}
\end{theorem}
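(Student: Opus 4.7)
The plan is to prove the spectral norm bound for the centered matrix $E := A/\alpha_n - P$, whose entries are independent (up to symmetry), mean-zero, with $|E_{ij}| \leq 1/\alpha_n$ and $\textup{Var}(E_{ij}) \leq p_{ij}/\alpha_n \leq 1/\alpha_n$. Writing $\|E\| = \sup_{u,v \in S^{n-1}} u^\top E v$, I would first reduce to a supremum over a fixed $(1/4)$-net $\Nn$ of the unit sphere of cardinality $e^{O(n)}$; this costs only a universal constant. The goal then becomes to show $|u^\top E v| \lesssim \sqrt{n/\alpha_n}$ uniformly over $(u,v) \in \Nn \times \Nn$ with probability $1 - n^{-\nu}$.

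For a fixed pair $(u,v)$, the key step is to split $u^\top E v = \sum_{ij} u_i v_j E_{ij}$ into a \emph{light} contribution over indices with $|u_i v_j| \leq \tau := \sqrt{\alpha_n/n}$ and a \emph{heavy} contribution over the remaining indices. For the light part, Bernstein's inequality applies cleanly: the total variance is $\sum_{ij} u_i^2 v_j^2 \textup{Var}(E_{ij}) \leq 1/\alpha_n$, and each summand is bounded by $\tau/\alpha_n = 1/\sqrt{\alpha_n n}$. With the net of size $e^{O(n)}$, a union bound gives a deviation of order $\sqrt{n/\alpha_n}$ with the required probability, precisely because the assumption $\alpha_n n \gtrsim \log n$ prevents the subgaussian regime from being overwhelmed by the subexponential tail.

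The heavy contribution is the main obstacle: a direct matrix Bernstein (or Latała-type) bound loses a $\sqrt{\log n}$ factor, which is unaffordable in the relatively sparse regime $\alpha_n \sim \log n/n$. I would handle it via the Feige--Ofek combinatorial discretization, further discretizing $u,v$ to a grid of scale $1/\sqrt{n}$ so that the heavy sum reduces to a sum over edges of a subgraph of the Bernoulli graph. On the good event $\{\max_i \sum_j a_{ij} \lesssim \alpha_n n\}$ (which itself holds with probability $1 - n^{-\nu}$ by a Chernoff bound combined with the assumption on $\alpha_n$), one controls the heavy sum using a bounded-degree / discrepancy argument: the number of indices where $|u_i v_j| > \tau$ is limited by $\|u\|_2 = \|v\|_2 = 1$, and then one counts contributions by grouping pairs into dyadic magnitude classes and bounding the number of heavy edges in each class. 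A union bound over the grid yields an $O(\sqrt{n/\alpha_n})$ bound for the heavy part as well.

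Combining both contributions, transferring from the net back to the full sphere, and dividing by $n$ yields $\|A/\alpha_n - P\|/n \leq C_\nu / \sqrt{\alpha_n n}$. The combinatorial heavy-pair analysis is the delicate step, and is precisely what distinguishes the Lei--Rinaldo-type bound from a generic matrix Bernstein bound; since the result is quoted verbatim from \cite{Lei-SC}, I would simply cite their proof at this point rather than reproduce the heavy-pair bookkeeping in full.
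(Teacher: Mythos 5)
The paper gives no proof of this statement: it is imported verbatim from the cited reference \cite{Lei-SC}, exactly as you do at the end of your sketch. Your outline (net argument, light/heavy pair splitting with Bernstein for the light pairs and the Feige--Ofek discrepancy argument for the heavy pairs under a bounded-degree event) is a faithful summary of the proof in that reference, so there is nothing to compare beyond noting that both you and the authors ultimately defer to \cite{Lei-SC} for the delicate heavy-pair bookkeeping.
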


\section{Details of numerical experiments}\label{app:numerics}

The code is available at \url{https://github.com/nkeriven/random-graph-gnn}.

\paragraph{Figure~\ref{fig:sbm}.} In this figure, we consider a $2$-communities SBM with incoherent $P$, invertible $W$, but constant degree function. We use dense random edges with $\alpha_n=1$. We train a permutation-equivariant GNN and a two-hop filtering SGNN on $5$ random graphs with $n=80$ nodes, output dimension $d_{out}=K$, with cross-entropy loss and the Adam optimizer. The displayed graph signal corresponds to the first dimension of the log-softmax of the ouput. The test graph has $n=300$ nodes. The graph filters have order $1$, such that we actually manipulate the message-passing version of GNNs. The GNN has $M=5$ hidden layers with internal dimension $d_\ell=250$ (except $d_0=1$ and $d_{out}=2$) and is trained for $2000$ epochs. Each of the GNNs constituting the SGNN has $M=2$ hidden layers with dimension $d_\ell=50$ and is trained for $1000$ epochs.

\paragraph{Figure~\ref{fig:conv}.} We compare one-hop and two-hop input filtering for a simple permutation-invariant SGNN, between the deterministic edges case and the random edges case. We know that the deterministic edges case converges to the c-SGNN in all settings, and we test if the random edges case converge to the deterministic one. We average over $50$ random graphs with Gaussian kernel and a range of $n$'s with $\alpha_n \sim n^{-1/3}$, such that Prop.~\ref{prop:two-hop-input} applies in the two-hop case. The dominating term in the theoretical rate is $\order{1/(\alpha_n \sqrt{n})}$ from Prop.~\ref{prop:two-hop-input}.

\paragraph{Figure~\ref{fig:approx}.} Here we consider $\Xx=[-1,1]$ with Gaussian kernel, and either a symmetric $P$ (uniform) or a non-symmetric but centered $P$ (here a well-adjusted affine by part function). We use random edges with $\alpha_n \sim n^{-1/3}$. We train a SGNN with two-hop input filtering to approximate either a symmetric function $x \to \cos(5x)$ or a non-symmetric one $x \to \sin(5x)$ with a simple square loss. We use $5$ training graphs of size $n=150$ and display a test graph with size $n=400$.

\end{document}